\def\eqref#1{equation~\ref{#1}}
\def\1{\bm{1}}
\DeclareMathAlphabet{\mathsfit}{\encodingdefault}{\sfdefault}{m}{sl}
\SetMathAlphabet{\mathsfit}{bold}{\encodingdefault}{\sfdefault}{bx}{n}
\newcommand{\KL}{D_{\mathrm{KL}}}
\DeclareMathOperator*{\argmin}{arg\,min}
\newcommand{\se}[1]{\textcolor{magenta}{}}
\def\eqref#1{(\ref{#1})}
\newcommand{\mutty}[1]{{\color[HTML]{557EA7}{}}}
\newcommand{\revised}[1]{{\color[HTML]{000000}{#1}}}
\newcommand{\rebuttal}[1]{{\color[HTML]{000000}{#1}}}
\newcommand{\tmlrrebuttal}[1]{{\color[HTML]{000000}{#1}}}
\definecolor{c_yuhta}{rgb}{0.831,0.184,0.494}
\title{G2D2: Gradient-Guided Discrete Diffusion \\for Inverse Problem Solving}
\author{Naoki Murata$^{1}$, Chieh-Hsin Lai$^{1}$, Yuhta Takida$^{1}$, Toshimitsu Uesaka$^{1}$, Bac Nguyen$^{1}$, \\\textbf{Stefano Ermon}$^{2}$\textbf{,} \textbf{Yuki Mitsufuji}$^{1, 3}$\\
$^{1}$Sony AI, $^{2}$Stanford University, $^{3}$Sony Group Corporation\\
\texttt{naoki.murata@sony.com}
}
\begin{document}

\maketitle

\begin{abstract}
Recent literature has effectively leveraged diffusion models trained on continuous variables as priors for solving inverse problems. Notably, discrete diffusion models with discrete latent codes have shown strong performance, particularly in modalities suited for discrete compressed representations, such as image and motion generation. However, their discrete and non-differentiable nature has limited their application to inverse problems formulated in continuous spaces. This paper presents a novel method for addressing linear inverse problems by leveraging generative models based on discrete diffusion as priors. We overcome these limitations by approximating the true posterior distribution with a variational distribution constructed from categorical distributions and continuous relaxation techniques. Furthermore, we employ a star-shaped noise process to mitigate the drawbacks of traditional discrete diffusion models with absorbing states, demonstrating that our method performs comparably to continuous diffusion techniques with a lower GPU memory consumption. Our code is available at \url{https://github.com/sony/g2d2}.
\end{abstract}

\section{Introduction}

Diffusion models have gained significant attention as deep generative models, achieving remarkable success in image~\citep{SohlDickstein2015DeepUL, ho2020denoising, song2021scorebased, dhariwal2021diffusion, esser2024scaling}, audio~\citep{liu2023audioldm, chen2024musicldm}, and video generation~\citep{ho2022imagen, ho2022video}. 
These models operate by iteratively corrupting data and then learning to reverse the corruption process, ultimately generating high-quality samples from noise. In parallel with continuous diffusion models, discrete diffusion models have emerged as a compelling alternative. These models have gained traction by demonstrating notable results not only in image~\citep{gu2022vector}, audio~\citep{yang2023diffsound}, and text generation~\citep{austin2021structured, lou2023discrete} but also in more specialized areas such as motion data~\citep{lou2023diversemotion, pinyoanuntapong2024mmm}, protein synthesis~\citep{gruver2024protein}, and graph generation~\citep{vignac2023digress}. 

Building on these advancements, researchers have made significant progress in expanding the application of diffusion models. They have explored using diffusion models, trained either directly in the pixel space or on latent representations derived from variational autoencoders (VAEs), to address inverse problems~\citep{kawar2022denoising, chung2023diffusion, wang2022zero} and carry out various conditional generation tasks~\citep{yu2023freedom, bansal2024universal, he2024manifold} without the need for additional training. These efforts aim to use the powerful generative capabilities of diffusion models to tackle intricate problems and generate conditional outputs, all while preserving the models' original trained parameters.

The research on applying diffusion models to inverse problems and conditional generation has been primarily restricted to diffusion models trained in continuous spaces, and methods using pre-trained discrete diffusion models as priors remain limited~\citep{gruver2024protein, chen2024guided, li2024derivative}. 
One of the main reasons is that the inherent nature of the generation process in discrete diffusion models involves non-differentiable operations, posing a challenge for their application to inverse problems formulated in continuous spaces. Therefore, controlling discrete diffusion models often necessitates an additional trained network~\citep{gruver2024protein, nisonoff2024unlocking, klarner2024context, vignac2023digress}. Additionally, training-free methods have been confined to relatively low-dimensional data~\citep{chen2024guided} or to specific tasks such as image inpainting~\citep{gu2022vector}. \revised{Recent work by~\citet{singhal2025general} proposes Feynman--Kac steering, a general inference-time framework for steering diffusion models with reward functions. While this approach does not require differentiating through reward functions and thus can be applied to discrete diffusion models, the reward functions they employ, such as human preference scores, impose weaker constraints on the generation process compared to typical inverse problems like deblurring, potentially significantly compromising consistency with measurements.}

Despite these limitations in applying discrete diffusion models to inverse problems, their potential advantages in representing complex data distributions and generating high-fidelity samples motivate their exploration as priors. 
When examining existing approaches for continuous diffusion models, conventional approaches in continuous settings typically leverage gradient-based adjustments of the generation trajectory. These methods aim to refine intermediate latents by computing gradients of a likelihood loss function, ensuring they align well with the measurement equation or guidance target. This has been demonstrated in \citet{chung2023diffusion} and \citet{yu2023freedom}. However, directly extending this gradient-based control to discrete diffusion models is challenging due to their inherently non-differentiable operations.

\begin{figure}[tbp]
    \centering
    \includegraphics[width=0.90\linewidth]{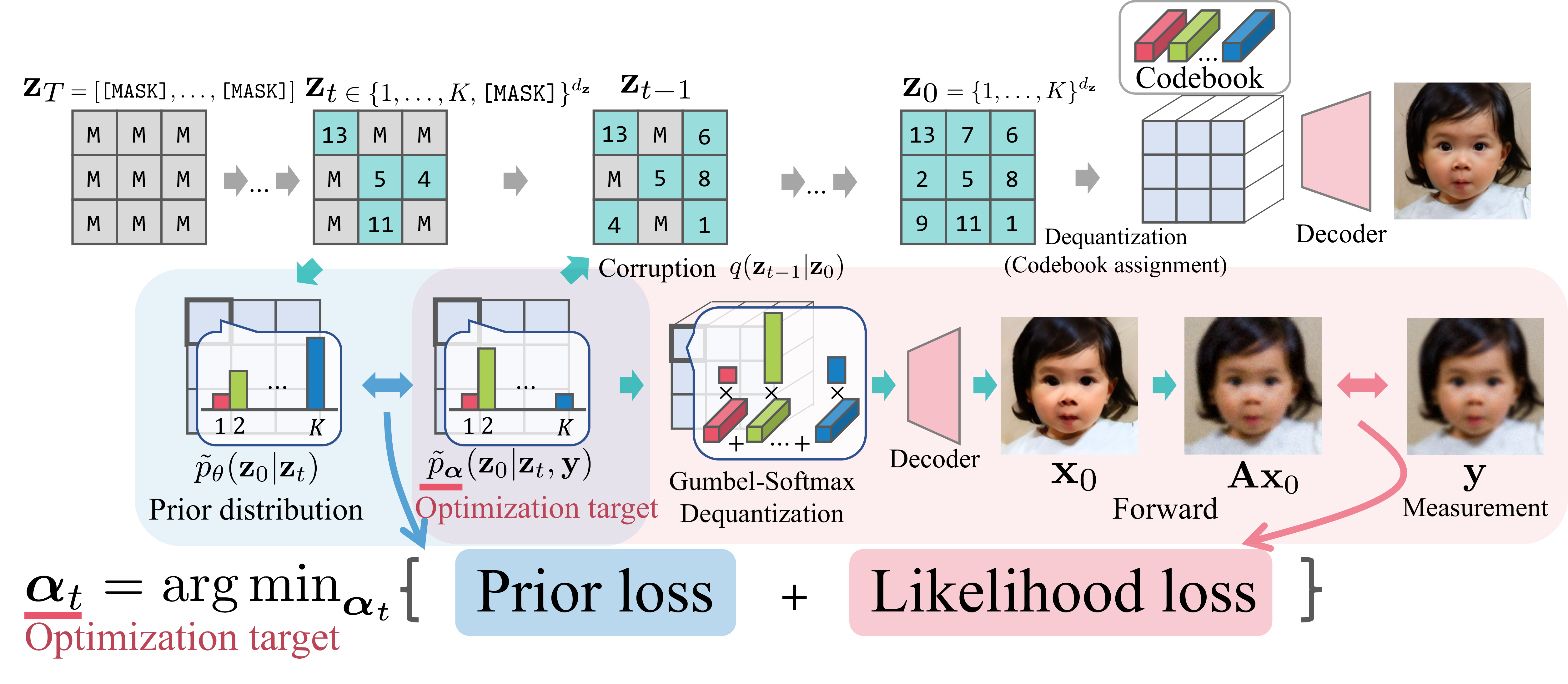}
    \vspace{-6pt}
    \caption{\textbf{Illustration of G2D2}. At each time step $t$, variational categorical distribution $\tilde{p}_{\bm{\alpha}}$ is optimized with respect to the sum of prior loss and likelihood loss, followed by sampling $\mathbf{z}_{t-1}$. Both terms are continuously differentiable, enabling continuous optimization.
    }
    \vspace{-12pt}
    \label{fig:G2D2_Flow}
\end{figure}

To address the fundamental challenge of non-differentiability in discrete diffusion models, we propose Gradient-Guided Discrete Diffusion (\textbf{G2D2}), an inverse problem solving method that uses a discrete diffusion model as a prior. Our focus is on solving inverse problems using a generative model based on a discrete diffusion model, specifically designed for discrete latent variables such as those found in vector-quantized VAE (VQ-VAE) models. G2D2 bridges the gap between continuous and discrete domains by using a continuous relaxation technique to optimize the parameters of a variational distribution.

In addition to the differentiability issue, discrete diffusion models present another challenge when used for inverse problems. These models often adopt ``mask-absorbing'' noise processes as their data corruption process due to generation quality. However, this mask-absorbing approach has a significant drawback in inverse-problem solving. For example, in discrete diffusion models for images, while a substantial portion of the image structure is determined in the initial stages of generation (i.e., when only a few tokens are determined), the mask-absorbing type does not allow transitions from an unmasked state to either a masked state or other unmasked states. Our experiments demonstrate that this restriction imposes a significant limitation on performance in solving inverse problems.

To overcome this structural limitation of mask-absorbing processes, we incorporate the star-shaped noise process previously proposed in the context of continuous diffusion models~\citep{okhotin2024star, zhang2024improving}. This process removes the dependency between consecutive sampling steps, expanding the space that can be explored during generation. 
This process was originally proposed to enhance the performance of diffusion models~\citep{okhotin2024star}, but was later introduced as a decoupled noise annealing process in the context of inverse problems using continuous diffusion models, demonstrating its effectiveness for continuous diffusion models~\citep{zhang2024improving}.
In this study, we not only demonstrate that this process can be effectively applied to discrete diffusion models, but also discover that it uniquely addresses potential issues inherent in mask-absorbing-type discrete diffusion processes, specifically the inability to correct errors introduced in the early stages during later inference steps.

\revised{
To validate our proposed approach, we conduct comprehensive experiments comparing G2D2 to current methods using standard benchmark datasets. Our results demonstrate that G2D2 achieves comparable performance to continuous counterparts (within $0.02$--$0.05$ LPIPS points) while reducing GPU memory usage by up to $77\%$ ($4.7$GiB vs $20.9$GiB for PSLD). We also explore the application of a discrete prior-based motion-data-generation model to solve an inverse problem, specifically path-conditioned generation, without requiring further training. The results of our study indicate that G2D2 shows promise in tackling various inverse problems by leveraging pre-trained discrete diffusion models.}

\section{Preliminaries}
\subsection{Discrete diffusion models for image generation}
\label{sec:DDM}
We first provide a brief overview of VQ-Diffusion~\citep{gu2022vector, tang2022improved}, an image-generation model based on discrete diffusion processes. VQ-Diffusion generates images in a two-step process. It first produces discrete latent representations $\mathbf{z}_{0}$ using a discrete diffusion model trained on representations obtained from a pre-trained VQ-VAE model~\citep{van2017neural}. It then transforms these representations into the continuous image space using a decoder. Each element of $\mathbf{z}_{0}\in\{1,\ldots,K\}^{d_{\mathbf{z}}}$ corresponds to one of the embedding vectors from the codebook, denoted as $\mathbf{B}:=\{\mathbf{b}_{1}, \ldots, \mathbf{b}_{K}\}, \mathbf{b}_{k}\in\mathbb{R}^{d_{\mathbf{b}}}$.  During decoding, a variable $\mathbf{Z}\in\mathbf{B}^{d_{\mathbf{z}}}$ is constructed through codebook assignment, where $\left(\mathbf{Z}\right)_{i} = \mathbf{b}_{z_{0,i}}$ and $z_{0,i}$ denotes the $i$-th element of $\mathbf{z}_{0}$.
This variable is then fed into a decoder $D: \mathbb{R}^{d_\mathbf{b} \times d_\mathbf{z}} \to \mathbb{R}^{d_{\mathbf{x}_0}}$ that maps from the discrete token embeddings to the continuous image space to obtain the final image: $\mathbf{x}_{0}=D(\mathbf{Z})$.

In discrete diffusion models, a forward Markov process gradually corrupts the discrete latent representation $\mathbf{z}_{0}$, and a reverse process is learned to invert this process. A single step of the forward process of the Markov chain $\mathbf{z}_{0}\rightarrow \cdots \rightarrow \mathbf{z}_{t} \rightarrow \cdots \rightarrow \mathbf{z}_{T}$ can be represented as,
\begin{align}
    q(z_{t, i}|\mathbf{z}_{t-1}) = \bm{v}^{\mathsf{T}}(z_{t, i})Q_{t}\bm{v}(z_{t-1, i}),
\end{align}
\tmlrrebuttal{where $\bm{v}(z_{t, i})\in \{0, 1\}^{K+1}$ denotes a one-hot encoded vector representing the token at time step $t$, $K$ is the number of states from the VQ-VAE, and one additional state is for a special mask token $\texttt{[MASK]}$. These concepts will be formally introduced immediately below in the context of VQ-Diffusion.} \tmlrrebuttal{$Q_{t}\in\mathbb{R}^{(K+1)\times (K+1)}$ represents the transition matrix, which determines the probabilities of transitions between tokens.} VQ-Diffusion uses a mask-absorbing-type forward process, which introduces a special masked token denoted as $\texttt{[MASK]}$ in addition to the $K$ states from the VQ-VAE. The transition matrix is defined as follows, where the last column represents the transition probabilities for the $\texttt{[MASK]}$ token:
\begin{align}
Q_t =
\begin{pmatrix}
\alpha_t + \beta_t & \beta_t & \beta_t & \cdots & 0 \\
\beta_t & \alpha_t + \beta_t & \beta_t & \cdots & 0 \\
\beta_t & \beta_t & \alpha_t + \beta_t & \cdots & 0 \\
\vdots & \vdots & \vdots & \ddots & \vdots \\
\gamma_t & \gamma_t & \gamma_t & \cdots & 1 \\
\end{pmatrix},
\label{eq:def_Q_t}
\end{align}
where the transition probabilities are determined by three parameters: $\alpha_{t}$, $\beta_{t}$, and $\gamma_{t}$. In this process, once a token transitions to the masked state, it remains masked in all subsequent steps, hence the term ``absorbing.'' Specifically, $\alpha_{t}$ represents the probability of a token remaining unchanged, $\beta_{t}$ denotes the probability of transitioning to a different unmasked token, and $\gamma_{t}$ indicates the probability of the token being replaced with the $\texttt{[MASK]}$ token. These parameters satisfy the constraint $\alpha_t + (K-1)\beta_t + \gamma_t = 1$ to ensure valid probability distributions. The probability $\beta_{t}$ between unmasked tokens is generally set to a very small value. These parameters are typically set so that $q(\mathbf{z}_T|\mathbf{z}_0)$ assigns all probability mass to the $\texttt{[MASK]}$ token, and we also adopt this assumption.

During inference, the latent variable $\mathbf{z}_{0}$ corresponding to the clean image is obtained by executing the following reverse process:
\begin{align}
    p_{\theta}(\mathbf{z}_{t-1}|\mathbf{z}_{t}) = \sum_{\mathbf{z}_{0}}q(\mathbf{z}_{t-1}|\mathbf{z}_{t}, \mathbf{z}_{0})\tilde{p}_{\theta}(\mathbf{z}_{0}|\mathbf{z}_{t}),
    \label{eq:uncond_1step_reverse_process}
\end{align}
where $q(\mathbf{z}_{t-1}|\mathbf{z}_{t}, \mathbf{z}_{0})$ represents the posterior distribution determined by the forward process, and $\tilde{p}_{\theta}$ denotes the denoising network that predicts the denoised token distribution at $t$. The output of $\tilde{p}_{\theta}$ is generally modeled as independent categorical distributions for each dimension in $\mathbf{z}_{0}$. In practical applications such as text-to-image generation, $\tilde{p}_{\theta}$ is trained with conditional information (e.g., text prompts in VQ-Diffusion), allowing for controlled generation. While the true data distribution $q(\mathbf{z}_0)$ has dependencies across dimensions, the denoising network $\tilde{p}_{\theta}(\mathbf{z}_0|\mathbf{z}_t)$ typically models each dimension independently as categorical distributions. However, the complete reverse process defined in \eqref{eq:uncond_1step_reverse_process} implicitly captures some of these dependencies through the iterative application of the conditional distributions $p_{\theta}(\mathbf{z}_{t-1}|\mathbf{z}_t)$. We distinguish between two distributions: the clean distribution $\tilde{p}_{\theta}(\mathbf{z}_0|\mathbf{z}_t)$ directly estimated by the denoising network at a single step (which treats dimensions independently), and the distribution $p_{\theta}(\mathbf{z}_0|\mathbf{z}_{t})$ obtained by running the reverse diffusion process from $t$ to $t=0$, which better approximates the true data distribution with its dimensional dependencies.

\subsection{Linear-inverse-problem settings}
Inverse problems involve estimating unknown data from measurement. The relationship between the measurement $\mathbf{y}\in\mathbb{R}^{d_{\mathbf{y}}}$ and unknown ground-truth data $\mathbf{x}_{0}\in\mathbb{R}^{d_{\mathbf{x}_{0}}}$ can be represented as 
\begin{align}
    \mathbf{y} = \mathbf{A}\mathbf{x}_{0}+\bm{\eta},
\end{align}
where $\mathbf{A}\in\mathbb{R}^{d_{\mathbf{y}}\times d_{\mathbf{x}_{0}}}$ is referred to as the forward linear operator, which describes the process by which the measurement $\mathbf{y}$ is obtained from data $\mathbf{x}_{0}$. We assume this operator is known. The term $\bm{\eta}$ represents measurement noise, which we assume follows an isotropic Gaussian distribution with a known variance $\sigma_{\bm{\eta}}^2$. Consequently, the likelihood function $q(\mathbf{y}|\mathbf{x}_{0})$ can be described as $\mathcal{N}(\mathbf{y}; \mathbf{A}\mathbf{x}_{0}, \sigma_{\bm{\eta}}^2\mathbf{I})$. 

One of the primary challenges in inverse problems is their ill-posed nature. This means that for any given measurement $\mathbf{y}$, multiple candidate solutions may exist. To address this issue and determine $\mathbf{x}_{0}$, a common approach is to assume a prior distribution for $\mathbf{x}_{0}$, such as a Laplace distribution. Diffusion models have been utilized as more powerful and expressive priors, offering enhanced capabilities in solving these inverse problems~\citep{kawar2022denoising, chung2023diffusion, wang2022zero, rout2023solving}. These diffusion-based methods are able to produce data that not only fit the measurement data but also exhibit high likelihood under the prior model. 
Given a prior $q(\mathbf{x}_0)$, the objective in the inverse problem is to sample from the posterior distribution $q(\mathbf{x}_0|\mathbf{y})$, which, according to Bayes' theorem, is proportional to $q(\mathbf{y}|\mathbf{x}_0)q(\mathbf{x}_0)$.

These methods can be categorized based on how they incorporate the information from the measurement $\mathbf{y}$ into the generation trajectory of diffusion models. Methods such as denoising diffusion restoration models (DDRM)~\citep{kawar2022denoising} and denoising diffusion null-space models (DDNM)~\citep{wang2022zero} leverage the assumption of linear operators, using singular value decomposition of the forward process to control the generative process. In contrast, methods such as diffusion posterior sampling (DPS)~\citep{chung2023diffusion} and posterior sampling with latent diffusion (PSLD)~\citep{rout2023solving} operate by propagating the gradient of a loss term through the generative process. This loss term is designed to maximize the measurement likelihood, specifically by minimizing the term $\|\mathbf{y}-\mathbf{A}\mathbf{x}_{0}\|_2^2$. 

However, while these methods work well with continuous diffusion models, their application to generative models that use discrete diffusion models as priors is not straightforward. This limitation stems from two primary factors. First, the former methods (DDRM and DDNM) are specifically designed for diffusion models trained directly in the pixel domain. Second, while the latter methods (DPS and PSLD) can be extended to latent diffusion models that operate in continuous latent spaces, they encounter difficulties when handling discrete diffusion models, where the generative process involves inherently non-differentiable operations. The core challenge lies in the lack of a direct mechanism to propagate gradients of the loss function through the generative process in discrete diffusion models. In such models, after generating discrete data, a non-differentiable operation (i.e., codebook assignment) is followed by a decoding operation into continuous space, which prevents the application of conventional gradient-based guidance.

\section{Gradient-Guided Discrete Diffusion, G2D2}
\label{sec:G2D2}
\begin{figure}[tb]
    \centering
    \includegraphics[width=0.90\textwidth]{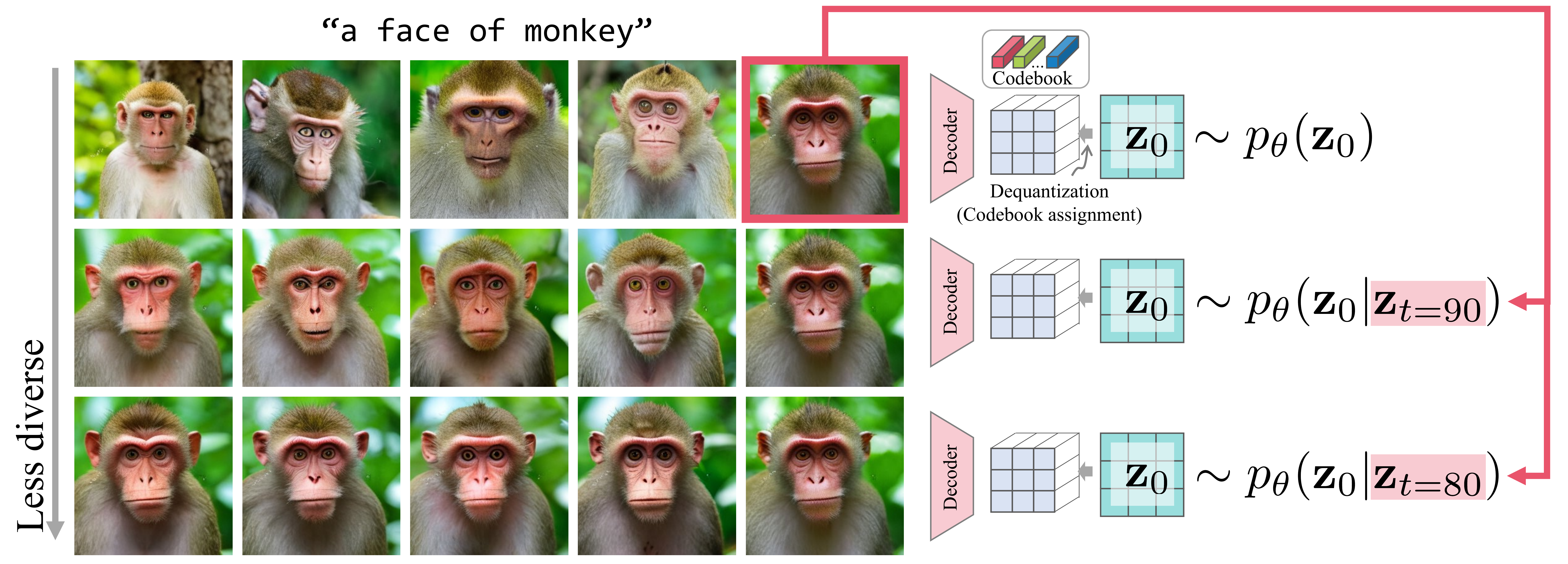}
    \caption{Empirical demonstration of mask-absorbing characteristics in discrete diffusion models. All images are generated using the text prompt ``a face of monkey''. Top row: Diverse samples from text-conditioned sampling ($p_\theta(\mathbf{z}_0)$). Middle row: Multiple samples conditioned on the same $\mathbf{z}_{t=90}$ ($p_\theta(\mathbf{z}_0|\mathbf{z}_{t=90})$). Bottom row: Multiple samples conditioned on the same $\mathbf{z}_{t=80}$ ($p_\theta(\mathbf{z}_0|\mathbf{z}_{t=80})$). This demonstrates that in mask-absorbing processes, image structure is largely determined in early sampling steps, with decreasing diversity as sampling progresses, making it difficult to correct inconsistencies with measurement data during inverse problem solving.}
    \label{fig:vqdiffusion_characteristics}
\end{figure}

In this section, we propose \emph{Gradient-Guided Discrete Diffusion} (\textbf{G2D2}), an inverse problem solving method that leverages discrete diffusion models as priors. Our approach addresses two key challenges: the non-differentiability of discrete diffusion models and the \revised{limitations imposed by widely-used mask-absorbing noise processes}. First, we introduce a star-shaped noise process to address the latter issue and \revised{define a more tractable variant of the star-shaped noise process distribution, which we call the star-decomposed distribution}. \revised{We then address the former challenge of non-differentiability by bridging the gap between discrete and continuous domains through optimizing a variational distribution with continuous relaxation. These methodological components are integrated into a unified, practical inference algorithm.}

\subsection{Star-shaped noise process for enabling inherent re-masking}
A key challenge in applying discrete diffusion models to inverse problems is addressing the limitations of mask-absorbing noise processes. As described in Section~\ref{sec:DDM}, discrete diffusion models commonly adopt a mask-absorbing process, where in the forward process, an unmasked token either remains the same or transitions to a mask token. While this design leads to higher performance in standard generation tasks, it poses a significant constraint in solving inverse problems. Specifically, in the reverse process (i.e., the generative process), once an unmasked token has been set at an early stage of sampling, the probability of it reverting to a masked state or changing to a different token becomes extremely low (see Figure~\ref{fig:markov-vs-star}), making it difficult to correct errors made in the early stages.

\begin{figure}[t]
    \centering
    \begin{subfigure}[t]{0.42\textwidth}
        \centering
        \includegraphics[width=0.95\textwidth]{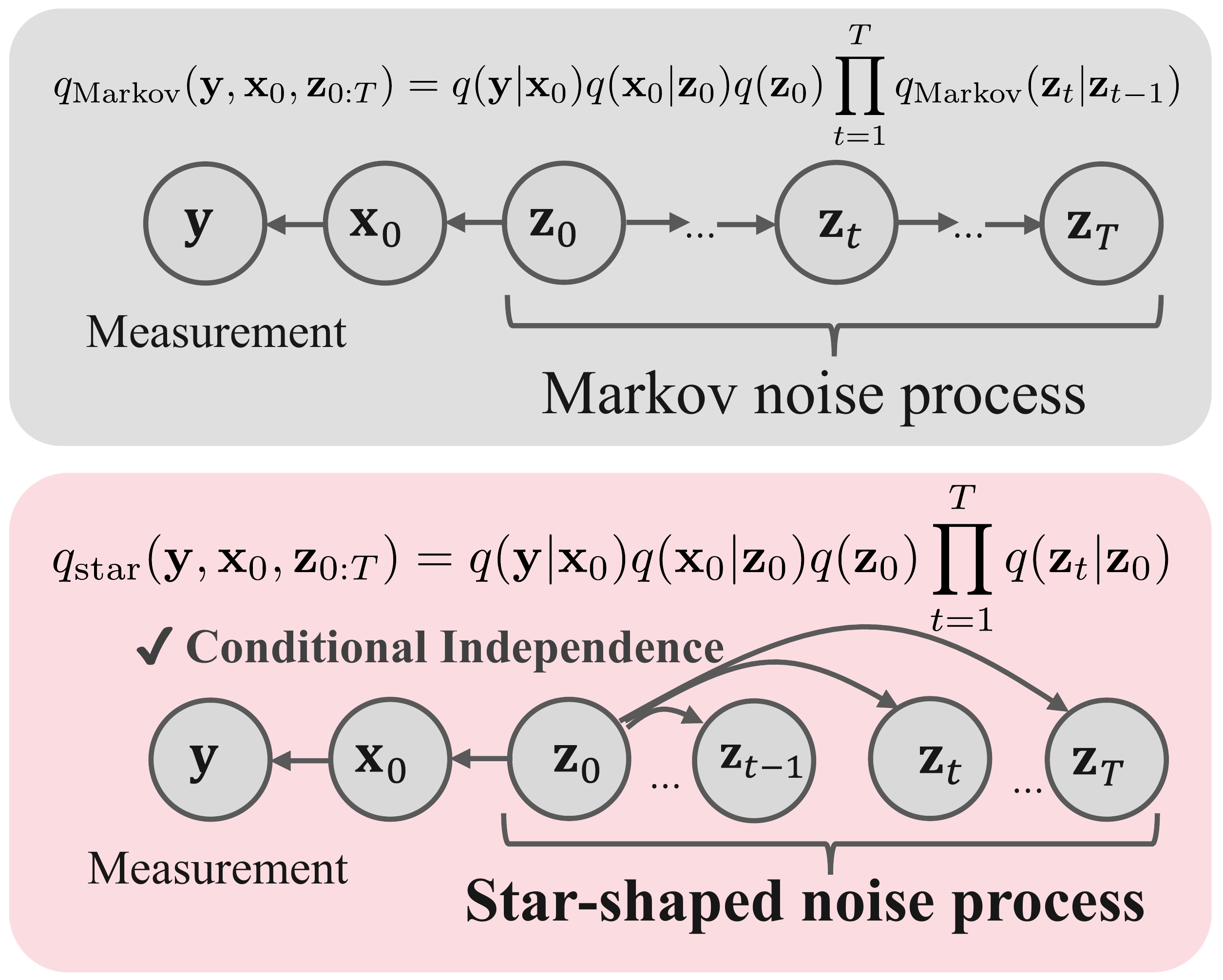}
        \caption{}
        \label{fig:graphical_models}
    \end{subfigure}
    \hfill    
    \begin{subfigure}[t]{0.57\textwidth}
        \centering
        \includegraphics[width=0.95\textwidth]{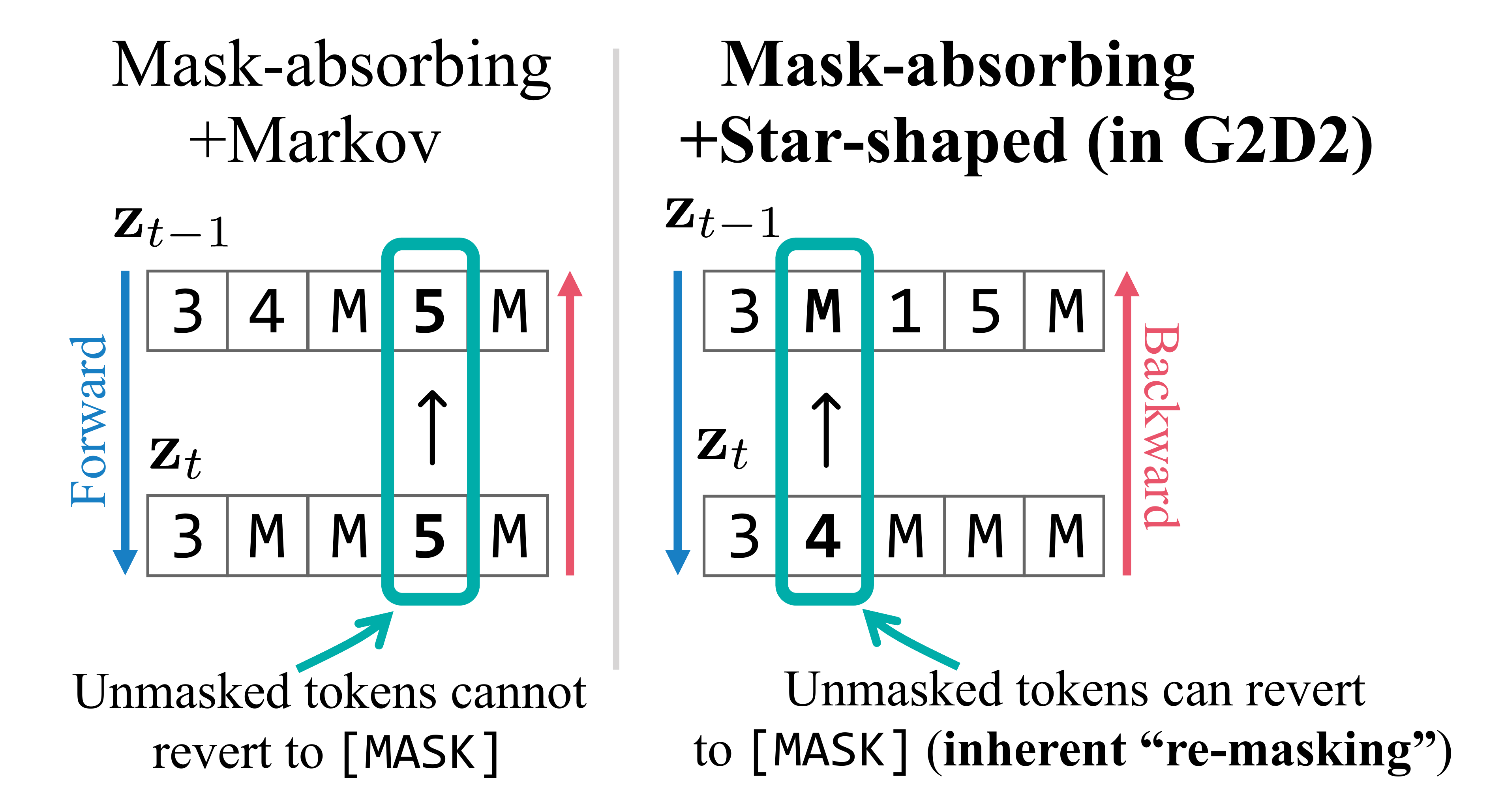}
        \caption{}
        \label{fig:markov-vs-star}
    \end{subfigure}
    \vspace{0.1em}
    \caption{
    (a) Graphical model comparison between Markov noise process (top) and star-shaped noise process (bottom). The star-shaped model introduces conditional independence between $\mathbf{z}_t$ variables given $\mathbf{z}_0$, enabling more flexible error correction during sampling. (b) Comparison between mask-absorbing Markov process (left) and mask-absorbing star-shaped process (right). In the Markov process, unmasked tokens cannot revert to \texttt{[MASK]}, while the star-shaped process allows tokens to naturally revert to \texttt{[MASK]} state (inherent ``re-masking''), facilitating error correction in later sampling steps.}
    \label{fig:monkey-and-difference}
    \vspace{-12pt}
\end{figure}

Figure~\ref{fig:vqdiffusion_characteristics} demonstrates this characteristic: when generating images using discrete diffusion models that employ mask-absorbing process, most of the image structure is determined in the initial stages, suggesting that in inverse problem scenarios, there is little opportunity to correct inconsistencies with the measurement $\mathbf{y}$ during sampling. One solution to this problem is the ``re-masking'' operation, which reverts unmasked tokens to masked tokens. This approach has been implemented in discrete predictor-corrector methods \citep{lezama2022discrete} and predictor-corrector techniques for continuous-time discrete diffusion \citep{campbell2022continuous, zhao2024informed} to improve image generation quality. However, these methods often involve reverting the sampler to earlier time steps and can be computationally expensive. 

Here, drawing inspiration from \citet{okhotin2024star} and \citet{zhang2024improving}, we adopt a \emph{star-shaped noise process} that decouples each time step $t$ from previous steps, which addresses the aforementioned issue of early-stage errors becoming permanently ``frozen.'' In this star-shaped noise process, the noisy variables $\mathbf{z}_1, \dots, \mathbf{z}_{T}$ are conditionally independent given $ \mathbf{z}_0 $. Formally, we assume $
  q_{\text{star}}(\mathbf{z}_{1:T}| \mathbf{z}_0)
  = \prod_{t=1}^{T}q_{\text{star}}(\mathbf{z}_{t}|\mathbf{z}_{0}) = 
  \prod_{t=1}^{T}\prod_{i=1}^{d_{\mathbf{z}}}
  \Bigl[\bm{v}^{\mathsf{T}}(z_{t,i})\,\overline{Q}_{t}\,\bm{v}(z_{0,i})\Bigr]
$, where the matrix $\overline{Q}_{t}=Q_t\cdots Q_1$ is the cumulative transition from the original Markov forward process in discrete diffusion, but the forward graphical model is star-shaped, as shown in  Figure~\ref{fig:graphical_models}.

Compared to the original Markov forward process, the star-shaped noise process maintains the same conditional marginal distribution $q_{\text{star}}(\mathbf{z}_{t}|\mathbf{z}_{0})$, \revised{yet the locations of \texttt{[MASK]} tokens are uncorrelated between $\mathbf{z}_{t}$ corresponding to different time steps.
In the Markov noise process, once a token is \texttt{[MASK]}ed, it remains so, and in subsequent steps, only unmasked tokens can transition to \texttt{[MASK]} tokens. In contrast, in the star-shaped noise process, each noisy token $\mathbf{z}_t$ is viewed as being generated directly from $\mathbf{z}_0$ rather than from $\mathbf{z}_{t-1}$, making the \texttt{[MASK]} token positions independent between adjacent $\mathbf{z}_{t-1}$ and $\mathbf{z}_{t}$.

A key characteristic of the star-shaped process is observed in the reverse step $q_{\text{star}}(\mathbf{z}_{t-1}|\mathbf{z}_{t})$. Unlike the Markov noise process with mask-absorbing state, which requires that tokens already unmasked remain fixed and prohibits their reversion to a masked state, the star-shaped formulation permits tokens to transition from an unmasked state back to a masked state within a single reverse step. This inherent ``re-masking'' operation enables flexible error correction during sampling, as illustrated in Figure~\ref{fig:markov-vs-star}, where tokens can naturally revert to the masked state without the need to roll back the sampler.}

Our goal here, similar to other diffusion model-based inverse problem methods, is to sample from $q_{\text{star}}(\mathbf{z}_0 | \mathbf{y})$ based on this graphical model. More specifically, we aim to develop a feasible algorithm for sampling from 
$q_{\text{star}}(\mathbf{z}_{0:T}| \mathbf{y})$. However, attempting to perform ancestral sampling from this distribution, such as by proceeding from $\mathbf{z}_{T} \rightarrow \mathbf{z}_{T-1} \rightarrow \cdots \rightarrow \mathbf{z}_0$, results in a posterior of the form $q_{\text{star}}(\mathbf{z}_t | \mathbf{z}_{t+1:T}, \mathbf{y})$, which depends on all subsequent samples and is therefore not computationally feasible.

\revised{
To address this challenge, we first introduce a proxy joint distribution, $q_{\text{star-decomp}}(\mathbf{z}_{0:T}|\mathbf{y})$, obtained by factorizing the star-shaped posterior into a product of local conditionals. Importantly, this preserves the marginal $q_{\text{star-decomp}}(\mathbf{z}_{0}|\mathbf{y}) = q_{\text{star}}(\mathbf{z}_{0}|\mathbf{y})$ as we will discuss in the next subsection, so it can serve as a \emph{valid surrogate} for $q_{\text{star}}$ while reducing the complexity of the graph structure so that dependencies exist only between consecutive time steps. Although the full joint $q_{\text{star-decomp}}$ is still intractable, this decomposition is a necessary preparatory step that leads to the variational approximation described in Sec.~\ref{sec:variational_approx}. The resulting algorithm 
separates the complex inter-dependencies between time steps while maintaining the essential properties of the star-shaped model, making it possible to efficiently sample from $q_{\text{star}}(\mathbf{z}_{0}|\mathbf{y})$.
}

\subsection{Defining the decomposed star-shaped distribution \textnormal{\texorpdfstring{$q_{\text{star-decomp}}$}{q-star-decomp}} which is a tractable variant of \textnormal{\texorpdfstring{$q_{\text{star}}$}{q-star}}}
We define the \emph{star-decomposed} distribution as follows: 
\begin{align}
    q_{\text{star-decomp}}(\mathbf{z}_{0:T}|\mathbf{y}) = q_{\text{star}}(\mathbf{z}_{T}|\mathbf{y})\prod_{t=1}^{T}q_{\text{star}}(\mathbf{z}_{t-1}|\mathbf{z}_{t}, \mathbf{y}), 
\end{align}
where $q_{\text{star}}(\mathbf{z}_{t-1}|\mathbf{z}_{t}, \mathbf{y})$ represents the conditional distribution of $q_{\text{star}}$. This decomposition of the joint distribution resembles the Markovian reverse process of diffusion models, but it has the following properties:

\paragraph{1. A single step of \textnormal{$q_{\text{star-decomp}}$} inherently enables the ``re-masking'' operation.}
In the star-shaped noise process, the positions of mask tokens in $\mathbf{z}_{t-1}$ and $\mathbf{z}_{t}$ are mutually independent and uncorrelated as shown in Figure~\ref{fig:markov-vs-star}. Consequently, the conditional distribution $q_{\text{star-decomp}}(\mathbf{z}_{t-1}|\mathbf{z}_{t}, \mathbf{y})$  ($= q_{\text{star}}(\mathbf{z}_{t-1}|\mathbf{z}_{t}, \mathbf{y})$) enables a ``re-masking'' operation, wherein unmasked tokens present in $\mathbf{z}_{t}$ can become masked tokens in $\mathbf{z}_{t-1}$. This property suggests that in mask-absorbing discrete diffusion, errors that occur in the initial stages of sampling can be corrected in subsequent steps, which provides an advantage when solving inverse problems.

\paragraph{2. The marginal distribution \textnormal{$q_{\text{star-decomp}} (\mathbf{z}_0|\mathbf{y})$} is identical to the target distribution \textnormal{$q_{\text{star}}(\mathbf{z}_0|\mathbf{y})$}.}
The statement and proof are provided in the Appendix. 
\revised{This suggests that for solving inverse problems, we do not necessarily need to sample from the joint distribution of $q_{\text{star}}$, but can instead aim to sample from the more tractable distribution $q_{\text{star-decomp}}$. Ultimately, we perform approximate sampling from the posterior by approximating this $q_{\text{star-decomp}}$ using a variational distribution.}

\paragraph{3. The conditional joint distribution of \textnormal{$q_{\text{star-decomp}}$} differs from that of \textnormal{$q_{\text{star}}$}, i.e., \textnormal{$q_{\text{star-decomp}} (\mathbf{z}_{0:T}|\mathbf{y}) \neq q_{\text{star}}(\mathbf{z}_{0:T}|\mathbf{y})$}. } 
The decomposition of the joint distribution of a star-shaped noise process takes the form $q_{\text{star}}(\mathbf{z}_{0:T}|\mathbf{y}) = q_{\text{star}}(\mathbf{z}_{T}|\mathbf{y})\prod_{t=1}^{T}q_{\text{star}}(\mathbf{z}_{t-1}|\mathbf{z}_{t:T}, \mathbf{y})$. However, $q_{\text{star-decomp}}$ deviates from this formulation by disregarding the dependencies on larger time steps, $\mathbf{z}_{t+1:T}$.

In subsequent sections, we introduce a variational distribution to approximate $q_{\text{star-decomp}}$, which possesses Property 1 that inherently enables the re-masking operation. As established by Property 3, the joint distribution of $q_{\text{star-decomp}}$ differs from that of the star-shaped noise process graphical model. 
\revised{Nevertheless, Property 2 guarantees that they share identical marginal distributions given the measurement $\mathbf{y}$, providing justification for our approach of targeting the more tractable $q_{\text{star-decomp}}$ for sampling.}
Furthermore, since $q_{\text{star-decomp}}$ focuses only on two adjacent variables, we can formulate a simple algorithm to approximate its distribution using a variational approach.

\subsection{Variational approximation for feasible inference algorithm}
\label{sec:variational_approx}
Based on the discussion in the previous section, we aim to implement $q_{\text{star-decomp}}$, which inherently incorporates a re-masking process for efficient inverse problem solving. However, since $q_{\text{star-decomp}}$ is still not tractable, we introduce a variational distribution $p_{\bm{\alpha}}(\mathbf{z}_{0:T}|\mathbf{y})=q_{\textnormal{star}}(\mathbf{z}_{T}|\mathbf{y})\prod_{t=1}^{T}p_{\bm{\alpha}}(\mathbf{z}_{t-1}|\mathbf{z}_{t}, \mathbf{y})$ to approximate $q_{\text{star-decomp}}(\mathbf{z}_{0:T}|\mathbf{y})$, with the ultimate goal of ensuring that the marginal distribution $p_{\bm{\alpha}}(\mathbf{z}_{0}|\mathbf{y})$ approximates the true posterior $q_{\text{star-decomp}}(\mathbf{z}_{0}|\mathbf{y})$.
The distribution $p_{\bm{\alpha}}$ is decomposed as
\begin{align}
    p_{\bm{\alpha}}(\mathbf{z}_{t-1}|\mathbf{z}_{t}, \mathbf{y}) = \sum_{\mathbf{z}_{0}}q_{\text{star}}(\mathbf{z}_{t-1}| \mathbf{z}_{0})\tilde{p}_{\bm{\alpha}}(\mathbf{z}_{0}|\mathbf{z}_{t}, \mathbf{y}), 
    \label{eq:def_p_alpha}
\end{align}
\revised{where $\tilde{p}_{\bm{\alpha}}(\mathbf{z}_{0}|\mathbf{z}_{t}, \mathbf{y})$ is a categorical distribution parameterized by $\bm{\alpha}$, where for each time step $t$ and dimension $i$, $\bm{\alpha}_{t,i,\cdot}$ is a probability vector in the simplex $\Delta^{K-1}$, defined as $\tilde{p}_{\bm{\alpha}}(z_{0, i}|\mathbf{z}_{t}, \mathbf{y}) = \text{Cat}\left(z_{0, i}; \bm{\alpha}_{t, i, \cdot}\right)$, i.e., $\tilde{p}_{\bm{\alpha}}(z_{0, i}=k|\mathbf{z}_{t}, \mathbf{y}) = \alpha_{t, i, k}$. In the subsequent discussions and optimization steps, we assume that $\alpha_{t, i, \cdot}$ is always normalized to lie on the simplex.
}
This decomposition stems from the fact that the distribution $q_{\text{star-decomp}}(\mathbf{z}_{t-1}|\mathbf{z}_{t}, \mathbf{y})$ ($=q_{\text{star}}(\mathbf{z}_{t-1}|\mathbf{z}_{t}, \mathbf{y})$) can be expressed as $\sum_{\mathbf{z}_{0}}q_{\text{star}}(\mathbf{z}_{t-1}|\mathbf{z}_{0})q_{\text{star}}(\mathbf{z}_{0}|\mathbf{z}_{t}, \mathbf{y})$ based on the conditional independence. 
Note that both $q_{\text{star}}(\mathbf{z}_{t-1}|\mathbf{z}_{0})$ and $\tilde{p}_{\bm{\alpha}}(\mathbf{z}_{0}|\mathbf{z}_{t}, \mathbf{y})$ have a mean field structure with independent categorical distributions across dimensions. Consequently, $p_{\bm{\alpha}}(\mathbf{z}_{t-1}|\mathbf{z}_{t}, \mathbf{y})$, obtained by marginalizing over $\mathbf{z}_{0}$, inherits this mean field property.
For notational convenience, we denote the slice of distribution parameter $\bm{\alpha}$ at time step $t$ as $\bm{\alpha}_{t}\in\mathbb{R}^{d_{\mathbf{z}}\times K}$. 

To ensure that the marginal distribution of the variational distribution $p_{\bm{\alpha}}$ closely approximates that of $q_{\textnormal{star-decomp}}$, the parameters $\bm{\alpha}$ of the variational distribution are obtained by optimizing an objective function derived from the following theorem:

\begin{restatable}{theorem}{optimobject}
\label{thm:optim_object_posterior}
Let $p_{\bm{\alpha}}$ be a distribution with the parameterization given by the decomposition in~\eqref{eq:def_p_alpha}. Then, for any measurements $\mathbf{y}$, the following inequality holds for the KL divergence between the marginal distributions:
\begin{align}
    \KL\left(p_{\bm{\alpha}}(\mathbf{z}_{0}|\mathbf{y}) \| q_{\text{star-decomp}}(\mathbf{z}_{0}|\mathbf{y})\right) \leq \sum_{t=1}^{T} \mathbb{E}_{\mathbf{z}_t \sim p_{\bm{\alpha}}(\mathbf{z}_t|\mathbf{y})}\left[\KL\left(\tilde{p}_{\bm{\alpha}}(\mathbf{z}_{0}|\mathbf{z}_{t}, \mathbf{y}) \| q_{\text{star}}(\mathbf{z}_{0}|\mathbf{z}_{t}, \mathbf{y})\right)\right], 
    \label{eq:optim_object_posterior}
\end{align}
where $p_{\bm{\alpha}}(\mathbf{z}_0|\mathbf{y})$ is the variational marginal distribution parameterized by $\bm{\alpha}$, $q_{\text{star}}(\mathbf{z}_0|\mathbf{y})$ is the true posterior distribution, $\tilde{p}_{\bm{\alpha}}(\mathbf{z}_{0}|\mathbf{z}_{t}, \mathbf{y})$ is the variational conditional distribution as defined in~\eqref{eq:def_p_alpha}, $q_{\text{star}}(\mathbf{z}_{0}|\mathbf{z}_{t}, \mathbf{y})$ is the true conditional distribution, $p_{\bm{\alpha}}(\mathbf{z}_t|\mathbf{y})$ is the marginal distribution at time step $t$, and $T$ is the total number of time steps in the diffusion process.
\end{restatable}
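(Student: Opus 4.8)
\section*{Proof proposal}

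The plan is to establish the bound with two applications of the data-processing inequality (DPI) for KL divergence, joined by the chain rule. Both $p_{\bm{\alpha}}(\mathbf{z}_{0:T}|\mathbf{y})$ and $q_{\text{star-decomp}}(\mathbf{z}_{0:T}|\mathbf{y})$ are constructed as reverse Markov chains $\mathbf{z}_T\to\cdots\to\mathbf{z}_0$ sharing the same initial factor $q_{\text{star}}(\mathbf{z}_T|\mathbf{y})$, so I would first lift the marginal KL to the joint KL. Since $p_{\bm{\alpha}}(\mathbf{z}_0|\mathbf{y})$ and $q_{\text{star-decomp}}(\mathbf{z}_0|\mathbf{y})$ are the respective $\mathbf{z}_0$-marginals obtained by summing out $\mathbf{z}_{1:T}$, the DPI (with the coordinate projection as the channel) gives
\[
  \KL\!\left(p_{\bm{\alpha}}(\mathbf{z}_0|\mathbf{y}) \,\|\, q_{\text{star-decomp}}(\mathbf{z}_0|\mathbf{y})\right)
  \le
  \KL\!\left(p_{\bm{\alpha}}(\mathbf{z}_{0:T}|\mathbf{y}) \,\|\, q_{\text{star-decomp}}(\mathbf{z}_{0:T}|\mathbf{y})\right).
\]

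Next I would expand the joint KL by the chain rule along $\mathbf{z}_T\to\cdots\to\mathbf{z}_0$. Because the reverse conditionals in both chains depend only on $\mathbf{z}_t$, the conditioning on $\mathbf{z}_{t:T}$ reduces to conditioning on $\mathbf{z}_t$, and the shared initial factor makes its KL term vanish. This collapses the joint KL to
\[
  \sum_{t=1}^T \mathbb{E}_{\mathbf{z}_t\sim p_{\bm{\alpha}}(\mathbf{z}_t|\mathbf{y})}\!\left[\KL\!\left(p_{\bm{\alpha}}(\mathbf{z}_{t-1}|\mathbf{z}_t,\mathbf{y})\,\|\,q_{\text{star}}(\mathbf{z}_{t-1}|\mathbf{z}_t,\mathbf{y})\right)\right],
\]
where I use $q_{\text{star-decomp}}(\mathbf{z}_{t-1}|\mathbf{z}_t,\mathbf{y})=q_{\text{star}}(\mathbf{z}_{t-1}|\mathbf{z}_t,\mathbf{y})$, and the outer expectation is attached to the $p_{\bm{\alpha}}$ marginal since the chain rule weights each conditional KL by the left-argument distribution.

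The crux is to bound each per-step term. Here I would use the structural observation that both reverse conditionals are pushforwards of a $\mathbf{z}_0$-distribution through the \emph{same} fixed kernel $q_{\text{star}}(\mathbf{z}_{t-1}|\mathbf{z}_0)$: by definition~\eqref{eq:def_p_alpha}, $p_{\bm{\alpha}}(\mathbf{z}_{t-1}|\mathbf{z}_t,\mathbf{y})=\sum_{\mathbf{z}_0}q_{\text{star}}(\mathbf{z}_{t-1}|\mathbf{z}_0)\tilde{p}_{\bm{\alpha}}(\mathbf{z}_0|\mathbf{z}_t,\mathbf{y})$, while conditional independence gives $q_{\text{star}}(\mathbf{z}_{t-1}|\mathbf{z}_t,\mathbf{y})=\sum_{\mathbf{z}_0}q_{\text{star}}(\mathbf{z}_{t-1}|\mathbf{z}_0)q_{\text{star}}(\mathbf{z}_0|\mathbf{z}_t,\mathbf{y})$. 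Applying the DPI a second time to this shared channel yields, for each fixed $\mathbf{z}_t$ and $\mathbf{y}$,
\[
  \KL\!\left(p_{\bm{\alpha}}(\mathbf{z}_{t-1}|\mathbf{z}_t,\mathbf{y})\,\|\,q_{\text{star}}(\mathbf{z}_{t-1}|\mathbf{z}_t,\mathbf{y})\right)
  \le
  \KL\!\left(\tilde{p}_{\bm{\alpha}}(\mathbf{z}_0|\mathbf{z}_t,\mathbf{y})\,\|\,q_{\text{star}}(\mathbf{z}_0|\mathbf{z}_t,\mathbf{y})\right),
\]
and substituting into the sum gives exactly~\eqref{eq:optim_object_posterior}.

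I expect the main obstacle to be the chain-rule step rather than the two DPI invocations. One must verify that the reverse-time factorizations are genuinely Markov in $\mathbf{z}_t$ (so the conditioning truly reduces to $\mathbf{z}_t$), and take care that the outer expectation lands on $p_{\bm{\alpha}}(\mathbf{z}_t|\mathbf{y})$ and not on a $q$-marginal, since the asymmetry of the KL makes direction errors easy here. The two DPI steps are routine once the shared-kernel structure is isolated; the entire argument hinges on the fact that $q_{\text{star}}(\mathbf{z}_{t-1}|\mathbf{z}_0)$ is independent of $\bm{\alpha}$, which is precisely what allows the same channel to act on both $\mathbf{z}_0$-distributions.
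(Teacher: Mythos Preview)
Your proposal is correct and follows essentially the same approach as the paper: bound the marginal KL by the joint KL via marginalization, expand the joint KL by the chain rule along $\mathbf{z}_T\to\cdots\to\mathbf{z}_0$ (with the $\mathbf{z}_T$ term vanishing since both processes share $q_{\text{star}}(\mathbf{z}_T|\mathbf{y})$), and then bound each per-step KL using the shared-kernel structure $q_{\text{star}}(\mathbf{z}_{t-1}|\mathbf{z}_0)$. The only cosmetic difference is that for the per-step bound the paper writes out the chain-rule identity on the pair $(\mathbf{z}_0,\mathbf{z}_{t-1})$ and drops a nonnegative term, which is precisely the standard proof of the DPI you invoke directly; your phrasing as ``two DPI applications'' is a cleaner packaging of the same argument.
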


The proof is provided in the Appendix. Based on this inequality, we aim to minimize each term in the sum on the right-hand side. Since $\tilde{p}_{\bm{\alpha}}$ is a different categorical distribution at each $t$, we minimize $\bm{\alpha}$ for each time step, ultimately aiming to minimize the left-hand side.
Each term on the right-hand side of \eqref{eq:optim_object_posterior} can be decomposed into a term representing deviation from the prior and a term representing likelihood with respect to the observed data:

\begin{restatable}{lemma}{decompobj}
\label{lemm:decpomp_obj}
The KL divergence between the variational distribution $\tilde{p}_{\bm{\alpha}}(\mathbf{z}_{0}|\mathbf{z}_{t}, \mathbf{y})$ and the true posterior $q_{\text{star}}(\mathbf{z}_{0}|\mathbf{z}_{t}, \mathbf{y})$ can be decomposed into two terms:

\begin{align}
    \KL\left(\tilde{p}_{\bm{\alpha}}(\mathbf{z}_{0}|\mathbf{z}_{t}, \mathbf{y}) \| q(\mathbf{z}_{0}|\mathbf{z}_{t}, \mathbf{y})\right) = \KL\left(\tilde{p}_{\bm{\alpha}}(\mathbf{z}_{0}|\mathbf{z}_{t}, \mathbf{y}) \| q_{\text{star}}(\mathbf{z}_{0}|\mathbf{z}_{t})\right) - \mathbb{E}_{\mathbf{z}_{0}\sim \tilde{p}_{\bm{\alpha}}(\mathbf{z}_{0}|\mathbf{z}_{t}, \mathbf{y})}\left[\log q_{\text{star}}(\mathbf{y}|\mathbf{z}_{0})\right], 
    \label{eq:decomp_obj}
\end{align}
\end{restatable}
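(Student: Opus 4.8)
The plan is to derive the decomposition by applying Bayes' rule to the true single-step posterior $q_{\text{star}}(\mathbf{z}_{0}|\mathbf{z}_{t},\mathbf{y})$ and then splitting the logarithm inside the KL divergence into interpretable pieces. First I would write the left-hand side from the definition of KL divergence,
\begin{align}
    \KL\!\left(\tilde{p}_{\bm{\alpha}}(\mathbf{z}_{0}|\mathbf{z}_{t},\mathbf{y}) \,\|\, q_{\text{star}}(\mathbf{z}_{0}|\mathbf{z}_{t},\mathbf{y})\right) = \mathbb{E}_{\mathbf{z}_{0}\sim\tilde{p}_{\bm{\alpha}}(\mathbf{z}_{0}|\mathbf{z}_{t},\mathbf{y})}\!\left[\log\tilde{p}_{\bm{\alpha}}(\mathbf{z}_{0}|\mathbf{z}_{t},\mathbf{y}) - \log q_{\text{star}}(\mathbf{z}_{0}|\mathbf{z}_{t},\mathbf{y})\right],
\end{align}
and then substitute the Bayes factorization $q_{\text{star}}(\mathbf{z}_{0}|\mathbf{z}_{t},\mathbf{y}) = q_{\text{star}}(\mathbf{y}|\mathbf{z}_{0},\mathbf{z}_{t})\,q_{\text{star}}(\mathbf{z}_{0}|\mathbf{z}_{t})/q_{\text{star}}(\mathbf{y}|\mathbf{z}_{t})$ into the second logarithm.

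The key structural input is a conditional-independence observation built into the star-shaped graphical model. Since the measurement is generated as $\mathbf{y} = \mathbf{A}\mathbf{x}_{0}+\bm{\eta}$ with $\mathbf{x}_{0}=D(\mathbf{Z})$ determined solely by $\mathbf{z}_{0}$, both $\mathbf{y}$ and $\mathbf{z}_{t}$ are children of $\mathbf{z}_{0}$, so they are d-separated given $\mathbf{z}_{0}$ and hence $q_{\text{star}}(\mathbf{y}|\mathbf{z}_{0},\mathbf{z}_{t}) = q_{\text{star}}(\mathbf{y}|\mathbf{z}_{0})$. This is precisely what lets me identify the numerator of the Bayes factor with the measurement likelihood $q_{\text{star}}(\mathbf{y}|\mathbf{z}_{0})$ appearing on the right-hand side of \eqref{eq:decomp_obj}. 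I would state this as a short preliminary claim before the main computation.

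With the substitution in place, the integrand splits into three pieces: $\log\!\left[\tilde{p}_{\bm{\alpha}}(\mathbf{z}_{0}|\mathbf{z}_{t},\mathbf{y})/q_{\text{star}}(\mathbf{z}_{0}|\mathbf{z}_{t})\right]$, whose expectation under $\tilde{p}_{\bm{\alpha}}$ is exactly $\KL\!\left(\tilde{p}_{\bm{\alpha}}(\mathbf{z}_{0}|\mathbf{z}_{t},\mathbf{y}) \,\|\, q_{\text{star}}(\mathbf{z}_{0}|\mathbf{z}_{t})\right)$; the term $-\log q_{\text{star}}(\mathbf{y}|\mathbf{z}_{0})$, whose expectation is the negative expected log-likelihood $-\mathbb{E}_{\mathbf{z}_{0}\sim\tilde{p}_{\bm{\alpha}}}[\log q_{\text{star}}(\mathbf{y}|\mathbf{z}_{0})]$; and the residual term $+\log q_{\text{star}}(\mathbf{y}|\mathbf{z}_{t})$. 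Because the last term does not depend on $\mathbf{z}_{0}$, it factors out of the expectation as the log-evidence $\log q_{\text{star}}(\mathbf{y}|\mathbf{z}_{t})$, which is independent of the variational parameters $\bm{\alpha}$.

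The one point requiring care, and the main (minor) obstacle, is the treatment of this log-evidence term: it is the only quantity separating the raw algebra from the stated identity. Since it is constant in $\bm{\alpha}$, it plays no role in the subsequent per-step minimization, so the identity in \eqref{eq:decomp_obj} is understood up to this $\bm{\alpha}$-independent additive constant (equivalently, the normalizer $q_{\text{star}}(\mathbf{y}|\mathbf{z}_{t})$ is dropped as it does not affect the optimization objective used to fit $\bm{\alpha}$). I would therefore present the conditional-independence claim cleanly, carry out the Bayes substitution and the three-way split as routine steps, and conclude by noting that the residual log-evidence constant is immaterial to the minimization problem for which this decomposition is used.
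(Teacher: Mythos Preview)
Your proposal is correct and follows essentially the same route as the paper's own proof: write out the KL, apply Bayes' rule to $q_{\text{star}}(\mathbf{z}_{0}|\mathbf{z}_{t},\mathbf{y})$, split the logarithm, and drop the $\bm{\alpha}$-independent log-evidence $\log q_{\text{star}}(\mathbf{y}|\mathbf{z}_{t})$ as a constant. The only (minor) addition on your side is the explicit justification of $q_{\text{star}}(\mathbf{y}|\mathbf{z}_{0},\mathbf{z}_{t})=q_{\text{star}}(\mathbf{y}|\mathbf{z}_{0})$ via the star-shaped graphical model, which the paper tacitly assumes by writing Bayes' rule with $q_{\text{star}}(\mathbf{y}|\mathbf{z}_{0})$ directly in the numerator.
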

\revised{This decomposition enables us to separately consider \textbf{the fit to the prior} and \textbf{the consistency with the measurement data}, and we approximate both terms in a computationally tractable form as follows. First, the KL term on the right-hand side of~\eqref{eq:decomp_obj} remains intractable.}
However, we note that the star-shaped noise process shares the conditional distribution $q_{\text{star}}(\mathbf{z}_{t}|\mathbf{z}_{0})$ with the original Markov noise process. Consequently, the reverse conditional distribution $q_{\text{star}}(\mathbf{z}_{0}|\mathbf{z}_{t})$ will also be identical for both processes. Since the prior of the pre-trained discrete diffusion models is trained to approximate this distribution, we substitute this prior model $\tilde{p}_{\theta}(\mathbf{z}_{0}|\mathbf{z}_{t})$ for $q_{\text{star}}(\mathbf{z}_{0}|\mathbf{z}_{t})$ into the objective function of~\eqref{eq:decomp_obj}. This substitution transforms the term into a KL divergence between two categorical distributions, enabling the computation of gradients with respect to the parameter $\bm{\alpha}$. 

The second term involves an expectation calculation over a categorical distribution, for which we use the Gumbel-Softmax re-parameterization trick~\citep{jang2016categorical, maddison2016concrete}. The implementation of this trick is discussed in the subsequent section. This approach makes the term differentiable with respect to the categorical distribution's parameter $\bm{\alpha}$, facilitating continuous optimization. The explicit form of the resultant loss function is detailed in the Appendix.

Based on Theorem~\ref{thm:optim_object_posterior} and Lemma~\ref{lemm:decpomp_obj}, our proposed inverse problem solving method with discrete diffusion prior, \textbf{G2D2}, optimizes the parameter $\bm{\alpha}$ of $p_{\bm{\alpha}}$ for $t = T, \ldots, 1$ while sequentially sampling $\mathbf{z}_{0:T}$. In the optimization step, any continuous optimization method, such as Adam~\citep{kingma2014adam} and RAdam~\citep{Liu2020On}, can be used. Implementation considerations are discussed in the following section. This algorithm is detailed in Algorithm~\ref{alg:proposed}, and G2D2 is illustrated in Figure~\ref{fig:G2D2_Flow}.

\begin{algorithm}[ht]
    \small
    \caption{Gradient-Guided Discrete Diffusion, \textbf{G2D2}}
    \label{alg:proposed}
    \begin{algorithmic}[1]
        \Require Input condition $\mathbf{y}$, pre-trained discrete diffusion model $p_{\theta}$, forget coefficient $\gamma$
        \State $\mathbf{z}_{T} \sim q_{\text{star}}(\mathbf{z}_{T})$ 
        \For{$t=T,\dots,1$}
            \If{$t=T$}
                \State Initialize: $\bm{\alpha}_{t} \propto \log \tilde{p}_{\theta}(\mathbf{z}_{0}|\mathbf{z}_{t})$
            \Else
                \State Initialize: $\bm{\alpha}_{t} \propto \exp(\gamma \log \bm{\alpha}_{t+1} + (1-\gamma) \log \tilde{p}_{\theta}(\mathbf{z}_{0}|\mathbf{z}_{t}))$
            \EndIf
            \State \texttt{// Continuous optimization}
            \State  $\bm{\alpha}_{t} = \argmin_{\bm{\alpha}_{t}}\KL\left(\tilde{p}_{\bm{\alpha}}(\mathbf{z}_{0}|\mathbf{z}_{t}, \mathbf{y}) \| \tilde{p}_{\theta}(\mathbf{z}_{0}|\mathbf{z}_{t})\right) -\mathbb{E}_{\mathbf{z}_{0}\sim \tilde{p}_{\bm{\alpha}}(\mathbf{z}_{0}|\mathbf{z}_{t}, \mathbf{y})}\left[\log q_{\text{star}}(\mathbf{y}|\mathbf{z}_{0})\right]$ 
            \State Sample $\mathbf{z}_{t-1}\sim p_{\bm{\alpha}}(\mathbf{z}_{t-1}|\mathbf{z}_{t}, \mathbf{y})=\sum_{\mathbf{z}_{0}}q_{\text{star}}(\mathbf{z}_{t-1}|\mathbf{z}_{0})\tilde{p}_{\bm{\alpha}}(\mathbf{z}_{0}|\mathbf{z}_{t}, \mathbf{y})$
        \EndFor
        \State \textbf{return} $\mathbf{x}_{0}$ by decoding $\mathbf{z}_{0}$
    \end{algorithmic}
\end{algorithm}
\subsection{Implementation considerations}
\paragraph{Gumbel-Softmax dequantization}
We use the Gumbel-Softmax trick~\citep{jang2016categorical, maddison2016concrete} to make the computation of the second term in~\eqref{eq:decomp_obj} differentiable. At time step $t$, this process begins by generating Gumbel-Softmax samples using parameters of $\tilde{p}_{\bm{\alpha}}$ as follows: $\hat{z}_{0, i, k} = \mathrm{softmax}\left(\left(\log \alpha_{t, i, k} + g_{i, k}\right)/\tau\right)$, where $g_{i, k}$ are i.i.d. samples drawn from the Gumbel distribution, and $\tau$ ($>0$) is the temperature parameter. This procedure generates a ``soft'' categorical sample for each dimension in $\mathbf{z}_{0}$, indicating the proportional selection of each codebook element. As these proportions correspond to the contribution rate of each codebook element, we construct $\mathbf{Z}_{\text{Gumbel}}\in\mathbb{R}^{d_{\mathbf{z}}\times d_{b}}$ as their weighted sum: $(\mathbf{Z}_{\text{Gumbel}})_{i} = \sum_{k=1}^{K}\hat{z}_{0, i, k}\mathbf{b}_{k}$.
Finally, we pass $\mathbf{Z}_{\text{Gumbel}}$ through the decoder to obtain the image $\mathbf{x}_{0} = D(\mathbf{Z}_{\text{Gumbel}})$. By substituting this image into the likelihood function $q_{\text{star}}(\mathbf{y}|\mathbf{x}_{0})$, we obtain the differentiable objective with respect to the variational parameter $\bm{\alpha}_{t}$, enabling continuous optimization. For linear inverse problems, the objective function will include the term $\| \mathbf{y} - \mathbf{A}\mathbf{x}_0(\bm{\alpha}_{t})\|_{2}^{2}$, excluding the constant term derived from measurement noise.

\paragraph{Optimization initialization strategy}
At time step $t$, we are required to optimize the variational parameter $\bm{\alpha}_{t}$. To expedite this process, we can leverage the optimized values from the previous time step as the initialization for the optimization process, effectively reducing the number of required optimization steps. To achieve this, we introduce a forgetting coefficient $\gamma$ (where $0 \leq \gamma \leq 1$) and initialize $\bm{\alpha}_{t}$ through a weighted sum of the previous optimized variables and the prior model's output in the logarithm domain, given by $\bm{\alpha}_{t} \propto \exp (\gamma \log \bm{\alpha}_{t+1} + (1-\gamma)\log \tilde{p}_{\theta}(\mathbf{z}_{0}|\mathbf{z}_{t}))$. The effectiveness of this strategy is discussed in Appendix~\ref{appendix:ablation_forget_coef}.

\subsection{Application of G2D2 to masked generative models}
\label{sec:application_to_MGM}
As discussed in \citep{zheng2024masked}, mask-absorbing discrete diffusion models and masked generative models, such as MaskGIT~\citep{chang2022maskgit}, share a similar framework. Apart from temporal conditioning, these models are nearly identical and are trained to approximate $q_{\text{star}}(\mathbf{z}_0|\mathbf{z}_t)$. Therefore, G2D2 can be straightforwardly applied to masked generative models. We empirically show this by providing an example of solving inverse problems using a masked generative model as a prior for motion data in the experimental section.

\section{Experiments}
\begin{figure}[tb]
    \centering
    \includegraphics[width=\linewidth]{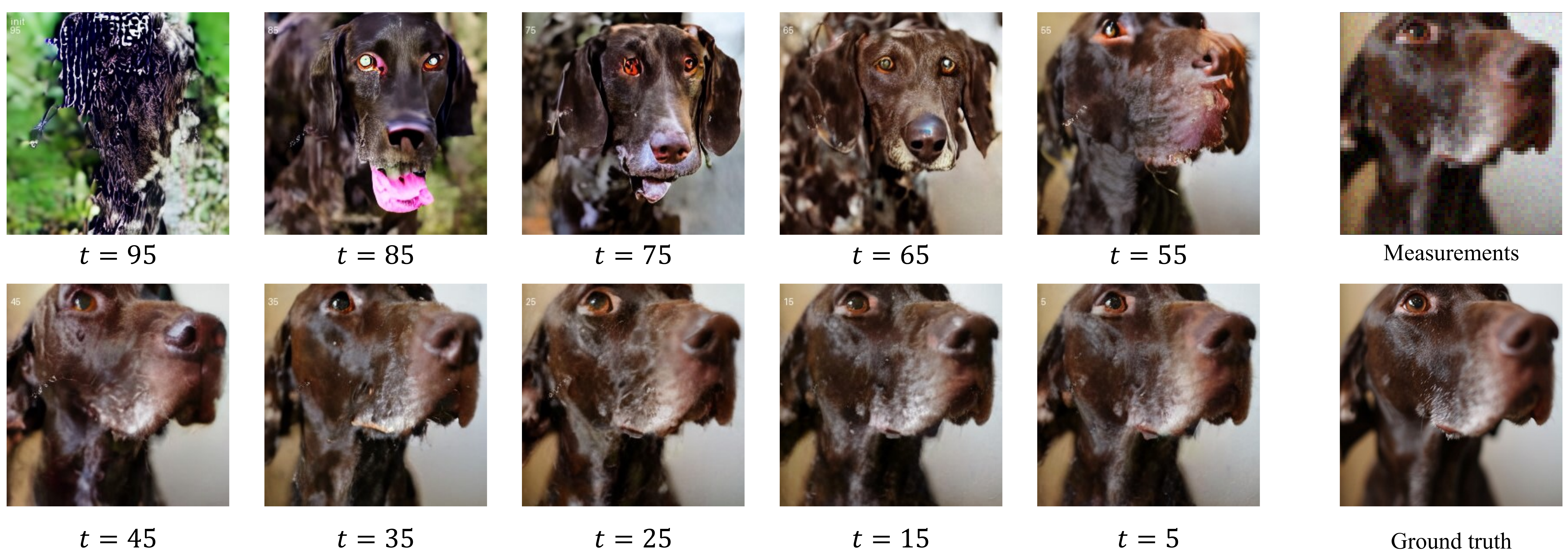}
    \caption{Images sampled from the prior model $\tilde{p}_{\theta}(\mathbf{z}_{0}|\mathbf{z}_{t})$ using intermediate $\mathbf{z}_{t}$ during the process of G2D2 in image inverse problem solving. The progression demonstrates how initial structural errors are gradually corrected as the sampling proceeds in G2D2.}
    \label{fig:intermediate_samples_G2D2}
\end{figure}

\subsection{Experimental setup}
We evaluate G2D2 on inverse problems in the image processing and compare it with other diffusion model-based inverse-problem-solving methods. We also demonstrate gradient-based guidance on a discrete-latent variable-based motion-domain generative model without additional training, showing the applicability of G2D2 to other domains.

\paragraph{Image inverse problems and evaluation metrics}
We conduct experiments on two tasks: (1) super-resolution (SR) and (2) Gaussian deblurring. For the SR task, the linear forward operator downscales the image by a factor of 4 using a bicubic resizer. For the Gaussian deblurring task, we set the kernel size to $61\times61$ with a Gaussian kernel standard deviation of 3.0. The measurements are obtained by applying the forward operator to the ground truth images normalized to the range $[-1, 1]$, followed by the addition of Gaussian noise with a standard deviation of $0.05$. As metrics, we use the learned perceptual image patch similarity (LPIPS)~\citep{zhang2018unreasonable} score to measure perceptual proximity to the original image, and the peak signal-to-noise ratio (PSNR) to measure the closeness of the signal.

\paragraph{Datasets}
Following previous studies, we use the ImageNet~\citep{deng2009imagenet} and Flickr-Faces-HQ (FFHQ)~\citep{karras2019style} datasets. The images are 256$\times$256. For comparison, we use a subset of \tmlrrebuttal{1000} images from each validation set. 

\paragraph{Baselines}
We compare DPS~\citep{chung2023diffusion}, DDRM~\citep{kawar2022denoising}, which use diffusion models trained in the pixel domain, and PSLD~\citep{rout2023solving} and ReSample~\citep{song2024solving}, which use diffusion models trained in the latent space acquired from VAE (latent diffusion models) as baselines with G2D2.

\paragraph{Implementation details}

Regarding G2D2, for both the ImageNet and FFHQ experiments, we use a pre-trained VQ-Diffusion model~\footnote{\url{https://huggingface.co/microsoft/vq-diffusion-ithq}} that is trained on the ITHQ dataset~\citep{tang2022improved}. In all experiments, we optimize the parameters $\bm{\alpha}_{t}$ of the variational categorical distribution within the G2D2 algorithm's optimization step using the RAdam optimizer~\citep{Liu2020On}. To balance the prior and likelihood terms in the objective function, we introduce hyperparameters. For the image inverse problem experiments, we used text prompts for the VQ-Diffusion model: ``\texttt{a photo of [Class Name]}'' for ImageNet and ``\texttt{a high-quality headshot of a person}'' for FFHQ. All experiments are performed on one RTX~3090 (24\,GiB), but G2D2 itself never exceeded 4.7\,GiB of VRAM and required 194\,s per ImageNet image (Table~\ref{tab:memory_speed} in the Appendix).  Hence the full pipeline can be executed on widely available 8\,GiB cards.

Details of the experiments and comparison methods are provided in the Appendix.

\subsection{Image inverse problem solving on ImageNet and FFHQ}
Figure~\ref{fig:qualitative_inv_prob} shows the qualitative results of image inverse problem solving, and Tables~\ref{tab:quantitative_evaluation_imagenet} and~\ref{tab:quantitative_evaluation_ffhq} list the quantitative results. 
As shown in Tables~\ref{tab:quantitative_evaluation_imagenet} and~\ref{tab:quantitative_evaluation_ffhq}, G2D2 performs comparably to the methods using diffusion models trained in continuous domains. Notably, G2D2 achieves this performance while consuming significantly lower GPU memory (4.7GiB compared to 10.7GiB for DPS and 20.9GiB for PSLD, as detailed in Table~\ref{tab:memory_speed} in the Appendix) and maintaining competitive computational speed among gradient-based methods. Note that the pre-trained models used for each method are different, which particularly contributes to the superiority of pixel-domain methods on FFHQ. With DDRM, it is assumed that the amount of measurement noise is known, and it requires the singular value decomposition of the linear operator. We also show images in the intermediate steps of the G2D2 algorithm in Figure~\ref{fig:intermediate_samples_G2D2}.

\begin{table}[thb]
\centering
\caption{\tmlrrebuttal{Quantitative evaluation on ImageNet 256$\times$256. Performance comparison of different methods on various linear tasks in image domain. Values show the mean over 1000 images.}}
\label{tab:quantitative_evaluation_imagenet}
\begin{tabular}{llcccc}
\hline
\multirow{2}{*}{Prior Type} & \multirow{2}{*}{Method} & \multicolumn{2}{c}{SR ($\times$4)} & \multicolumn{2}{c}{Gaussian Deblurring} \\
\cline{3-6}
 & & LPIPS($\downarrow$) & PSNR($\uparrow$) & LPIPS($\downarrow$) & PSNR($\uparrow$) \\
\hline
\multirow{2}{*}{Pixel-domain} & DPS~\citep{chung2023diffusion} & 0.362 & 22.67 & 0.432 & 19.49 \\
 & DDRM~\citep{kawar2022denoising} & 0.351 & 24.27 & 0.250 & 27.61 \\
\hline
\multirow{2}{*}{LDM} & PSLD~\citep{rout2023solving} & 0.331 & 24.02 & 0.359 & 23.95 \\
 & ReSample~\citep{song2024solving} & 0.373 & 23.31 & 0.425 & 23.07 \\
\hline
\multirow{2}{*}{Discrete}& G2D2 (proposed) & 0.340 & 23.82 & 0.367 & 23.37 \\
& G2D2 w/ Markov noise process & 0.442 & 22.01 & 0.424 & 22.36 \\
\hline
\end{tabular}
\end{table}

\begin{figure}[thb]
    \centering
    \includegraphics[width=\linewidth]{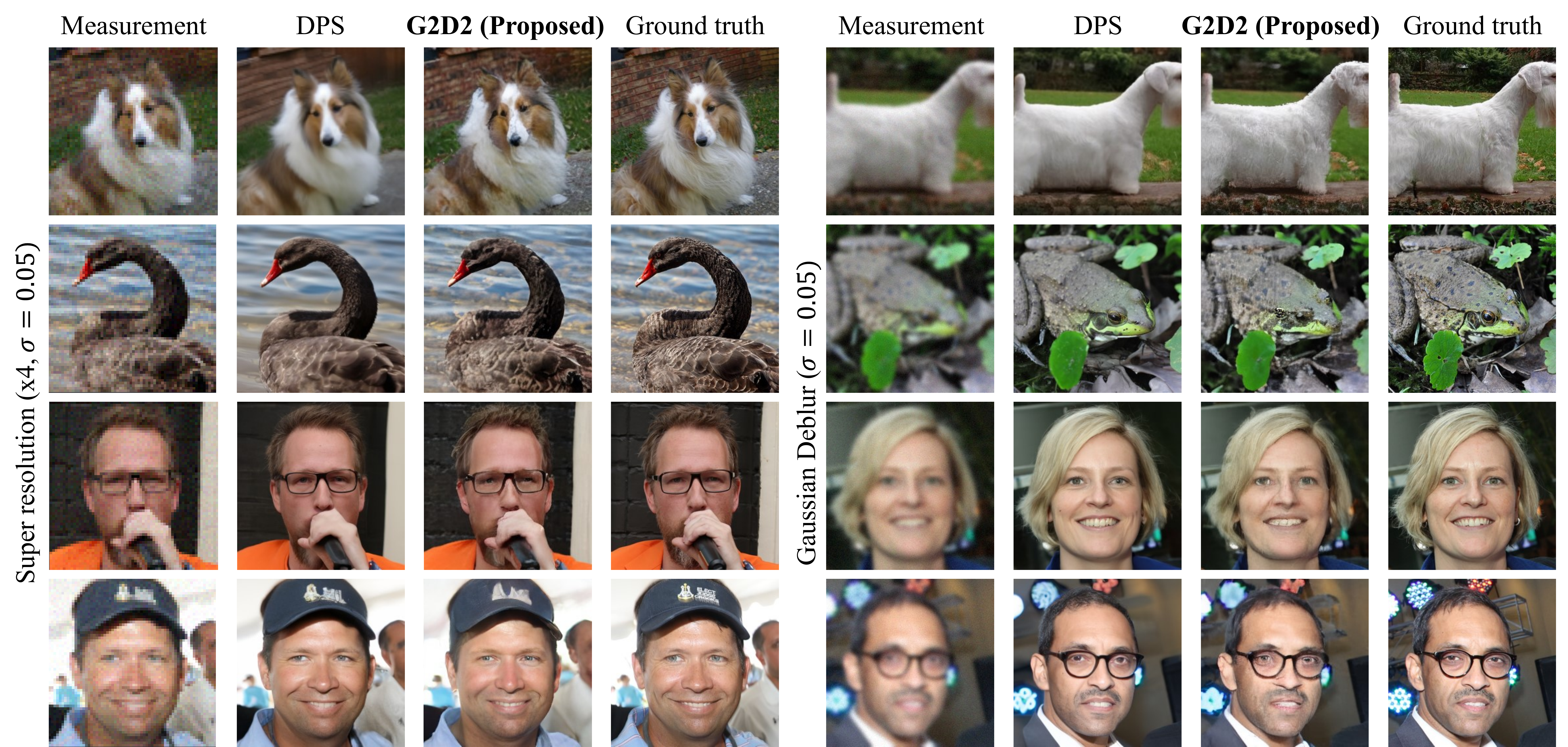}
    \caption{Qualitative results of G2D2 and DPS.}
    \label{fig:qualitative_inv_prob}
\end{figure}

\begin{table}[thb]
\centering
\caption{\tmlrrebuttal{Quantitative evaluation on FFHQ 256$\times$256. Performance comparison of different methods on various linear tasks in image domain. Values show the mean over 1000 images.}}
\label{tab:quantitative_evaluation_ffhq}
\begin{tabular}{llcccc}
\hline
\multirow{2}{*}{Prior Type} & \multirow{2}{*}{Method} & \multicolumn{2}{c}{SR ($\times$4)} & \multicolumn{2}{c}{Gaussian Deblurring} \\
\cline{3-6}
 & & LPIPS($\downarrow$) & PSNR($\uparrow$) & LPIPS($\downarrow$) & PSNR($\uparrow$) \\
\hline
\multirow{2}{*}{Pixel-domain} & DPS~\citep{chung2023diffusion} & 0.238 & 26.07 & 0.234 & 25.47 \\
 & DDRM~\citep{kawar2022denoising} & 0.252 & 28.09 & 0.209 & 30.89 \\
\hline
\multirow{2}{*}{LDM} & PSLD~\citep{rout2023solving} & 0.282 & 27.12 & 0.307 & 26.96 \\
 & ReSample~\citep{song2024solving} & 0.508 & 23.07 & 0.336 & 25.91 \\
\hline
\multirow{2}{*}{Discrete}& G2D2 (proposed) & 0.265 & 27.29 & 0.280 & 26.91 \\
& G2D2 w/ Markov noise process& 0.369 & 25.15 & 0.340 & 25.96 \\
\hline
\end{tabular}
\end{table}

\subsection{Ablation study on Graphical Models}
\label{sec:ablation_GM}
It is possible to derive a similar algorithm to G2D2 that uses a Markov noise process as the graphical model. However, as discussed at the beginning of Section~\ref{sec:G2D2}, this graphical model does not allow for the ``re-masking'' operation, which means it cannot correct errors that occur early in the sampling process. We refer to this variant as \textbf{G2D2 w/ Markov noise process}, and its performance is presented in Tables~\ref{tab:quantitative_evaluation_imagenet} and~\ref{tab:quantitative_evaluation_ffhq} on ImageNet and FFHQ, respectively. Additional qualitative results are provided in the Appendix~\ref{append:G2D2_with_Markov}. The results indicate that the introduction of the star-shaped noise process significantly improves performance, making G2D2 comparable to methods based on continuous diffusion.

\subsection{Motion inverse problem solving}
As discussed in Section~\ref{sec:application_to_MGM}, our method can also be applied to masked generative models. We conduct experiments to manipulate the Generative Masked Motion Model (MMM)~\citep{pinyoanuntapong2024mmm} using gradient guidance for a \textbf{path-following task} where generation is conditioned on hip joint position information. Since joint positions can be calculated from motion data, this fits within the inverse problems framework. While path following has been achieved with continuous latent space models~\citep{song2023loss, uchida2024mola}, we are the first to accomplish this using a discrete latent variable motion model without additional training. The Appendix~\ref{append:invese_motion} provides additional samples and experimental details.

We compare G2D2 with OmniControl~\citep{xie2024omnicontrol} and Guided Motion Diffusion (GMD)~\citep{karunratanakul2023guided} on controllable motion generation. Following OmniControl's setup, we evaluate using the HumanML3D test set with sparse conditioning (5 frames out of 196) on metrics including: FID, R-Precision, Diversity, Foot Skating ratio, Trajectory error (50cm), Location error (50cm), and Average error.

Table~\ref{tab:motion_generation_comparison} presents the results. While OmniControl and GMD are specifically fine-tuned for this task, G2D2 requires no additional training yet achieves good FID and foot skating scores. However, OmniControl and GMD demonstrate better trajectory and location errors, indicating room for improvement. Note that G2D2 could potentially be combined with fine-tuned approaches (as OmniControl does) and enhanced with techniques like FreeDoM's time-traveling method~\citep{yu2023freedom} to further improve performance.

\begin{table}[htb]
\centering
\small
\begin{tabular}{@{}l@{\hspace{4pt}}c@{\hspace{4pt}}c@{\hspace{4pt}}c@{\hspace{4pt}}c@{\hspace{4pt}}c@{\hspace{4pt}}c@{\hspace{4pt}}c@{}}
\hline
Method & FID & R-prec. & Diversity & Foot & Traj. Err & Loc. err. & Avg. \\
 & ($\downarrow$) & ($\uparrow$) & (9.503$\rightarrow$) & skating ($\downarrow$) & (50cm, $\downarrow$) & (50cm, $\downarrow$) & err. ($\downarrow$)\\
\hline
G2D2 & 0.248 & 0.770 & 9.381 & 0.048 & 0.272 & 0.116 & 0.230 \\
OmniControl~\citep{xie2024omnicontrol} & 0.278 & 0.705 & 9.582 & 0.058 & 0.053 & 0.015 & 0.043 \\
GMD~\citep{karunratanakul2023guided} & 0.523 & 0.599 & N/A & 0.086 & 0.176 & 0.049 & 0.139 \\
\hline
\end{tabular}
\caption{Comparison of methods for controllable motion generation.}
\label{tab:motion_generation_comparison}
\end{table}

\section{Conclusion}
We proposed G2D2 for solving inverse problems using discrete diffusion models as priors. We demonstrated that G2D2 effectively addresses the limitations of discrete diffusion in inverse problem-solving by using a continuous relaxation technique and star-shaped noise process. Specifically, G2D2 approximates the posterior in inverse problems by optimizing the parameters of a variational distribution, composed of parameterized categorical distributions, at each time step of the diffusion process. Our experiments show that G2D2 performs comparably to its continuous counterparts, opening up possibilities for training-free applications of discrete diffusion models across a wide range of tasks. 

\paragraph{Limitations and future work}
While G2D2 already matches the image quality of continuous counterparts, its key advantage is an order‑of‑magnitude reduction in GPU memory footprint, enabling inference on consumer‑grade 8~GiB cards. Future work will focus on further accelerating sampling speed. We expect these gaps will narrow through efficiency optimizations and stronger prior models. Future work will also explore more challenging settings, such as nonlinear inverse problems, and extend G2D2 to additional modalities, including audio and video.

\paragraph{Acknowledgments}
We are grateful to our colleague Satoshi Hayakawa for providing valuable feedback prior to submission, and we also thank the area chair and the reviewers for their constructive comments.
\newpage

\bibliography{str_def_abrv, tmlr_refs}

\begin{thebibliography}{67}
\providecommand{\natexlab}[1]{#1}
\providecommand{\url}[1]{\texttt{#1}}
\expandafter\ifx\csname urlstyle\endcsname\relax
  \providecommand{\doi}[1]{doi: #1}\else
  \providecommand{\doi}{doi: \begingroup \urlstyle{rm}\Url}\fi

\bibitem[Austin et~al.(2021)Austin, Johnson, Ho, Tarlow, and Van Den~Berg]{austin2021structured}
Jacob Austin, Daniel~D Johnson, Jonathan Ho, Daniel Tarlow, and Rianne Van Den~Berg.
\newblock Structured denoising diffusion models in discrete state-spaces.
\newblock \emph{Advances in Neural Information Processing Systems}, 34:\penalty0 17981--17993, 2021.

\bibitem[Bansal et~al.(2024)Bansal, Chu, Schwarzschild, Sengupta, Goldblum, Geiping, and Goldstein]{bansal2024universal}
Arpit Bansal, Hong-Min Chu, Avi Schwarzschild, Soumyadip Sengupta, Micah Goldblum, Jonas Geiping, and Tom Goldstein.
\newblock Universal guidance for diffusion models.
\newblock In \emph{The Twelfth International Conference on Learning Representations}, 2024.

\bibitem[Bian et~al.(2024)Bian, Jang, Zhang, Yang, Stewart, and Liu]{bian2024diffusion}
Wanyu Bian, Albert Jang, Liping Zhang, Xiaonan Yang, Zachary Stewart, and Fang Liu.
\newblock Diffusion modeling with domain-conditioned prior guidance for accelerated mri and qmri reconstruction.
\newblock \emph{IEEE Transactions on Medical Imaging}, 2024.

\bibitem[Bi{\'n}kowski et~al.(2018)Bi{\'n}kowski, Sutherland, Arbel, and Gretton]{binkowski2018demystifying}
Miko{\l}aj Bi{\'n}kowski, Danica~J Sutherland, Michael Arbel, and Arthur Gretton.
\newblock Demystifying mmd gans.
\newblock In \emph{Proc. International Conference on Learning Representation (ICLR)}, 2018.

\bibitem[Campbell et~al.(2022)Campbell, Benton, De~Bortoli, Rainforth, Deligiannidis, and Doucet]{campbell2022continuous}
Andrew Campbell, Joe Benton, Valentin De~Bortoli, Thomas Rainforth, George Deligiannidis, and Arnaud Doucet.
\newblock A continuous time framework for discrete denoising models.
\newblock \emph{Advances in Neural Information Processing Systems}, 35:\penalty0 28266--28279, 2022.

\bibitem[Cemri et~al.(2024)Cemri, Jalal, and Ramchandran]{cemri2024discrete}
Mert Cemri, Ajil Jalal, and Kannan Ramchandran.
\newblock Discrete diffusion posterior sampling for protein design.
\newblock In \emph{ICML 2024 Workshop on Structured Probabilistic Inference \& Generative Modeling}, 2024.

\bibitem[Chang et~al.(2022)Chang, Zhang, Jiang, Liu, and Freeman]{chang2022maskgit}
Huiwen Chang, Han Zhang, Lu~Jiang, Ce~Liu, and William~T Freeman.
\newblock Maskgit: Masked generative image transformer.
\newblock In \emph{Proceedings of the IEEE/CVF Conference on Computer Vision and Pattern Recognition}, pp.\  11315--11325, 2022.

\bibitem[Chen et~al.(2024{\natexlab{a}})Chen, Wu, Liu, Nezhurina, Berg-Kirkpatrick, and Dubnov]{chen2024musicldm}
Ke~Chen, Yusong Wu, Haohe Liu, Marianna Nezhurina, Taylor Berg-Kirkpatrick, and Shlomo Dubnov.
\newblock Musicldm: Enhancing novelty in text-to-music generation using beat-synchronous mixup strategies.
\newblock In \emph{ICASSP 2024-2024 IEEE International Conference on Acoustics, Speech and Signal Processing (ICASSP)}, pp.\  1206--1210. IEEE, 2024{\natexlab{a}}.

\bibitem[Chen et~al.(2024{\natexlab{b}})Chen, Han, Li, Kou, Halperin, Tillman, and Gu]{chen2024guided}
Zixiang Chen, Jun Han, Yongqian Li, Yiwen Kou, Eran Halperin, Robert~E Tillman, and Quanquan Gu.
\newblock Guided discrete diffusion for electronic health record generation.
\newblock \emph{arXiv preprint arXiv:2404.12314}, 2024{\natexlab{b}}.

\bibitem[Choi et~al.(2021)Choi, Kim, Jeong, Gwon, and Yoon]{choi2021ilvr}
Jooyoung Choi, Sungwon Kim, Yonghyun Jeong, Youngjune Gwon, and Sungroh Yoon.
\newblock {ILVR}: Conditioning method for denoising diffusion probabilistic models.
\newblock In \emph{Proc. IEEE International Conference on Computer Vision (ICCV)}, pp.\  14347--14356, 2021.

\bibitem[Chung et~al.(2023{\natexlab{a}})Chung, Kim, Kim, and Ye]{chung2023parallel}
Hyungjin Chung, Jeongsol Kim, Sehui Kim, and Jong~Chul Ye.
\newblock Parallel diffusion models of operator and image for blind inverse problems.
\newblock In \emph{Proc. IEEE Conference on Computer Vision and Pattern Recognition (CVPR)}, 2023{\natexlab{a}}.

\bibitem[Chung et~al.(2023{\natexlab{b}})Chung, Kim, Mccann, Klasky, and Ye]{chung2023diffusion}
Hyungjin Chung, Jeongsol Kim, Michael~T Mccann, Marc~L Klasky, and Jong~Chul Ye.
\newblock Diffusion posterior sampling for general noisy inverse problems.
\newblock In \emph{Proc. International Conference on Learning Representation (ICLR)}, 2023{\natexlab{b}}.

\bibitem[Chung et~al.(2023{\natexlab{c}})Chung, Ryu, McCann, Klasky, and Ye]{chung2023solving}
Hyungjin Chung, Dohoon Ryu, Michael~T McCann, Marc~L Klasky, and Jong~Chul Ye.
\newblock Solving 3d inverse problems using pre-trained 2d diffusion models.
\newblock In \emph{Proceedings of the IEEE/CVF Conference on Computer Vision and Pattern Recognition}, pp.\  22542--22551, 2023{\natexlab{c}}.

\bibitem[Chung et~al.(2024)Chung, Ye, Milanfar, and Delbracio]{chung2024prompttuning}
Hyungjin Chung, Jong~Chul Ye, Peyman Milanfar, and Mauricio Delbracio.
\newblock Prompt-tuning latent diffusion models for inverse problems.
\newblock In \emph{Forty-first International Conference on Machine Learning}, 2024.
\newblock URL \url{https://openreview.net/forum?id=hrwIndai8e}.

\bibitem[Deng et~al.(2009)Deng, Dong, Socher, Li, Li, and Fei-Fei]{deng2009imagenet}
Jia Deng, Wei Dong, Richard Socher, Li-Jia Li, Kai Li, and Li~Fei-Fei.
\newblock Imagenet: A large-scale hierarchical image database.
\newblock In \emph{2009 IEEE conference on computer vision and pattern recognition}, pp.\  248--255. IEEE, 2009.

\bibitem[Dhariwal \& Nichol(2021)Dhariwal and Nichol]{dhariwal2021diffusion}
Prafulla Dhariwal and Alexander Nichol.
\newblock Diffusion models beat {GAN}s on image synthesis.
\newblock In \emph{Proc. Advances in Neural Information Processing Systems (NeurIPS)}, volume~34, pp.\  8780--8794, 2021.

\bibitem[Esser et~al.(2024)Esser, Kulal, Blattmann, Entezari, M{\"u}ller, Saini, Levi, Lorenz, Sauer, Boesel, et~al.]{esser2024scaling}
Patrick Esser, Sumith Kulal, Andreas Blattmann, Rahim Entezari, Jonas M{\"u}ller, Harry Saini, Yam Levi, Dominik Lorenz, Axel Sauer, Frederic Boesel, et~al.
\newblock Scaling rectified flow transformers for high-resolution image synthesis.
\newblock In \emph{Forty-first International Conference on Machine Learning}, 2024.

\bibitem[Gruver et~al.(2024)Gruver, Stanton, Frey, Rudner, Hotzel, Lafrance-Vanasse, Rajpal, Cho, and Wilson]{gruver2024protein}
Nate Gruver, Samuel Stanton, Nathan Frey, Tim~GJ Rudner, Isidro Hotzel, Julien Lafrance-Vanasse, Arvind Rajpal, Kyunghyun Cho, and Andrew~G Wilson.
\newblock Protein design with guided discrete diffusion.
\newblock \emph{Advances in neural information processing systems}, 36, 2024.

\bibitem[Gu et~al.(2022)Gu, Chen, Bao, Wen, Zhang, Chen, Yuan, and Guo]{gu2022vector}
Shuyang Gu, Dong Chen, Jianmin Bao, Fang Wen, Bo~Zhang, Dongdong Chen, Lu~Yuan, and Baining Guo.
\newblock Vector quantized diffusion model for text-to-image synthesis.
\newblock In \emph{Proceedings of the IEEE/CVF conference on computer vision and pattern recognition}, pp.\  10696--10706, 2022.

\bibitem[Han \& Kim(2024)Han and Kim]{han2024clip}
Seungdae Han and Joohee Kim.
\newblock {CLIP-VQD}iffusion: Langauge free training of text to image generation using {CLIP} and vector quantized diffusion model.
\newblock \emph{arXiv preprint arXiv:2403.14944}, 2024.

\bibitem[He et~al.(2024)He, Murata, Lai, Takida, Uesaka, Kim, Liao, Mitsufuji, Kolter, Salakhutdinov, and Ermon]{he2024manifold}
Yutong He, Naoki Murata, Chieh-Hsin Lai, Yuhta Takida, Toshimitsu Uesaka, Dongjun Kim, Wei-Hsiang Liao, Yuki Mitsufuji, J~Zico Kolter, Ruslan Salakhutdinov, and Stefano Ermon.
\newblock Manifold preserving guided diffusion.
\newblock In \emph{The Twelfth International Conference on Learning Representations}, 2024.

\bibitem[Heusel et~al.(2017)Heusel, Ramsauer, Unterthiner, Nessler, and Hochreiter]{heusel2017gans}
Martin Heusel, Hubert Ramsauer, Thomas Unterthiner, Bernhard Nessler, and Sepp Hochreiter.
\newblock {GAN}s trained by a two time-scale update rule converge to a local {N}ash equilibrium.
\newblock In \emph{Proc. Advances in Neural Information Processing Systems (NeurIPS)}, pp.\  6629--6640, 2017.

\bibitem[Ho et~al.(2020)Ho, Jain, and Abbeel]{ho2020denoising}
Jonathan Ho, Ajay Jain, and Pieter Abbeel.
\newblock Denoising diffusion probabilistic models.
\newblock \emph{Proc. Advances in Neural Information Processing Systems (NeurIPS)}, 33:\penalty0 6840--6851, 2020.

\bibitem[Ho et~al.(2022{\natexlab{a}})Ho, Chan, Saharia, Whang, Gao, Gritsenko, Kingma, Poole, Norouzi, Fleet, et~al.]{ho2022imagen}
Jonathan Ho, William Chan, Chitwan Saharia, Jay Whang, Ruiqi Gao, Alexey Gritsenko, Diederik~P Kingma, Ben Poole, Mohammad Norouzi, David~J Fleet, et~al.
\newblock Imagen video: High definition video generation with diffusion models.
\newblock \emph{arXiv preprint arXiv:2210.02303}, 2022{\natexlab{a}}.

\bibitem[Ho et~al.(2022{\natexlab{b}})Ho, Salimans, Gritsenko, Chan, Norouzi, and Fleet]{ho2022video}
Jonathan Ho, Tim Salimans, Alexey Gritsenko, William Chan, Mohammad Norouzi, and David~J Fleet.
\newblock Video diffusion models.
\newblock \emph{Advances in Neural Information Processing Systems}, 35:\penalty0 8633--8646, 2022{\natexlab{b}}.

\bibitem[Jang et~al.(2016)Jang, Gu, and Poole]{jang2016categorical}
Eric Jang, Shixiang Gu, and Ben Poole.
\newblock Categorical reparameterization with gumbel-softmax.
\newblock \emph{arXiv preprint arXiv:1611.01144}, 2016.

\bibitem[Karras et~al.(2019)Karras, Laine, and Aila]{karras2019style}
Tero Karras, Samuli Laine, and Timo Aila.
\newblock A style-based generator architecture for generative adversarial networks.
\newblock In \emph{Proc. IEEE Conference on Computer Vision and Pattern Recognition (CVPR)}, pp.\  4401--4410, 2019.

\bibitem[Karunratanakul et~al.(2023)Karunratanakul, Preechakul, Suwajanakorn, and Tang]{karunratanakul2023guided}
Korrawe Karunratanakul, Konpat Preechakul, Supasorn Suwajanakorn, and Siyu Tang.
\newblock Guided motion diffusion for controllable human motion synthesis.
\newblock In \emph{Proceedings of the IEEE/CVF International Conference on Computer Vision (ICCV)}, pp.\  2151--2162, 2023.

\bibitem[Kawar et~al.(2022)Kawar, Elad, Ermon, and Song]{kawar2022denoising}
Bahjat Kawar, Michael Elad, Stefano Ermon, and Jiaming Song.
\newblock Denoising diffusion restoration models.
\newblock In \emph{Proc. Advances in Neural Information Processing Systems (NeurIPS)}, 2022.

\bibitem[Kingma(2014)]{kingma2014adam}
Diederik~P Kingma.
\newblock Adam: A method for stochastic optimization.
\newblock \emph{arXiv preprint arXiv:1412.6980}, 2014.

\bibitem[Klarner et~al.(2024)Klarner, Rudner, Morris, Deane, and Teh]{klarner2024context}
Leo Klarner, Tim~GJ Rudner, Garrett~M Morris, Charlotte~M Deane, and Yee~Whye Teh.
\newblock Context-guided diffusion for out-of-distribution molecular and protein design.
\newblock \emph{arXiv preprint arXiv:2407.11942}, 2024.

\bibitem[Lezama et~al.(2023)Lezama, Salimans, Jiang, Chang, Ho, and Essa]{lezama2022discrete}
Jose Lezama, Tim Salimans, Lu~Jiang, Huiwen Chang, Jonathan Ho, and Irfan Essa.
\newblock Discrete predictor-corrector diffusion models for image synthesis.
\newblock In \emph{The Eleventh International Conference on Learning Representations}, 2023.

\bibitem[Li et~al.(2024)Li, Zhao, Wang, Scalia, Eraslan, Nair, Biancalani, Regev, Levine, and Uehara]{li2024derivative}
Xiner Li, Yulai Zhao, Chenyu Wang, Gabriele Scalia, Gokcen Eraslan, Surag Nair, Tommaso Biancalani, Aviv Regev, Sergey Levine, and Masatoshi Uehara.
\newblock Derivative-free guidance in continuous and discrete diffusion models with soft value-based decoding.
\newblock \emph{arXiv preprint arXiv:2408.08252}, 2024.

\bibitem[Liu et~al.(2023)Liu, Chen, Yuan, Mei, Liu, Mandic, Wang, and Plumbley]{liu2023audioldm}
Haohe Liu, Zehua Chen, Yi~Yuan, Xinhao Mei, Xubo Liu, Danilo Mandic, Wenwu Wang, and Mark~D Plumbley.
\newblock Audio{LDM}: Text-to-audio generation with latent diffusion models.
\newblock In \emph{International Conference on Machine Learning}, pp.\  21450--21474. PMLR, 2023.

\bibitem[Liu et~al.(2020)Liu, Jiang, He, Chen, Liu, Gao, and Han]{Liu2020On}
Liyuan Liu, Haoming Jiang, Pengcheng He, Weizhu Chen, Xiaodong Liu, Jianfeng Gao, and Jiawei Han.
\newblock On the variance of the adaptive learning rate and beyond.
\newblock In \emph{International Conference on Learning Representations}, 2020.
\newblock URL \url{https://openreview.net/forum?id=rkgz2aEKDr}.

\bibitem[Lou et~al.(2023{\natexlab{a}})Lou, Meng, and Ermon]{lou2023discrete}
Aaron Lou, Chenlin Meng, and Stefano Ermon.
\newblock Discrete diffusion language modeling by estimating the ratios of the data distribution.
\newblock \emph{arXiv preprint arXiv:2310.16834}, 2023{\natexlab{a}}.

\bibitem[Lou et~al.(2023{\natexlab{b}})Lou, Zhu, Wang, Wang, and Yang]{lou2023diversemotion}
Yunhong Lou, Linchao Zhu, Yaxiong Wang, Xiaohan Wang, and Yi~Yang.
\newblock Diversemotion: Towards diverse human motion generation via discrete diffusion.
\newblock \emph{arXiv preprint arXiv:2309.01372}, 2023{\natexlab{b}}.

\bibitem[Lugmayr et~al.(2022)Lugmayr, Danelljan, Romero, Yu, Timofte, and Van~Gool]{lugmayr2022repaint}
Andreas Lugmayr, Martin Danelljan, Andres Romero, Fisher Yu, Radu Timofte, and Luc Van~Gool.
\newblock Repaint: Inpainting using denoising diffusion probabilistic models.
\newblock In \emph{Proceedings of the IEEE/CVF conference on computer vision and pattern recognition}, pp.\  11461--11471, 2022.

\bibitem[Maddison et~al.(2016)Maddison, Mnih, and Teh]{maddison2016concrete}
Chris~J Maddison, Andriy Mnih, and Yee~Whye Teh.
\newblock The concrete distribution: A continuous relaxation of discrete random variables.
\newblock \emph{arXiv preprint arXiv:1611.00712}, 2016.

\bibitem[Mardani et~al.(2024)Mardani, Song, Kautz, and Vahdat]{mardani2024variational}
Morteza Mardani, Jiaming Song, Jan Kautz, and Arash Vahdat.
\newblock A variational perspective on solving inverse problems with diffusion models.
\newblock In \emph{The Twelfth International Conference on Learning Representations}, 2024.

\bibitem[Nisonoff et~al.(2024)Nisonoff, Xiong, Allenspach, and Listgarten]{nisonoff2024unlocking}
Hunter Nisonoff, Junhao Xiong, Stephan Allenspach, and Jennifer Listgarten.
\newblock Unlocking guidance for discrete state-space diffusion and flow models.
\newblock \emph{arXiv preprint arXiv:2406.01572}, 2024.

\bibitem[Okhotin et~al.(2024)Okhotin, Molchanov, Vladimir, Bartosh, Ohanesian, Alanov, and Vetrov]{okhotin2024star}
Andrey Okhotin, Dmitry Molchanov, Arkhipkin Vladimir, Grigory Bartosh, Viktor Ohanesian, Aibek Alanov, and Dmitry~P Vetrov.
\newblock Star-shaped denoising diffusion probabilistic models.
\newblock \emph{Advances in Neural Information Processing Systems}, 36, 2024.

\bibitem[Pan et~al.(2021)Pan, Zhan, Dai, Lin, Loy, and Luo]{pan2021exploiting}
Xingang Pan, Xiaohang Zhan, Bo~Dai, Dahua Lin, Chen~Change Loy, and Ping Luo.
\newblock Exploiting deep generative prior for versatile image restoration and manipulation.
\newblock \emph{IEEE Transactions on Pattern Analysis and Machine Intelligence}, 44\penalty0 (11):\penalty0 7474--7489, 2021.

\bibitem[Pinyoanuntapong et~al.(2024)Pinyoanuntapong, Wang, Lee, and Chen]{pinyoanuntapong2024mmm}
Ekkasit Pinyoanuntapong, Pu~Wang, Minwoo Lee, and Chen Chen.
\newblock Mmm: Generative masked motion model.
\newblock In \emph{Proceedings of the IEEE/CVF Conference on Computer Vision and Pattern Recognition}, pp.\  1546--1555, 2024.

\bibitem[Rombach et~al.(2022)Rombach, Blattmann, Lorenz, Esser, and Ommer]{rombach2022high}
Robin Rombach, Andreas Blattmann, Dominik Lorenz, Patrick Esser, and Bj{\"o}rn Ommer.
\newblock High-resolution image synthesis with latent diffusion models.
\newblock In \emph{Proceedings of the IEEE/CVF conference on computer vision and pattern recognition}, pp.\  10684--10695, 2022.

\bibitem[Rout et~al.(2023)Rout, Raoof, Daras, Caramanis, Dimakis, and Shakkottai]{rout2023solving}
Litu Rout, Negin Raoof, Giannis Daras, Constantine Caramanis, Alex Dimakis, and Sanjay Shakkottai.
\newblock Solving linear inverse problems provably via posterior sampling with latent diffusion models.
\newblock In \emph{Thirty-seventh Conference on Neural Information Processing Systems}, 2023.

\bibitem[Singhal et~al.(2025)Singhal, Horvitz, Teehan, Ren, Yu, McKeown, and Ranganath]{singhal2025general}
Raghav Singhal, Zachary Horvitz, Ryan Teehan, Mengye Ren, Zhou Yu, Kathleen McKeown, and Rajesh Ranganath.
\newblock A general framework for inference-time scaling and steering of diffusion models.
\newblock \emph{arXiv preprint arXiv:2501.06848}, 2025.

\bibitem[Sohl-Dickstein et~al.(2015)Sohl-Dickstein, Weiss, Maheswaranathan, and Ganguli]{SohlDickstein2015DeepUL}
Jascha~Narain Sohl-Dickstein, Eric~A. Weiss, Niru Maheswaranathan, and Surya Ganguli.
\newblock Deep unsupervised learning using nonequilibrium thermodynamics.
\newblock \emph{ArXiv}, abs/1503.03585, 2015.

\bibitem[Song et~al.(2024)Song, Kwon, Zhang, Hu, Qu, and Shen]{song2024solving}
Bowen Song, Soo~Min Kwon, Zecheng Zhang, Xinyu Hu, Qing Qu, and Liyue Shen.
\newblock Solving inverse problems with latent diffusion models via hard data consistency.
\newblock In \emph{The Twelfth International Conference on Learning Representations}, 2024.

\bibitem[Song et~al.(2023{\natexlab{a}})Song, Vahdat, Mardani, and Kautz]{song2023pseudoinverse}
Jiaming Song, Arash Vahdat, Morteza Mardani, and Jan Kautz.
\newblock Pseudoinverse-guided diffusion models for inverse problems.
\newblock In \emph{International Conference on Learning Representations}, 2023{\natexlab{a}}.

\bibitem[Song et~al.(2023{\natexlab{b}})Song, Zhang, Yin, Mardani, Liu, Kautz, Chen, and Vahdat]{song2023loss}
Jiaming Song, Qinsheng Zhang, Hongxu Yin, Morteza Mardani, Ming-Yu Liu, Jan Kautz, Yongxin Chen, and Arash Vahdat.
\newblock Loss-guided diffusion models for plug-and-play controllable generation.
\newblock In \emph{International Conference on Machine Learning}, pp.\  32483--32498. PMLR, 2023{\natexlab{b}}.

\bibitem[Song et~al.(2021{\natexlab{a}})Song, Shen, Xing, and Ermon]{song2021solving}
Yang Song, Liyue Shen, Lei Xing, and Stefano Ermon.
\newblock Solving inverse problems in medical imaging with score-based generative models.
\newblock In \emph{NeurIPS 2021 Workshop on Deep Learning and Inverse Problems}, 2021{\natexlab{a}}.

\bibitem[Song et~al.(2021{\natexlab{b}})Song, Sohl-Dickstein, Kingma, Kumar, Ermon, and Poole]{song2021scorebased}
Yang Song, Jascha Sohl-Dickstein, Diederik~P Kingma, Abhishek Kumar, Stefano Ermon, and Ben Poole.
\newblock Score-based generative modeling through stochastic differential equations.
\newblock In \emph{Proc. International Conference on Learning Representation (ICLR)}, 2021{\natexlab{b}}.

\bibitem[Tang et~al.(2022)Tang, Gu, Bao, Chen, and Wen]{tang2022improved}
Zhicong Tang, Shuyang Gu, Jianmin Bao, Dong Chen, and Fang Wen.
\newblock Improved vector quantized diffusion models.
\newblock \emph{arXiv preprint arXiv:2205.16007}, 2022.

\bibitem[Uchida et~al.(2024)Uchida, Shibuya, Takida, Murata, Takahashi, and Mitsufuji]{uchida2024mola}
Kengo Uchida, Takashi Shibuya, Yuhta Takida, Naoki Murata, Shusuke Takahashi, and Yuki Mitsufuji.
\newblock Mola: Motion generation and editing with latent diffusion enhanced by adversarial training.
\newblock \emph{arXiv preprint arXiv:2406.01867}, 2024.

\bibitem[van~den Oord et~al.(2017)van~den Oord, Vinyals, et~al.]{van2017neural}
Aaron van~den Oord, Oriol Vinyals, et~al.
\newblock Neural discrete representation learning.
\newblock \emph{Advances in neural information processing systems}, 30, 2017.

\bibitem[Vignac et~al.(2023)Vignac, Krawczuk, Siraudin, Wang, Cevher, and Frossard]{vignac2023digress}
Clement Vignac, Igor Krawczuk, Antoine Siraudin, Bohan Wang, Volkan Cevher, and Pascal Frossard.
\newblock Digress: Discrete denoising diffusion for graph generation.
\newblock In \emph{The Eleventh International Conference on Learning Representations}, 2023.

\bibitem[Wang et~al.(2023)Wang, Yu, and Zhang]{wang2022zero}
Yinhuai Wang, Jiwen Yu, and Jian Zhang.
\newblock Zero-shot image restoration using denoising diffusion null-space model.
\newblock \emph{The Eleventh International Conference on Learning Representations}, 2023.

\bibitem[Wang et~al.(2004)Wang, Bovik, Sheikh, and Simoncelli]{wang2004image}
Zhou Wang, Alan~C Bovik, Hamid~R Sheikh, and Eero~P Simoncelli.
\newblock Image quality assessment: from error visibility to structural similarity.
\newblock \emph{IEEE transactions on image processing}, 13\penalty0 (4):\penalty0 600--612, 2004.

\bibitem[Xie et~al.(2024)Xie, Jampani, Zhong, Sun, and Jiang]{xie2024omnicontrol}
Yiming Xie, Varun Jampani, Lei Zhong, Deqing Sun, and Huaizu Jiang.
\newblock Omnicontrol: Control any joint at any time for human motion generation.
\newblock In \emph{The Twelfth International Conference on Learning Representations}, 2024.
\newblock URL \url{https://openreview.net/forum?id=gd0lAEtWso}.

\bibitem[Yang et~al.(2023)Yang, Yu, Wang, Wang, Weng, Zou, and Yu]{yang2023diffsound}
Dongchao Yang, Jianwei Yu, Helin Wang, Wen Wang, Chao Weng, Yuexian Zou, and Dong Yu.
\newblock Diffsound: Discrete diffusion model for text-to-sound generation.
\newblock \emph{IEEE/ACM Transactions on Audio, Speech, and Language Processing}, 31:\penalty0 1720--1733, 2023.

\bibitem[Yu et~al.(2023)Yu, Wang, Zhao, Ghanem, and Zhang]{yu2023freedom}
Jiwen Yu, Yinhuai Wang, Chen Zhao, Bernard Ghanem, and Jian Zhang.
\newblock Freedom: Training-free energy-guided conditional diffusion model.
\newblock In \emph{Proceedings of the IEEE/CVF International Conference on Computer Vision}, pp.\  23174--23184, 2023.

\bibitem[Zhang et~al.(2024)Zhang, Chu, Berner, Meng, Anandkumar, and Song]{zhang2024improving}
Bingliang Zhang, Wenda Chu, Julius Berner, Chenlin Meng, Anima Anandkumar, and Yang Song.
\newblock Improving diffusion inverse problem solving with decoupled noise annealing.
\newblock \emph{arXiv preprint arXiv:2407.01521}, 2024.

\bibitem[Zhang et~al.(2018)Zhang, Isola, Efros, Shechtman, and Wang]{zhang2018unreasonable}
Richard Zhang, Phillip Isola, Alexei~A Efros, Eli Shechtman, and Oliver Wang.
\newblock The unreasonable effectiveness of deep features as a perceptual metric.
\newblock In \emph{Proc. IEEE Conference on Computer Vision and Pattern Recognition (CVPR)}, pp.\  586--595, 2018.

\bibitem[Zhao et~al.(2024)Zhao, Shi, Mackey, and Linderman]{zhao2024informed}
Yixiu Zhao, Jiaxin Shi, Lester Mackey, and Scott Linderman.
\newblock Informed correctors for discrete diffusion models.
\newblock \emph{arXiv preprint arXiv:2407.21243}, 2024.

\bibitem[Zheng et~al.(2024)Zheng, Chen, Mao, Liu, Zhu, and Zhang]{zheng2024masked}
Kaiwen Zheng, Yongxin Chen, Hanzi Mao, Ming-Yu Liu, Jun Zhu, and Qinsheng Zhang.
\newblock Masked diffusion models are secretly time-agnostic masked models and exploit inaccurate categorical sampling.
\newblock \emph{arXiv preprint arXiv:2409.02908}, 2024.

\bibitem[Zhu et~al.(2023)Zhu, Zhang, Liang, Cao, Wen, Timofte, and Van~Gool]{zhu2023denoising}
Yuanzhi Zhu, Kai Zhang, Jingyun Liang, Jiezhang Cao, Bihan Wen, Radu Timofte, and Luc Van~Gool.
\newblock Denoising diffusion models for plug-and-play image restoration.
\newblock In \emph{Proceedings of the IEEE/CVF Conference on Computer Vision and Pattern Recognition}, pp.\  1219--1229, 2023.

\end{thebibliography}
\bibliographystyle{tmlr}

\newpage
\appendix
\section{Ethics statement}
Our G2D2 method, which uses discrete diffusion models as priors for solving inverse problems, carries potential risks similar to those of previously proposed techniques in this field. We acknowledge that these methods, including ours, may inadvertently perpetuate biases present in training data or be misused for generating misleading or harmful content. We are committed to addressing these ethical concerns and promoting responsible use of our technology. We urge users of our method to exercise caution and consider the ethical implications of its applications.

\section{Reproducibility statement}
We provide a detailed description of the experimental reproduction in the Appendix, and our code is publicly available at \url{https://github.com/sony/g2d2}.

\section{Related work}
In this section, we review the relevant prior works. 
\subsection{Leveraging Diffusion Models as Prior Models for Inverse Problems}
\paragraph{Pixel-Domain Diffusion Models for Inverse Problems}

Several methods have been proposed that utilize pixel-domain diffusion models for solving inverse problems. DDRM and DDNM~\citep{kawar2022denoising, wang2022zero} assume linear operators and known noise levels, leveraging the singular value decomposition (SVD) of these operators. $\Pi$GDM~\citep{song2023pseudoinverse} can handle certain classes of non-linear operators, such as low dynamic range, where a pseudo-inverse operator can be defined. Notably, $\Pi$GDM does not require SVD or gradient computations for such a case.

DPS~\citep{chung2023diffusion} broadens the applicability to cases where operator gradients can be computed, enabling it to handle both linear and non-linear operators like phase retrieval and non-linear blur. Other notable methods in this category include RePaint~\citep{lugmayr2022repaint} and RED-Diff~\citep{mardani2024variational}.

\paragraph{Latent Diffusion Models for Inverse Problems}

Recent work has also explored the use of latent diffusion models for inverse problems. PSLD~\citep{rout2023solving} extends the ideas of DPS to latent diffusion models, demonstrating provable sample recovery for linear inverse problems. ReSample~\citep{song2024solving} achieves data consistency by solving an optimization problem at each step during sampling.

Of particular relevance to our work is DAPS~\citep{zhang2024improving}, which, like our approach, adopts a graphical model during sampling that differs from the one used during training of the prior model. This approach, known as the noise decoupling scheme, offers new possibilities for adapting diffusion models to various inverse problems.

\rebuttal{
\paragraph{Application of Inverse Problem Solving in Various Domains
}
Solving inverse problems using diffusion models has enabled various real-world applications. In the image domain, diffusion models have been extensively studied and applied to tasks such as image deblurring, super-resolution, and inpainting~\citep{lugmayr2022repaint, chung2023parallel, zhu2023denoising}. In the audio domain, methods such as those proposed by \citet{song2021solving}, \citet{chung2023solving}, and \citet{bian2024diffusion} have been developed to address tasks like dereverberation and audio restoration. Similarly, in the medical imaging domain, approaches like those introduced by \citet{song2021solving}, \citet{chung2023solving}, and \citet{bian2024diffusion} have been used to improve image reconstruction and enhance diagnostic accuracy. These advancements demonstrate the versatility and effectiveness of diffusion models across different domains.
}
\subsection{Conditional Generation Using Discrete Diffusion Models as Priors}
While our work focuses on inverse problems, it is important to consider related approaches in conditional generation tasks using discrete diffusion models as priors. These methods, primarily developed in the context of graph generation and protein design, introduce new conditioning to pre-trained models rather than directly addressing inverse problems.

The predominant strategy in this field involves training additional guidance networks. For instance, in protein sequence generation, LaMBO-2~\citep{gruver2024protein} and~\citet{cemri2024discrete} learn networks that evaluate how intermediate features of samples during generation achieve the desired objectives. Similarly, CGD~\citep{klarner2024context} learns a guidance model for corrupted data. Other examples requiring additional training include~\citet{nisonoff2024unlocking} and DiGress~\citep{vignac2023digress} for graph generation.

In contrast, \citet{chen2024guided} proposes a training-free approach to guide discrete diffusion models for generating Electronic Health Record data. This method employs Langevin dynamics sampling to minimize a given loss function by adjusting the parameters of the final layer of the prior model's transformer output. However, this approach faces scalability issues with models having large discrete latent spaces, such as VQ-Diffusion, as it requires evaluating all possible discrete states to compute the loss function. 

Another training-free method for guiding generative models with discrete latents is proposed by \citet{li2024derivative}. This approach avoids gradient computation of the loss function, instead evaluating the loss on multiple generated samples and conducting sampling based on these values. However, like \citet{chen2024guided}, this method is expected to be inefficient for models with relatively large discrete latent spaces.

\section{Proofs}
In this section, we provide detailed proofs for the main theoretical results presented in the paper.
\optimobject*
\begin{proof}
To prove the inequality in Theorem~\ref{thm:optim_object_posterior}, we start by noting that the KL divergence between the marginal distributions \( p_{\bm{\alpha}}(\mathbf{z}_0|\mathbf{y}) \) and \( q_{\text{star-decomp}}(\mathbf{z}_0|\mathbf{y}) \) can be bounded by the KL divergence between the joint distributions \( p_{\bm{\alpha}}(\mathbf{z}_{0:T}|\mathbf{y}) \) and \( q_{\text{star-decomp}}(\mathbf{z}_{0:T}|\mathbf{y}) \):

\begin{align}
D_{\text{KL}}\left(p_{\bm{\alpha}}(\mathbf{z}_0|\mathbf{y})\|q_{\text{star-decomp}}(\mathbf{z}_0|\mathbf{y})\right) &\leq D_{\text{KL}}\left(p_{\bm{\alpha}}(\mathbf{z}_{0:T}|\mathbf{y})\|q_{\text{star-decomp}}(\mathbf{z}_{0:T}|\mathbf{y})\right).
\label{eq:kl_marginal_joint}
\end{align}

This inequality holds because marginalization cannot increase the KL divergence between distributions.

Next, we decompose the joint KL divergence using the chain rule and the definitions of the distributions:

\begin{align}
D_{\text{KL}}\left(p_{\bm{\alpha}}(\mathbf{z}_{0:T}|\mathbf{y})\|q_{\text{star-decomp}}(\mathbf{z}_{0:T}|\mathbf{y})\right)
&= \mathbb{E}_{p_{\bm{\alpha}}(\mathbf{z}_{0:T}|\mathbf{y})}\left[\log \frac{p_{\bm{\alpha}}(\mathbf{z}_{0:T}|\mathbf{y})}{q_{\text{star-decomp}}(\mathbf{z}_{0:T}|\mathbf{y})}\right] \nonumber \\
&= \mathbb{E}_{p_{\bm{\alpha}}(\mathbf{z}_{0:T}|\mathbf{y})}\left[\log \frac{p_{\bm{\alpha}}(\mathbf{z}_T|\mathbf{y}) \prod_{t=1}^{T} p_{\bm{\alpha}}(\mathbf{z}_{t-1}|\mathbf{z}_t, \mathbf{y})}{q_{\text{star}}(\mathbf{z}_T|\mathbf{y}) \prod_{t=1}^{T} q_{\text{star}}(\mathbf{z}_{t-1}|\mathbf{z}_t, \mathbf{y})}\right] \nonumber \\
&= \mathbb{E}_{p_{\bm{\alpha}}(\mathbf{z}_{0:T}|\mathbf{y})}\left[\log \frac{p_{\bm{\alpha}}(\mathbf{z}_T|\mathbf{y})}{q_{\text{star}}(\mathbf{z}_T|\mathbf{y})} + \sum_{t=1}^{T} \log \frac{p_{\bm{\alpha}}(\mathbf{z}_{t-1}|\mathbf{z}_t, \mathbf{y})}{q_{\text{star}}(\mathbf{z}_{t-1}|\mathbf{z}_t, \mathbf{y})}\right] \nonumber \\
&= D_{\text{KL}}\left(p_{\bm{\alpha}}(\mathbf{z}_T|\mathbf{y})\|q_{\text{star}}(\mathbf{z}_T|\mathbf{y})\right) + \sum_{t=1}^{T} \mathbb{E}_{p_{\bm{\alpha}}(\mathbf{z}_{0:T}|\mathbf{y})}\left[\log \frac{p_{\bm{\alpha}}(\mathbf{z}_{t-1}|\mathbf{z}_t, \mathbf{y})}{q_{\text{star}}(\mathbf{z}_{t-1}|\mathbf{z}_t, \mathbf{y})}\right].
\label{eq:kl_joint_decomposition}
\end{align}

In the context of mask-absorbing state diffusion, the distribution \( p_{\bm{\alpha}}(\mathbf{z}_T|\mathbf{y}) \) is the same as \( q_{\text{star}}(\mathbf{z}_T|\mathbf{y}) \) because \( \mathbf{z}_T \) is fully determined by the diffusion process and is independent of \( \bm{\alpha} \). Therefore, the first term is zero:

\begin{align}
D_{\text{KL}}\left(p_{\bm{\alpha}}(\mathbf{z}_T|\mathbf{y})\|q_{\text{star}}(\mathbf{z}_T|\mathbf{y})\right) = 0.
\end{align}

This simplifies \eqref{eq:kl_joint_decomposition} to:

\begin{align}
D_{\text{KL}}\left(p_{\bm{\alpha}}(\mathbf{z}_{0:T}|\mathbf{y})\|q_{\text{star-decomp}}(\mathbf{z}_{0:T}|\mathbf{y})\right) = \sum_{t=1}^{T} \mathbb{E}_{p_{\bm{\alpha}}(\mathbf{z}_{0:T}|\mathbf{y})}\left[\log \frac{p_{\bm{\alpha}}(\mathbf{z}_{t-1}|\mathbf{z}_t, \mathbf{y})}{q_{\text{star}}(\mathbf{z}_{t-1}|\mathbf{z}_t, \mathbf{y})}\right].
\label{eq:kl_joint_summed}
\end{align}

We can further simplify the expectation over \( \mathbf{z}_{0:T} \) by focusing on \( \mathbf{z}_t \) and \( \mathbf{z}_{t-1} \):

\begin{align}
D_{\text{KL}}\left(p_{\bm{\alpha}}(\mathbf{z}_{0:T}|\mathbf{y})\|q_{\text{star-decomp}}(\mathbf{z}_{0:T}|\mathbf{y})\right) = \sum_{t=1}^{T} \mathbb{E}_{\mathbf{z}_t \sim p_{\bm{\alpha}}(\mathbf{z}_t|\mathbf{y})} \left[ D_{\text{KL}}\left(p_{\bm{\alpha}}(\mathbf{z}_{t-1}|\mathbf{z}_t, \mathbf{y})\|q_{\text{star}}(\mathbf{z}_{t-1}|\mathbf{z}_t, \mathbf{y})\right) \right].
\label{eq:kl_expectation}
\end{align}

Now, for each term in the sum, we apply the chain rule for KL divergence to relate \( \mathbf{z}_{t-1} \) and \( \mathbf{z}_0 \):

\begin{align}
& D_{\text{KL}}\left(p_{\bm{\alpha}}(\mathbf{z}_{t-1}|\mathbf{z}_t, \mathbf{y})\|q_{\text{star}}(\mathbf{z}_{t-1}|\mathbf{z}_t, \mathbf{y})\right) \nonumber \\
&\quad + \mathbb{E}_{\mathbf{z}_{t-1} \sim p_{\bm{\alpha}}(\mathbf{z}_{t-1}|\mathbf{z}_t, \mathbf{y})} \left[ D_{\text{KL}}\left(\tilde{p}_{\bm{\alpha}}(\mathbf{z}_0|\mathbf{z}_{t-1}, \mathbf{z}_t, \mathbf{y})\|q_{\text{star}}(\mathbf{z}_0|\mathbf{z}_{t-1}, \mathbf{z}_t, \mathbf{y})\right) \right] \nonumber \\
&= D_{\text{KL}}\left(\tilde{p}_{\bm{\alpha}}(\mathbf{z}_0|\mathbf{z}_t, \mathbf{y})\|q_{\text{star}}(\mathbf{z}_0|\mathbf{z}_t, \mathbf{y})\right) \nonumber \\
&\quad + \mathbb{E}_{\mathbf{z}_0 \sim \tilde{p}_{\bm{\alpha}}(\mathbf{z}_0|\mathbf{z}_t, \mathbf{y})} \left[ D_{\text{KL}}\left(p_{\bm{\alpha}}(\mathbf{z}_{t-1}|\mathbf{z}_0, \mathbf{z}_t, \mathbf{y})\|q_{\text{star}}(\mathbf{z}_{t-1}|\mathbf{z}_0, \mathbf{z}_t, \mathbf{y})\right) \right].
\label{eq:kl_chain_rule}
\end{align}

In this equation, the left-hand side represents the KL divergence between \( p_{\bm{\alpha}} \) and \( q_{\text{star}} \) at time \( t-1 \) conditioned on \( \mathbf{z}_t \), plus the expected KL divergence between their respective conditional distributions of \( \mathbf{z}_0 \). The right-hand side represents the KL divergence between \( \tilde{p}_{\bm{\alpha}} \) and \( q_{\text{star}} \) directly conditioned on \( \mathbf{z}_t \), plus an expected KL divergence over \( \mathbf{z}_0 \).

The crucial observation here is that the last term on the right-hand side is zero. This is because \( p_{\bm{\alpha}} \) and \( q_{\text{star}} \) share the same conditional posterior when conditioned on \( \mathbf{z}_0 \) and \( \mathbf{z}_t \), i.e.,

\begin{align}
p_{\bm{\alpha}}(\mathbf{z}_{t-1}|\mathbf{z}_0, \mathbf{z}_t, \mathbf{y}) &= q_{\text{star}}(\mathbf{z}_{t-1}|\mathbf{z}_{0})\nonumber \\
&=q_{\text{star}}(\mathbf{z}_{t-1}|\mathbf{z}_0, \mathbf{z}_t, \mathbf{y}).
\end{align}

Therefore, the KL divergence between these conditional distributions is zero:

\begin{align}
D_{\text{KL}}\left(p_{\bm{\alpha}}(\mathbf{z}_{t-1}|\mathbf{z}_0, \mathbf{z}_t, \mathbf{y})\|q_{\text{star}}(\mathbf{z}_{t-1}|\mathbf{z}_0, \mathbf{z}_t, \mathbf{y})\right) = 0.
\end{align}

Substituting back into \eqref{eq:kl_chain_rule}, we obtain:

\begin{align}
&D_{\text{KL}}\left(p_{\bm{\alpha}}(\mathbf{z}_{t-1}|\mathbf{z}_t, \mathbf{y})\|q_{\text{star}}(\mathbf{z}_{t-1}|\mathbf{z}_t, \mathbf{y})\right) \nonumber \\ 
&= D_{\text{KL}}\left(\tilde{p}_{\bm{\alpha}}(\mathbf{z}_0|\mathbf{z}_t, \mathbf{y})\|q_{\text{star}}(\mathbf{z}_0|\mathbf{z}_t, \mathbf{y})\right) \nonumber \\ 
&\ - \mathbb{E}_{\mathbf{z}_{t-1} \sim p_{\bm{\alpha}}(\mathbf{z}_{t-1}|\mathbf{z}_t, \mathbf{y})} \left[ D_{\text{KL}}\left(\tilde{p}_{\bm{\alpha}}(\mathbf{z}_0|\mathbf{z}_{t-1}, \mathbf{z}_t, \mathbf{y})\|q_{\text{star}}(\mathbf{z}_0|\mathbf{z}_{t-1}, \mathbf{z}_t, \mathbf{y})\right) \right].
\end{align}

Since the KL divergence is always non-negative, the expected KL divergence on the right-hand side is non-negative, which implies:

\begin{align}
D_{\text{KL}}\left(p_{\bm{\alpha}}(\mathbf{z}_{t-1}|\mathbf{z}_t, \mathbf{y})\|q_{\text{star}}(\mathbf{z}_{t-1}|\mathbf{z}_t, \mathbf{y})\right) \leq D_{\text{KL}}\left(\tilde{p}_{\bm{\alpha}}(\mathbf{z}_0|\mathbf{z}_t, \mathbf{y})\|q_{\text{star}}(\mathbf{z}_0|\mathbf{z}_t, \mathbf{y})\right).
\label{eq:kl_inequality}
\end{align}

Substituting \eqref{eq:kl_inequality} back into \eqref{eq:kl_expectation}, we obtain an upper bound on the joint KL divergence:

\begin{align}
D_{\text{KL}}\left(p_{\bm{\alpha}}(\mathbf{z}_{0:T}|\mathbf{y})\|q_{\text{star-decomp}}(\mathbf{z}_{0:T}|\mathbf{y})\right) \leq \sum_{t=1}^{T} \mathbb{E}_{\mathbf{z}_t \sim p_{\bm{\alpha}}(\mathbf{z}_t|\mathbf{y})} \left[ D_{\text{KL}}\left(\tilde{p}_{\bm{\alpha}}(\mathbf{z}_0|\mathbf{z}_t, \mathbf{y})\|q_{\text{star}}(\mathbf{z}_0|\mathbf{z}_t, \mathbf{y})\right) \right].
\label{eq:kl_joint_bound}
\end{align}

Combining \eqref{eq:kl_marginal_joint} and \eqref{eq:kl_joint_bound}, we conclude:

\begin{align}
D_{\text{KL}}\left(p_{\bm{\alpha}}(\mathbf{z}_0|\mathbf{y})\|q_{\text{star-decomp}}(\mathbf{z}_0|\mathbf{y})\right) \leq \sum_{t=1}^{T} \mathbb{E}_{\mathbf{z}_t \sim p_{\bm{\alpha}}(\mathbf{z}_t|\mathbf{y})} \left[ D_{\text{KL}}\left(\tilde{p}_{\bm{\alpha}}(\mathbf{z}_0|\mathbf{z}_t, \mathbf{y})\|q_{\text{star}}(\mathbf{z}_0|\mathbf{z}_t, \mathbf{y})\right) \right].
\end{align}

This establishes the inequality stated in the theorem.
\end{proof}

\decompobj*

\begin{proof}
We begin by considering the KL divergence between the variational distribution \( \tilde{p}_{\bm{\alpha}}(\mathbf{z}_0|\mathbf{z}_t, \mathbf{y}) \) and the true posterior \( q_{\text{star}}(\mathbf{z}_0|\mathbf{z}_t, \mathbf{y}) \). Given that \( \mathbf{z}_0 \) is a discrete variable, the KL divergence can be expressed as a sum:

\begin{equation}
\label{eq:kl_sum}
D_{\text{KL}}\left(\tilde{p}_{\bm{\alpha}}(\mathbf{z}_0|\mathbf{z}_t, \mathbf{y}) \| q_{\text{star}}(\mathbf{z}_0|\mathbf{z}_t, \mathbf{y})\right) = \sum_{\mathbf{z}_0} \tilde{p}_{\bm{\alpha}}(\mathbf{z}_0|\mathbf{z}_t, \mathbf{y}) \log \frac{\tilde{p}_{\bm{\alpha}}(\mathbf{z}_0|\mathbf{z}_t, \mathbf{y})}{q_{\text{star}}(\mathbf{z}_0|\mathbf{z}_t, \mathbf{y})}.
\end{equation}

By applying Bayes' theorem to the true posterior \( q_{\text{star}}(\mathbf{z}_0|\mathbf{z}_t, \mathbf{y}) \), we have:

\begin{equation}
\label{eq:posterior_bayes}
q_{\text{star}}(\mathbf{z}_0|\mathbf{z}_t, \mathbf{y}) = \frac{q_{\text{star}}(\mathbf{z}_0|\mathbf{z}_t) q_{\text{star}}(\mathbf{y}|\mathbf{z}_0)}{q_{\text{star}}(\mathbf{y}|\mathbf{z}_t)}.
\end{equation}

Since \( q_{\text{star}}(\mathbf{y}|\mathbf{z}_t) \) does not depend on \( \mathbf{z}_0 \), it can be treated as a constant and ignored in the KL divergence calculation. Substituting Eq.~\eqref{eq:posterior_bayes} into Eq.~\eqref{eq:kl_sum}, we obtain:

\begin{equation}
\label{eq:kl_bayes}
D_{\text{KL}}\left(\tilde{p}_{\bm{\alpha}}(\mathbf{z}_0|\mathbf{z}_t, \mathbf{y}) \| q_{\text{star}}(\mathbf{z}_0|\mathbf{z}_t, \mathbf{y})\right) = \sum_{\mathbf{z}_0} \tilde{p}_{\bm{\alpha}}(\mathbf{z}_0|\mathbf{z}_t, \mathbf{y}) \log \frac{\tilde{p}_{\bm{\alpha}}(\mathbf{z}_0|\mathbf{z}_t, \mathbf{y})}{q_{\text{star}}(\mathbf{z}_0|\mathbf{z}_t) q_{\text{star}}(\mathbf{y}|\mathbf{z}_0)}.
\end{equation}

Next, we split the logarithm in the numerator and denominator:

\begin{equation}
\label{eq:kl_decomposition}
D_{\text{KL}}\left(\tilde{p}_{\bm{\alpha}}(\mathbf{z}_0|\mathbf{z}_t, \mathbf{y}) \| q_{\text{star}}(\mathbf{z}_0|\mathbf{z}_t, \mathbf{y})\right) = \sum_{\mathbf{z}_0} \tilde{p}_{\bm{\alpha}}(\mathbf{z}_0|\mathbf{z}_t, \mathbf{y}) \left[ \log \frac{\tilde{p}_{\bm{\alpha}}(\mathbf{z}_0|\mathbf{z}_t, \mathbf{y})}{q_{\text{star}}(\mathbf{z}_0|\mathbf{z}_t)} - \log q_{\text{star}}(\mathbf{y}|\mathbf{z}_0) \right].
\end{equation}

This expression can be decomposed into two terms: 

\begin{enumerate}
    \item The first term represents the KL divergence between the variational distribution \( \tilde{p}_{\bm{\alpha}}(\mathbf{z}_0|\mathbf{z}_t, \mathbf{y}) \) and the prior \( q_{\text{star}}(\mathbf{z}_0|\mathbf{z}_t) \):

    \begin{equation}
    \label{eq:kl_prior}
    D_{\text{KL}}\left(\tilde{p}_{\bm{\alpha}}(\mathbf{z}_0|\mathbf{z}_t, \mathbf{y}) \| q_{\text{star}}(\mathbf{z}_0|\mathbf{z}_t)\right) = \sum_{\mathbf{z}_0} \tilde{p}_{\bm{\alpha}}(\mathbf{z}_0|\mathbf{z}_t, \mathbf{y}) \log \frac{\tilde{p}_{\bm{\alpha}}(\mathbf{z}_0|\mathbf{z}_t, \mathbf{y})}{q_{\text{star}}(\mathbf{z}_0|\mathbf{z}_t)}.
    \end{equation}

    \item The second term is the negative expected log-likelihood under the variational distribution:

    \begin{equation}
    \label{eq:log_likelihood}
    \mathbb{E}_{\mathbf{z}_0 \sim \tilde{p}_{\bm{\alpha}}(\mathbf{z}_0|\mathbf{z}_t, \mathbf{y})}\left[ -\log q_{\text{star}}(\mathbf{y}|\mathbf{z}_0) \right] = - \sum_{\mathbf{z}_0} \tilde{p}_{\bm{\alpha}}(\mathbf{z}_0|\mathbf{z}_t, \mathbf{y}) \log q_{\text{star}}(\mathbf{y}|\mathbf{z}_0).
    \end{equation}
\end{enumerate}

Thus, the KL divergence between \( \tilde{p}_{\bm{\alpha}}(\mathbf{z}_0|\mathbf{z}_t, \mathbf{y}) \) and \( q_{\text{star}}(\mathbf{z}_0|\mathbf{z}_t, \mathbf{y}) \) can be decomposed as follows:

\begin{equation}
\label{eq:final_kl}
D_{\text{KL}}\left(\tilde{p}_{\bm{\alpha}}(\mathbf{z}_0|\mathbf{z}_t, \mathbf{y}) \| q_{\text{star}}(\mathbf{z}_0|\mathbf{z}_t, \mathbf{y})\right) = D_{\text{KL}}\left(\tilde{p}_{\bm{\alpha}}(\mathbf{z}_0|\mathbf{z}_t, \mathbf{y}) \| q_{\text{star}}(\mathbf{z}_0|\mathbf{z}_t)\right) - \mathbb{E}_{\mathbf{z}_0 \sim \tilde{p}_{\bm{\alpha}}(\mathbf{z}_0|\mathbf{z}_t, \mathbf{y})}\left[\log q_{\text{star}}(\mathbf{y}|\mathbf{z}_0)\right].
\end{equation}

This concludes the proof.
\end{proof}

\begin{restatable}{lemma}{equimarginal}
The marginal distribution $q_{\text{star-decomp}}(\mathbf{z}_0|\mathbf{y})$ is identical to the target distribution $q_{\text{star}}(\mathbf{z}_0|\mathbf{y})$.
\end{restatable}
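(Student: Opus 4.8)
The plan is to show directly that summing the star-decomposed joint $q_{\text{star-decomp}}(\mathbf{z}_{0:T}|\mathbf{y})$ over $\mathbf{z}_{1:T}$ telescopes to $q_{\text{star}}(\mathbf{z}_0|\mathbf{y})$. The single fact I will rely on repeatedly is the elementary identity that, for the true star-shaped posterior, the marginal and the one-step reverse conditional multiply back to a pairwise joint,
\begin{align}
q_{\text{star}}(\mathbf{z}_t|\mathbf{y})\, q_{\text{star}}(\mathbf{z}_{t-1}|\mathbf{z}_t, \mathbf{y}) = q_{\text{star}}(\mathbf{z}_{t-1}, \mathbf{z}_t|\mathbf{y}),
\end{align}
so that $\sum_{\mathbf{z}_t} q_{\text{star}}(\mathbf{z}_t|\mathbf{y})\, q_{\text{star}}(\mathbf{z}_{t-1}|\mathbf{z}_t, \mathbf{y}) = q_{\text{star}}(\mathbf{z}_{t-1}|\mathbf{y})$.

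First I would perform the marginalization in the specific order $\mathbf{z}_T, \mathbf{z}_{T-1}, \ldots, \mathbf{z}_1$, i.e.\ summing out the largest time index first. I would formalize this as an induction with the hypothesis that, for each $k$, the partial sum over $\mathbf{z}_{k+1:T}$ satisfies
\begin{align}
\sum_{\mathbf{z}_{k+1:T}} q_{\text{star}}(\mathbf{z}_T|\mathbf{y}) \prod_{t=k+1}^{T} q_{\text{star}}(\mathbf{z}_{t-1}|\mathbf{z}_t, \mathbf{y}) = q_{\text{star}}(\mathbf{z}_k|\mathbf{y}).
\end{align}
The base case $k = T$ is the empty product equal to $q_{\text{star}}(\mathbf{z}_T|\mathbf{y})$. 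For the inductive step I would pull the factor $q_{\text{star}}(\mathbf{z}_{k-1}|\mathbf{z}_k,\mathbf{y})$ outside the inner sum over $\mathbf{z}_{k+1:T}$, apply the hypothesis to replace that inner sum by $q_{\text{star}}(\mathbf{z}_k|\mathbf{y})$, and then invoke the displayed identity to collapse $\sum_{\mathbf{z}_k} q_{\text{star}}(\mathbf{z}_k|\mathbf{y})\, q_{\text{star}}(\mathbf{z}_{k-1}|\mathbf{z}_k,\mathbf{y})$ into $q_{\text{star}}(\mathbf{z}_{k-1}|\mathbf{y})$. Taking $k = 0$ yields exactly $q_{\text{star-decomp}}(\mathbf{z}_0|\mathbf{y}) = q_{\text{star}}(\mathbf{z}_0|\mathbf{y})$, since the left-hand side at $k=0$ is the full marginalization of $q_{\text{star-decomp}}$.

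The step I expect to require the most care is conceptual rather than computational: one must recognize that although the joint $q_{\text{star-decomp}}$ uses only the \emph{pairwise} conditionals $q_{\text{star}}(\mathbf{z}_{t-1}|\mathbf{z}_t,\mathbf{y})$ rather than the full star conditionals $q_{\text{star}}(\mathbf{z}_{t-1}|\mathbf{z}_{t:T},\mathbf{y})$ (this is precisely Property 3, whereby the two joints differ), the discrepancy is invisible at the level of the $\mathbf{z}_0$-marginal. The reason is that the telescoping only ever needs each factor to be a genuine conditional of $q_{\text{star}}$ given its immediate successor, which the pairwise conditional is by construction; the top-down summation order then ensures that at each stage the leading marginal and the next conditional reassemble into a valid pairwise joint. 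I would therefore emphasize that the result hinges on the order of marginalization and on each transition kernel being a bona fide star conditional, and I would verify the edge cases (the $t=1$ factor producing the $\mathbf{z}_0$ marginal, and the $t=T$ initialization) explicitly.
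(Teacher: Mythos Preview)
Your proposal is correct and follows essentially the same approach as the paper's proof: both marginalize $q_{\text{star-decomp}}(\mathbf{z}_{0:T}|\mathbf{y})$ by summing out $\mathbf{z}_T,\mathbf{z}_{T-1},\ldots,\mathbf{z}_1$ in order, repeatedly invoking the law of total probability $\sum_{\mathbf{z}_t} q_{\text{star}}(\mathbf{z}_t|\mathbf{y})\,q_{\text{star}}(\mathbf{z}_{t-1}|\mathbf{z}_t,\mathbf{y}) = q_{\text{star}}(\mathbf{z}_{t-1}|\mathbf{y})$ to telescope down to $q_{\text{star}}(\mathbf{z}_0|\mathbf{y})$. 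Your version is slightly more formal in casting the telescoping as an explicit induction, but the argument is the same.
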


\begin{proof}
We aim to show that
\begin{align}
   q_{\text{star-decomp}}(\mathbf{z}_{0}| \mathbf{y})
   =\;
   q_{\text{star}}(\mathbf{z}_{0}| \mathbf{y}).
\end{align}
Recall first that $q_{\text{star}}(\mathbf{z}_{0:T}| \mathbf{y})$ is our original ``star-shaped'' joint posterior, 
and we define its ``star-decomposed'' variant by
\begin{align}
   q_{\text{star-decomp}}(\mathbf{z}_{0:T}| \mathbf{y})
   =\;
   q_{\text{star}}(\mathbf{z}_{T}| \mathbf{y})
   \prod_{t=1}^{T}
     q_{\text{star}}(\mathbf{z}_{t-1}| \mathbf{z}_{t},\mathbf{y}).
\end{align}
By the definition of marginalization, to get 
\(
  q_{\text{star-decomp}}(\mathbf{z}_{0}| \mathbf{y})
\),
we integrate over all the latent variables $\mathbf{z}_{1},\dots,\mathbf{z}_{T}$:
\begin{align}
   q_{\text{star-decomp}}(\mathbf{z}_{0}| \mathbf{y})
   &=\;
   \sum_{\mathbf{z}_{1:T}}
   q_{\text{star-decomp}}(\mathbf{z}_{0:T}| \mathbf{y}).
\end{align}
Substituting our definition of $q_{\text{star-decomp}}(\mathbf{z}_{0:T}| \mathbf{y})$ into the sum yields
\begin{align}
   q_{\text{star-decomp}}(\mathbf{z}_{0}| \mathbf{y})
   &=\;
   \sum_{\mathbf{z}_{1:T}}
   \Bigl[
     q_{\text{star}}(\mathbf{z}_{T}| \mathbf{y})
     \prod_{t=1}^{T}
       q_{\text{star}}(\mathbf{z}_{t-1}| \mathbf{z}_{t},\mathbf{y})
   \Bigr].
\end{align}
Here, $q_{\text{star}}(\mathbf{z}_{T}| \mathbf{y})$ and each factor 
$q_{\text{star}}(\mathbf{z}_{t-1}| \mathbf{z}_{t},\mathbf{y})$ are just conditional distributions of the star-shaped model.

Next, we marginalize over $\mathbf{z}_{T}$, then $\mathbf{z}_{T-1}$, and so on, one index at a time. 
That is, we view
\[
   \sum_{\mathbf{z}_{T}}
     q_{\text{star}}(\mathbf{z}_{T}| \mathbf{y})
     \,q_{\text{star}}(\mathbf{z}_{T-1}| \mathbf{z}_{T},\mathbf{y})
   =
   q_{\text{star}}(\mathbf{z}_{T-1}| \mathbf{y}),
\]
because summation over $\mathbf{z}_{T}$ of 
\(
  q_{\text{star}}(\mathbf{z}_{T}| \mathbf{y})
  \,q_{\text{star}}(\mathbf{z}_{T-1}| \mathbf{z}_{T},\mathbf{y})
\)
collapses exactly to the marginal $q_{\text{star}}(\mathbf{z}_{T-1}| \mathbf{y})$. 
This step follows directly from the chain rule of probability (or the law of total probability). 

We then apply the same idea repeatedly: first summing out $\mathbf{z}_{T}$, which gives us a factor depending only on $\mathbf{z}_{T-1}$; then summing out $\mathbf{z}_{T-1}$ to obtain a factor depending on $\mathbf{z}_{T-2}$; and so on, down through $\mathbf{z}_{1}$. 
Performing these consecutive sums from $t=T$ down to $t=1$ eventually leaves only $\mathbf{z}_{0}$ in the expression. 
Hence, we get
\begin{align}
   q_{\text{star-decomp}}(\mathbf{z}_{0}| \mathbf{y})
   =\;
   q_{\text{star}}(\mathbf{z}_{0}| \mathbf{y}),
\end{align}
which shows that both star-decomposed and original star-shaped posteriors yield the same marginal distribution over $\mathbf{z}_{0}$.
\end{proof}

\section{Details on Experiments}
\subsection{Image inverse problems}
\paragraph{Implementation of Forward Operators and Dataset Selection}
In our image inverse problem experiments, the definition and implementation of the forward operator are based on the DPS implementation\footnote{\url{https://github.com/DPS2022/diffusion-posterior-sampling}}. To ensure a diverse representation of ImageNet classes without genre bias, we select 1000 images covering all classes $0, 1, 2, \ldots, 999$ using the \texttt{imagenet\_val\_1k.txt} file provided by \citet{pan2021exploiting}\footnote{\url{https://github.com/XingangPan/deep-generative-prior/}}. For our experiments with the FFHQ dataset, we use the 1000 images (indexed $0,1,\ldots,999$) from the validation set.

\subsection{Implementation Details of G2D2 in Inverse Problem Settings}
\label{appendix:implement_image}
The implementation of G2D2 is based on the VQ-Diffusion model from the \texttt{diffusers} library~\footnote{\url{https://huggingface.co/docs/diffusers/main/en/api/pipelines/vq_diffusion}}. For the prior model, we use the pre-trained model available at \url{https://huggingface.co/microsoft/vq-diffusion-ithq}. In our experiments, the number of time steps $T$ for sampling is set to $100$. 

\paragraph{Parameterization of Star-Shaped Noise Process}
In G2D2, the star-shaped noise process follows the same cumulative transition probability $q(\mathbf{z}_{t}|\mathbf{z}_{0})$ as the original Markov noise process. For the Markov noise forward process where $q(\mathbf{z}_{t}|\mathbf{z}_{t-1})$ is defined using $Q_{t}$ as in Equation~\ref{eq:def_Q_t}, the cumulative transition probability is computed as 
$q(z_{t, i}|\mathbf{z}_{0}) = \bm{v}^{\mathsf{T}}(z_{t, i})\overline{Q}_{t}\bm{v}(z_{0, i})$, where $\overline{Q}_{t}=Q_{t}\cdots Q_{1}$. Here, $\overline{Q}_{t}$ can be computed in closed form as:
\begin{align}
    \overline{Q}_t \bm{v}(z_{0, i}) = \overline{\alpha}_t \bm{v}(z_{0, i}) + (\overline{\gamma}_t - \overline{\beta}_t) \bm{v}(K + 1) + \overline{\beta}_t,
\end{align}
where $\overline{\alpha}_t = \prod_{i=1}^{t-1} \alpha_i$, $\overline{\gamma}_t = 1 - \prod_{i=1}^{t-1} (1 - \gamma_i)$, and $\overline{\beta}_t = (1 - \overline{\alpha}_t -\overline{\gamma}_t)/(K + 1)$. These parameters can be calculated and stored in advance. The parameter settings follow those used during the training of the prior model. Specifically, $\overline{\alpha}_{1}$ is set to $0.99999$, $\overline{\alpha}_{T}$ to $0.000009$, $\overline{\gamma}_{1}$ to $0.000009$, and $\overline{\gamma}_{T}$ to $0.99999$. For both $\overline{\alpha}_{t}$ and $\overline{\gamma}_{t}$, values are linearly interpolated between steps $1$ and $T$. This scheduling results in a linear increase in the number of \texttt{[MASK]} states as $t$ increases, ultimately leading to all variables transitioning to the \texttt{[MASK]} state. Additionally, the transition probability $\beta_{t}$ between unmasked tokens is set to be negligibly small, as $\overline{\alpha}_{t}$ and $\overline{\gamma}_{t}$ sum to nearly 1.

\paragraph{Optimization in the Algorithm and Instantiation of the Objective Function}
In the continuous optimization phase, we optimize the parameters $\bm{\alpha}$ of the categorical distribution using the RAdam optimizer~\citep{Liu2020On}. The optimization objective is a weighted sum of the KL divergence term and the likelihood term, defined as:
\begin{align}
    \bm{\alpha}_{t} = \argmin_{\bm{\alpha}_{t}}\left\{ \eta_{\text{KL}} D_{\text{KL}}\left(\tilde{p}_{\bm{\alpha}}(\mathbf{z}_0|\mathbf{z}_t, \mathbf{y}) \| \tilde{p}_{\theta}(\mathbf{z}_0|\mathbf{z}_t)\right) + \|\mathbf{y} - \mathbf{A}\mathbf{x}_{0}(\bm{\alpha}_{t})\|_{2}\right\},
\end{align}
where $\eta_{\text{KL}}$ controls the trade-off between the KL term and the likelihood term.

\paragraph{Marginalization over $\mathbf{z}_{0}$ in Algorithm~\ref{alg:proposed}}
The marginalization over $\mathbf{z}_0$ in line 10 of Algorithm~\ref{alg:proposed}, specifically the term $\sum_{\mathbf{z}_{0}}q_{\text{star}}(\mathbf{z}_{t-1}|\mathbf{z}_{0})\tilde{p}_{\bm{\alpha}}(\mathbf{z}_{0}|\mathbf{z}_{t}, \mathbf{y})$, can be computed in closed form. This computation is feasible because both distributions involved in the marginalization are dimensionally independent categorical distributions, as discussed by \citet{austin2021structured} and \citet{gu2022vector}.

\paragraph{Dynamic Learning Rate and KL Coefficient Scheduling}
Some parameters are dynamically adjusted during inference. Both the learning rate for RAdam ($l_{\text{RAdam}}$) and the KL divergence coefficient ($\eta_{\text{KL}}$) are scheduled using weight vectors that decay logarithmically over the inference steps. These weights are computed based on initial scaling factors.

The learning rate weight vector $w_{\text{lr}}$ and the KL coefficient weight vector $w_{\text{KL}}$ are defined as follows:

\[
w_{\text{lr}}(t) = 10^{\left( \frac{\lambda_{\text{lr, schedule}}}{2} \cdot \left(\frac{2t}{T}-1 \right) \right)},
\]
\[
w_{\text{KL}}(t) = 10^{\left( \frac{\lambda_{\text{KL, schedule}}}{2} \cdot \left(\frac{2t}{T}-1 \right) \right)}.
\]

Here, $\lambda_{\text{lr, schedule}}$ and $\lambda_{\text{KL, schedule}}$ represent the initial scaling factors for the learning rate and KL coefficient, respectively, and $T$ is the total number of inference steps. When $\lambda_{\text{lr, schedule}} > 0$, the learning rate weight vector $w_{\text{lr}}(t)$ starts with relatively large values when $t$ is large and decays exponentially as $t$ decreases. Specifically, $w_{\text{lr}}(t)$ reaches its minimum near $t = 1$ and its maximum near $t = T$. This scheduling enables stronger optimization during the initial inference steps, with the learning rate gradually decreasing in the later steps.

At each step $t$, the parameters are set as follows:
\[
l_{\text{RAdam}}(t) = l_{\text{RAdam, base}} \cdot w_{\text{lr}}(t), \quad \eta_{\text{KL}}(t) = \eta_{\text{KL, base}} \cdot w_{\text{KL}}(t).
\]

\paragraph{Task-Specific and Common Hyperparameters}
The hyperparameters for Gaussian deblurring and super-resolution tasks used in the experiments are shown in Table~\ref{tab:hyperparameters}.

\begin{table}[htbp]
\centering
\begin{tabular}{|l|c|c|c|}
\hline
\textbf{Dataset}                             & \textbf{Task}            & \textbf{Hyperparameter}          & \textbf{Value}  \\ \hline
ImageNet                                     & Gaussian Deblurring      & $\eta_{\text{KL, base}}$         & 0.0003          \\ \cline{3-4} 
                                             &                          & $\lambda_{\text{KL, schedule}}$      & 2.0             \\ \cline{3-4} 
                                             &                          & $l_{\text{RAdam, base}}$          & 15.0             \\ \cline{3-4} 
                                             &                          & $\lambda_{\text{lr, schedule}}$      & 1.0             \\ \hline
ImageNet                                     & Super-resolution         & $\eta_{\text{KL, base}}$         & 0.0003          \\ \cline{3-4} 
                                             &                          & $\lambda_{\text{KL, schedule}}$      & 2.0             \\ \cline{3-4} 
                                             &                          & $l_{\text{RAdam, base}}$          & 10.0             \\ \cline{3-4} 
                                             &                          & $\lambda_{\text{lr, schedule}}$      & 1.0             \\ \hline
FFHQ                                         & Gaussian Deblurring      & $\eta_{\text{KL, base}}$         & 0.0003          \\ \cline{3-4} 
                                             &                          & $\lambda_{\text{KL, schedule}}$      & 2.0             \\ \cline{3-4} 
                                             &                          & $l_{\text{RAdam, base}}$          & 15.0            \\ \cline{3-4} 
                                             &                          & $\lambda_{\text{lr, schedule}}$      & 1.0             \\ \hline
FFHQ                                         & Super-resolution         & $\eta_{\text{KL, base}}$         & 0.0003          \\ \cline{3-4} 
                                             &                          & $\lambda_{\text{KL, schedule}}$      & 2.0             \\ \cline{3-4} 
                                             &                          & $l_{\text{RAdam, base}}$          & 10.0             \\ \cline{3-4} 
                                             &                          & $\lambda_{\text{lr, schedule}}$      & 2.0             \\ \hline
\end{tabular}
\caption{Hyperparameters for Gaussian Deblurring and Super-resolution tasks on ImageNet and FFHQ datasets.}
\label{tab:hyperparameters}
\end{table}

The following hyperparameters are shared across all experiments: The number of iterations for the optimization is set to 30, the temperature for Gumbel-Softmax relaxation is 1.0, and the forget coefficient is 0.3. For the classifier-free guidance scale, we use 5.0 in ImageNet experiments and 3.0 in FFHQ experiments.

\subsection{G2D2 with Markov Noise Process}
\tmlrrebuttal{
\label{append:G2D2_with_Markov}
As discussed in Section \ref{sec:ablation_GM}, a variant of G2D2 can be derived by introducing the original Markov noise process in the graphical model. In that case, the algorithm is shown in Algorithm \ref{alg:G2D2_with_Markov}. The key point here is that the $q_{\text{Markov}}(\mathbf{z}_{t-1}|\mathbf{z}_{0}, \mathbf{z}_{t})$ part is identical to that of the original Markov noise process, which is expressed as
\begin{align}
    q_{\text{Markov}}(z_{t-1, i} | \mathbf{z}_{0}, \mathbf{z}_{t}) = \frac{(\bm{v}^{\mathsf{T}}(z_{t, i}) Q_t \bm{v}(z_{t-1, i})) (\bm{v}^{\mathsf{T}}(z_{t-1, i}) \overline{Q}_{t-1} \bm{v}(z_{0, i}))}{\bm{v}^{\mathsf{T}}(z_{t, i}) \overline{Q}_t \bm{v}(z_{0, i})}.
    \label{eq:posterior_markov}
\end{align}
In the mask-absorbing type of Markov noise process, this posterior distribution does not revert tokens that have once become unmasked states back to masked tokens. As a result, it becomes difficult to correct errors that occur in the early stages of sampling in subsequent steps.

We compare the computational complexity of G2D2 with the star-shaped noise process (Algorithm~\ref{alg:proposed}) and its hypothetical Markovian counterpart (Algorithm~\ref{alg:G2D2_with_Markov}). The main algorithmic difference lies in line 10 of both algorithms: the sampling of $\mathbf{z}_{t-1}$ given the optimized $\tilde{p}_{\bm{\alpha}}(\mathbf{z}_0|\mathbf{z}_t, \mathbf{y})$. 

The key differences between both algorithms are as follows. In G2D2 (Algorithm 1), we have $p_{\alpha}(\mathbf{z}_{t-1}|\mathbf{z}_t, \mathbf{y}) = \sum_{\mathbf{z}_0} q_{\text{star}}(\mathbf{z}_{t-1}|\mathbf{z}_0) \tilde{p}_{\bm{\alpha}}(\mathbf{z}_0|\mathbf{z}_t, \mathbf{y})$, where $q_{\text{star}}(\mathbf{z}_{t-1}|\mathbf{z}_0)$ represents the direct transition from $\mathbf{z}_0$ to $\mathbf{z}_{t-1}$ based on the cumulative transition matrix $\bar{Q}_{t-1}$ (derived from $Q_1, \dots, Q_{t-1}$ as described in Appendix E.2). In the Markovian Version (Algorithm~\ref{alg:G2D2_with_Markov}), we have $p_{\alpha}(\mathbf{z}_{t-1}|\mathbf{z}_t, \mathbf{y}) = \sum_{\mathbf{z}_0} q_{\text{Markov}}(\mathbf{z}_{t-1}|\mathbf{z}_0, \mathbf{z}_t) \tilde{p}_{\bm{\alpha}}(\mathbf{z}_0|\mathbf{z}_t, \mathbf{y})$, where $q_{\text{Markov}}(\mathbf{z}_{t-1}|\mathbf{z}_0, \mathbf{z}_t)$ is the posterior from the original Markov process in \eqref{eq:posterior_markov}.

Crucially, the computational cost of sampling $\mathbf{z}_{t-1}$ in both cases is dominated by the marginalization over $\mathbf{z}_0$ after $\tilde{p}_{\bm{\alpha}}(\mathbf{z}_0|\mathbf{z}_t, \mathbf{y})$ has been optimized. The calculation of $q_{\text{star}}(\mathbf{z}_{t-1}|\mathbf{z}_0)$ and $q_{\text{Markov}}(\mathbf{z}_{t-1}|\mathbf{z}_0, \mathbf{z}_t)$ involves operations on categorical distributions and transition matrices, which are generally efficient and do not introduce significant computational differences between the two approaches.

}

\begin{algorithm}[htbp]
    \small
    \caption{\textbf{G2D2 with Markov Noise Process}}
    \label{alg:G2D2_with_Markov}
    \begin{algorithmic}[1]
        \Require Input condition $\mathbf{y}$, pre-trained discrete diffusion model $p_{\theta}$, forget coefficient $\gamma$ 
        \State $\mathbf{z}_{T} \sim q(\mathbf{z}_{T})$ 
        \For{$t=T,\dots,1$}
            \If{$t=T$}
                \State Initialize: $\bm{\alpha}_{t} \propto \log \tilde{p}_{\theta}(\mathbf{z}_{0}|\mathbf{z}_{t})$
            \Else
                \State Initialize: $\bm{\alpha}_{t} \propto \exp(\gamma \log \bm{\alpha}_{t+1} + (1-\gamma) \log \tilde{p}_{\theta}(\mathbf{z}_{0}|\mathbf{z}_{t}))$
            \EndIf
            \State \texttt{// continuous optimization}
            \State  $\bm{\alpha}_{t} = \argmin_{\bm{\alpha}_{t}}\KL\left(\tilde{p}_{\bm{\alpha}}(\mathbf{z}_{0}|\mathbf{z}_{t}, \mathbf{y}) \| \tilde{p}_{\theta}(\mathbf{z}_{0}|\mathbf{z}_{t})\right) -\mathbb{E}_{\mathbf{z}_{0}\sim \tilde{p}_{\bm{\alpha}}(\mathbf{z}_{0}|\mathbf{z}_{t}, \mathbf{y})}\left[\log q(\mathbf{y}|\mathbf{z}_{0})\right]$ 
            \State Sample $\mathbf{z}_{t-1}\sim p_{\bm{\alpha}}(\mathbf{z}_{t-1}|\mathbf{z}_{t}, \mathbf{y})=\sum_{\mathbf{z}_{0}}q_{\text{Markov}}(\mathbf{z}_{t-1}|\mathbf{z}_{0}, \mathbf{z}_{t})\tilde{p}_{\bm{\alpha}}(\mathbf{z}_{0}|\mathbf{z}_{t}, \mathbf{y})$
            \State \texttt{// \color[HTML]{D2007B}{Note: The term $q_{\text{Markov}}(\mathbf{z}_{t-1}|\mathbf{z}_{0}, \mathbf{z}_{t})$ uses the posterior distribution of the original Markov noise process.}}
        \EndFor
        \State \textbf{return} $\mathbf{x}_{0}$ by decoding $\mathbf{z}_{0}$
    \end{algorithmic}
\end{algorithm}

\subsection{Settings for comparison methods}
In this subsection, we detail the experimental settings for the comparison methods. 

\paragraph{DPS}~\citep{chung2023diffusion}
We use the same parameter settings as described in the original paper. The guidance scale is set to 1.0 for FFHQ \& super-resolution, 1.0 for FFHQ \& Gaussian deblurring, 1.0 for ImageNet \& super-resolution, and 0.4 for ImageNet \& Gaussian deblurring. The number of time steps is set to 1000. For pre-trained models, we use the unconditional model provided by \citet{dhariwal2021diffusion}~\footnote{\url{https://github.com/openai/guided-diffusion}} for ImageNet. For FFHQ, we use the model provided by \citet{choi2021ilvr}~\footnote{\url{https://github.com/jychoi118/ilvr_adm}}.

\paragraph{DDRM}~\citep{kawar2022denoising}
We use the official implementation~\footnote{\url{https://github.com/bahjat-kawar/ddrm}}. The time steps are set to $T=20$, with $\eta=0.85$ and $\eta_{b}=1.0$ as the hyperparameters. For ImageNet, we use the same pre-trained model as DPS. Although there is no official implementation using a pre-trained model trained on FFHQ, both DDRM and \citet{choi2021ilvr} are based on the implementation of \citet{dhariwal2021diffusion}. Therefore, in our experiments, DDRM uses the same pre-trained model as DPS.

\paragraph{PSLD}~\citep{rout2023solving}
We use the official implementation~\footnote{\url{https://github.com/LituRout/PSLD}}. For the pre-trained model, we employ stable-diffusion v-1.5~\citep{rombach2022high}~\footnote{\url{https://github.com/CompVis/stable-diffusion}}. As this model handles 512$\times$512 pixel images, we first upscale the ground truth image to 512$\times$512. We then apply the forward operator to the upscaled image and use the result as observed data for our method. Finally, we downsample the output to 256$\times$256. For hyperparameters, we use $\eta=1.0$ and $\gamma=0.1$.

\paragraph{ReSample}~\citep{song2024solving}
We use the official implementation~\footnote{\url{https://github.com/soominkwon/resample}}. For pre-trained models, we employ two models from the latent diffusion models repository~\footnote{\url{https://github.com/CompVis/latent-diffusion}}: LDM-VQ-4 trained on FFHQ, and LDM-VQ-8 trained on ImageNet with class conditioning. We use $T=500$ DDIM steps with $\tau$ set to $10^{-4}$. The maximum number of optimization steps is set to $500$. The variance hyperparameter $\gamma$ is set to $40$. For the ImageNet experiments, we input the class labels of the ground truth data to the model.

\subsection{GPU memory usage and computational speed}
We analyze the GPU memory consumption and computational speed of our proposed method, G2D2, in comparison with other methods. Table~\ref{tab:memory_speed} presents an overview of these metrics for various methods. The measurements are conducted using a single NVIDIA A6000 GPU for the Gaussian deblurring task on ImageNet. G2D2 has the lowest memory usage among all methods and the fastest computational speed among gradient-based methods.

\begin{table}[htbp]
\centering
\caption{Comparison of GPU Memory Usage and Computational Speed}
\label{tab:memory_speed}
\begin{tabular}{lcc}
\hline
Method & GPU Memory Usage (GiB) & Wall-Clock time (s) \\
\hline
G2D2 (Proposed) & 4.7 & 194 \\
DPS & 10.7 & 277 \\
DDRM & 5.8 & 4 \\
PSLD & 20.9 & 738 \\
ReSample & 7.1 & 555 \\
\hline
\end{tabular}
\end{table}

\subsection{Impact of the Forget Coefficient}
\label{appendix:ablation_forget_coef}
Figure~\ref{fig:ablation_forget_coef} shows the reduction in the loss function and the final results for the Gaussian deblurring task on ImageNet when the forget coefficient is set to 0.3 and 1.0. The case with a forget coefficient of 1.0 corresponds to not using the optimization results from the previous step at all. Introducing the forget coefficient allows for a faster reduction in the loss function and achieves higher performance with the same computational resources.

\begin{figure}[thb]
    \centering
    \includegraphics[width=\linewidth]{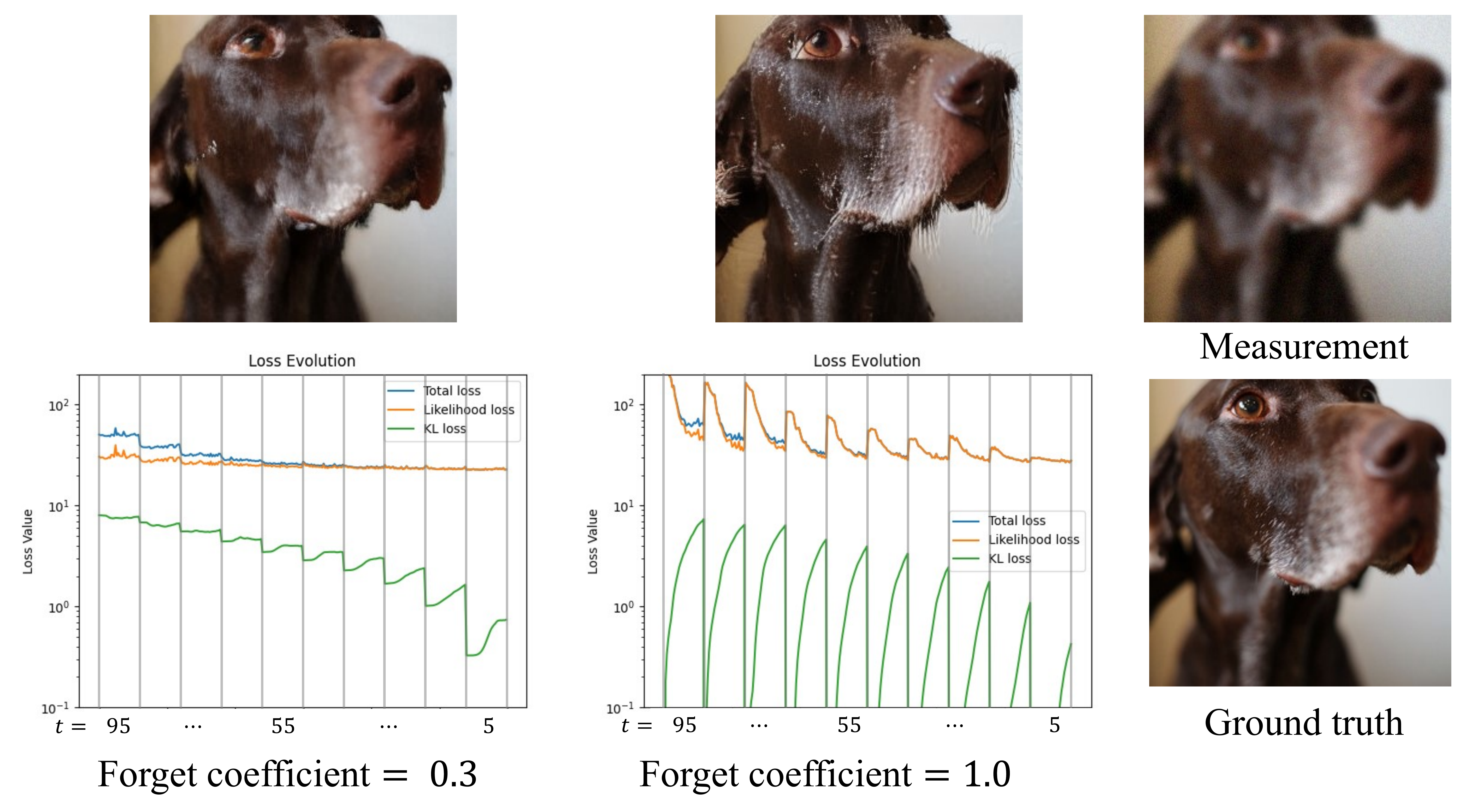}
    \caption{Reduction in the loss function and final results for the Gaussian deblurring task on ImageNet with forget coefficients of 0.3 and 1.0. The forget coefficient of 1.0 corresponds to not using the optimization results from the previous step.}
    \label{fig:ablation_forget_coef}
\end{figure}

\subsection{Impact of Text Conditioning on the Prior Model}
To examine the necessity of text conditioning, we investigate the effect of the presence or absence of prompts given to VQ-Diffusion on performance. Table~\ref{tab:text_conditioning_comparison} shows the performance for each setting. ``Not Used'' for text conditioning indicates that classifier-free guidance in the prior model is set to 1.0 (equivalent to unconditional sampling). The prompts we provide to VQ-Diffusion in our method are ``a photo of [Class Name]'' for ImageNet experiments and ``a high-quality headshot of a person'' for FFHQ experiments. It should be noted that these prompts are extremely general and do not describe specific details of the images. 

From these results, we can confirm that prompt conditioning contributes to a certain level of performance improvement on ImageNet, while it does not significantly affect performance on FFHQ. This suggests that the pre-trained VQ-Diffusion model may not have been extensively trained on human face images, or that our chosen prompts for FFHQ may not be optimal.

Additionally, Figure~\ref{fig:ablation_w_wo_prompt} shows a qualitative comparison for the Gaussian deblurring task on the ImageNet dataset. When prompts are not used, smoother results are obtained throughout the intermediate steps when compared, demonstrating that the presence of text conditioning leads to final results that capture more fine-grained details.

\begin{figure}[thb]
    \centering
    \includegraphics[width=\linewidth]{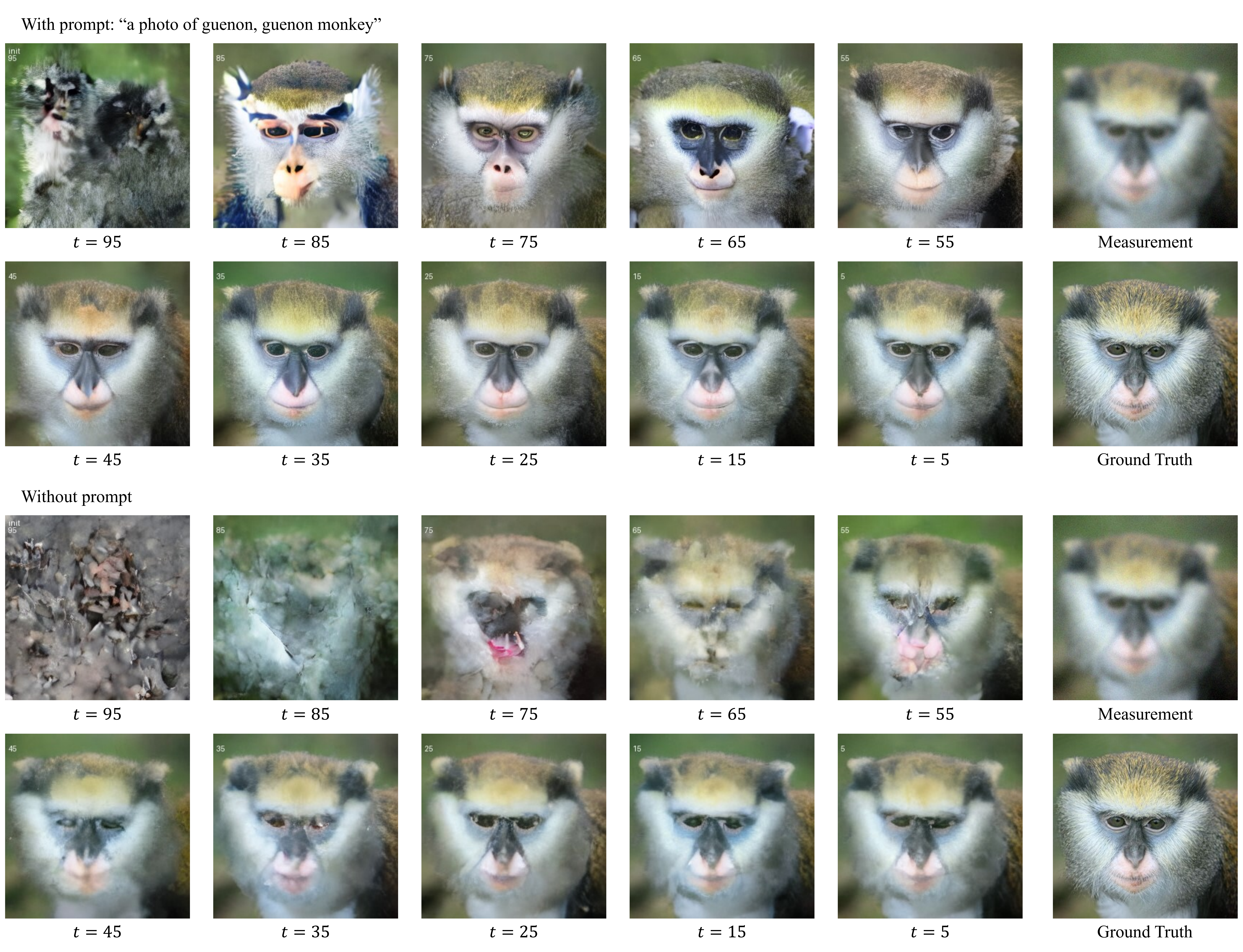}
    \caption{Images sampled from the prior model $\tilde{p}_{\theta}(\mathbf{z}_{0}|\mathbf{z}_{t})$ using intermediate $\mathbf{z}_{t}$ during the process of G2D2 in image inverse problem solving. The top two rows show intermediate samples using the prompt ``a photo of guenon, guenon monkey,'' while the bottom two rows show results without using any prompt. Each column represents a different time step in the G2D2 algorithm, ranging from $t=95$ (earliest stage) to $t=5$ (final stage). The rightmost column displays the measurement (degraded input image) and ground truth (original image).}
    \label{fig:ablation_w_wo_prompt}
\end{figure}

\begin{table}[htb]
\centering
\caption{Performance comparison with and without text conditioning}
\begin{tabular}{llcccccc}
\hline
\multirow{2}{*}{Dataset} & \multirow{2}{*}{Text conditioning} & \multicolumn{2}{c}{SR ($\times 4$)} & & \multicolumn{2}{c}{Gaussian Deblurring} \\
\cline{3-4}\cline{6-7}
 & & LPIPS$\downarrow$ & PSNR$\uparrow$ & & LPIPS$\downarrow$ & PSNR$\uparrow$ \\
\hline
\multirow{2}{*}{ImageNet} & Not Used & 0.357 & \textbf{23.55} & & 0.385 & \textbf{23.24} \\
 & Used & \textbf{0.351} & 23.37 & & \textbf{0.370} & 23.14 \\
\hline
\multirow{2}{*}{FFHQ} & Not Used & \textbf{0.258} & \textbf{27.48} & & 0.274 & 26.99 \\
 & Used & 0.259 & 27.44 & & \textbf{0.273} & \textbf{27.00} \\
\hline
\end{tabular}
\label{tab:text_conditioning_comparison}
\end{table}

\tmlrrebuttal{
\subsection{Impact of prior selection for G2D2}
G2D2 can also use other prior models when solving inverse problems. In this paper, we primarily use VQ-Diffusion trained on the ITHQ dataset (hereafter \textbf{VQDiff-ITHQ}) as a prior model, but to investigate how different prior models affect inverse problem solving performance, we conduct experiments using another model, CLIP-VQDiffusion~\citep{han2024clip}~\footnote{\url{https://github.com/INFINIQ-AI1/CLIPVQDiffusion}}, trained on the FFHQ dataset (hereafter \textbf{CVQDiff-FFHQ}). As in the main text, we evaluated on Gaussian deblurring tasks and super-resolution tasks (with measurement noise $\sigma = 0.05$). For evaluation, we used 100 images from the FFHQ validation subset and measured PSNR, LPIPS.

\begin{table}[htb]
\centering
\caption{Performance comparison between different prior models on FFHQ dataset}
\begin{tabular}{llcccc}
\hline
\multirow{2}{*}{Dataset} & \multirow{2}{*}{Prior Model} & \multicolumn{2}{c}{SR ($\times 4$)} & \multicolumn{2}{c}{Gaussian Deblurring} \\
\cline{3-4}\cline{5-6}
 & & PSNR$\uparrow$ & LPIPS$\downarrow$ & PSNR$\uparrow$ & LPIPS$\downarrow$ \\
\hline
\multirow{2}{*}{FFHQ} & \textbf{VQDiff-ITHQ}~\citep{tang2022improved} & \textbf{27.44} & \textbf{0.259} & \textbf{27.00} & \textbf{0.273} \\
 & \textbf{CVQDiff-FFHQ}~\citep{han2024clip} & 25.32 & 0.283 & 24.41 & 0.289 \\
\hline
\end{tabular}
\label{tab:prior_model_comparison}
\end{table}

\begin{figure}[thb]
    \centering
    \includegraphics[width=\linewidth]{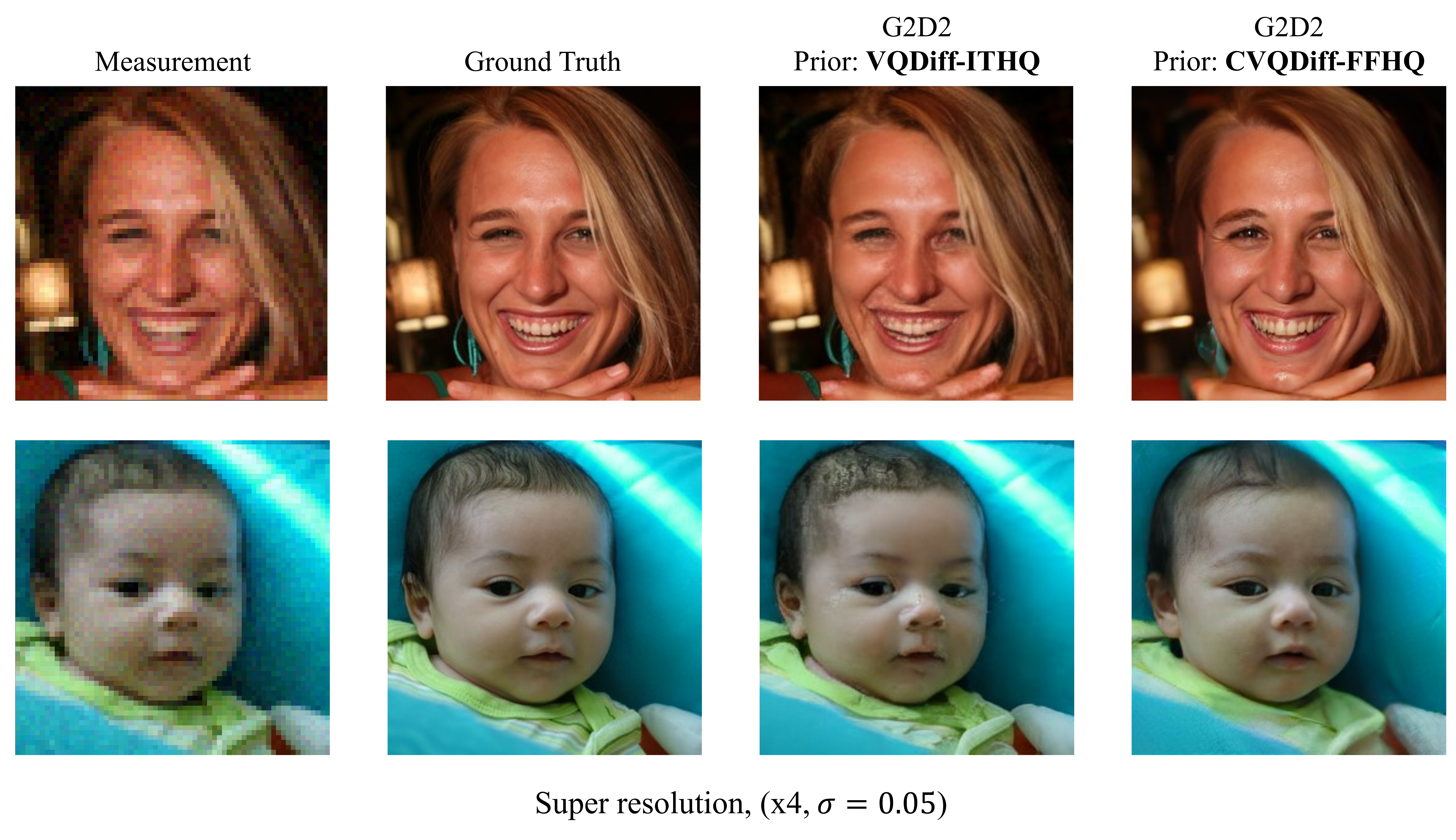}
    \caption{Qualitative comparison of super-resolution results ($\times$4, $\sigma = 0.05$) on FFHQ test images. From left to right: low-resolution measurement, ground truth, G2D2 with \textbf{VQDiff-ITHQ} prior, and G2D2 with \textbf{CVQDiff-FFHQ} prior. While both prior models produce visually pleasing reconstructions, \textbf{VQDiff-ITHQ} prior tends to generate results closer to the ground truth, whereas \textbf{CVQDiff-FFHQ} prior produces fewer artifacts but with slightly less fidelity to the ground truth images.}
    \label{fig:comp_prior}
\end{figure}

Table~\ref{tab:prior_model_comparison} shows the quantitative results, and Figure~\ref{fig:comp_prior} presents examples of reconstructed samples. Interestingly, although \textbf{CVQDiff-FFHQ} prior is trained on the FFHQ dataset, G2D2 with \textbf{VQDiff-ITHQ} prior performs slightly better. When observing qualitative results, G2D2 with \textbf{CVQDiff-FFHQ} prior produces fewer artifacts but seems to deviate slightly from the ground truth images. Since \textbf{CVQDiff-FFHQ} prior is trained on FFHQ, it likely has an advantage in prior modeling (corresponding to the first term of the optimization target in Algorithm~\ref{alg:proposed}, L10). However, the optimization of the likelihood in the measurement equation (the second term) involves decoder calculations, and since both models use different decoders, this difference may be influencing the results.
}

\subsection{Failure modes of G2D2}
We conduct an analysis of failure modes. Figure~\ref{fig:failure_cases} shows the results of G2D2 and the images during inference for the Gaussian deblurring task on FFHQ. When the ground truth image is a relatively young (child's) face, the generated face images appear to be drawn towards a distribution of more adult faces. This is likely due to the use of the prompt ``a high-quality headshot of a person''. As a result, there is a consistent bias towards adult face images throughout the generation process, leading to artifacts in the final image. In the absence of a prompt, the intermediate generated images are not influenced by any specific textual guidance. As a result, the final image tends to have fewer artifacts.

While the star-shaped noise process can correct early errors, if errors persist until the later stages, it becomes more difficult to correct them from that point onwards. In other words, when there is a mismatch between the distribution conditioned by the prompt and the target image, it becomes challenging for G2D2 to handle it effectively.

To improve these issues, techniques such as simultaneous optimization of prompts may be necessary. Prompt-tuning techniques, as proposed in reference~\citep{chung2024prompttuning}, could be effective in addressing these challenges.

\begin{figure}[thb]
    \centering
    \includegraphics[width=\linewidth]{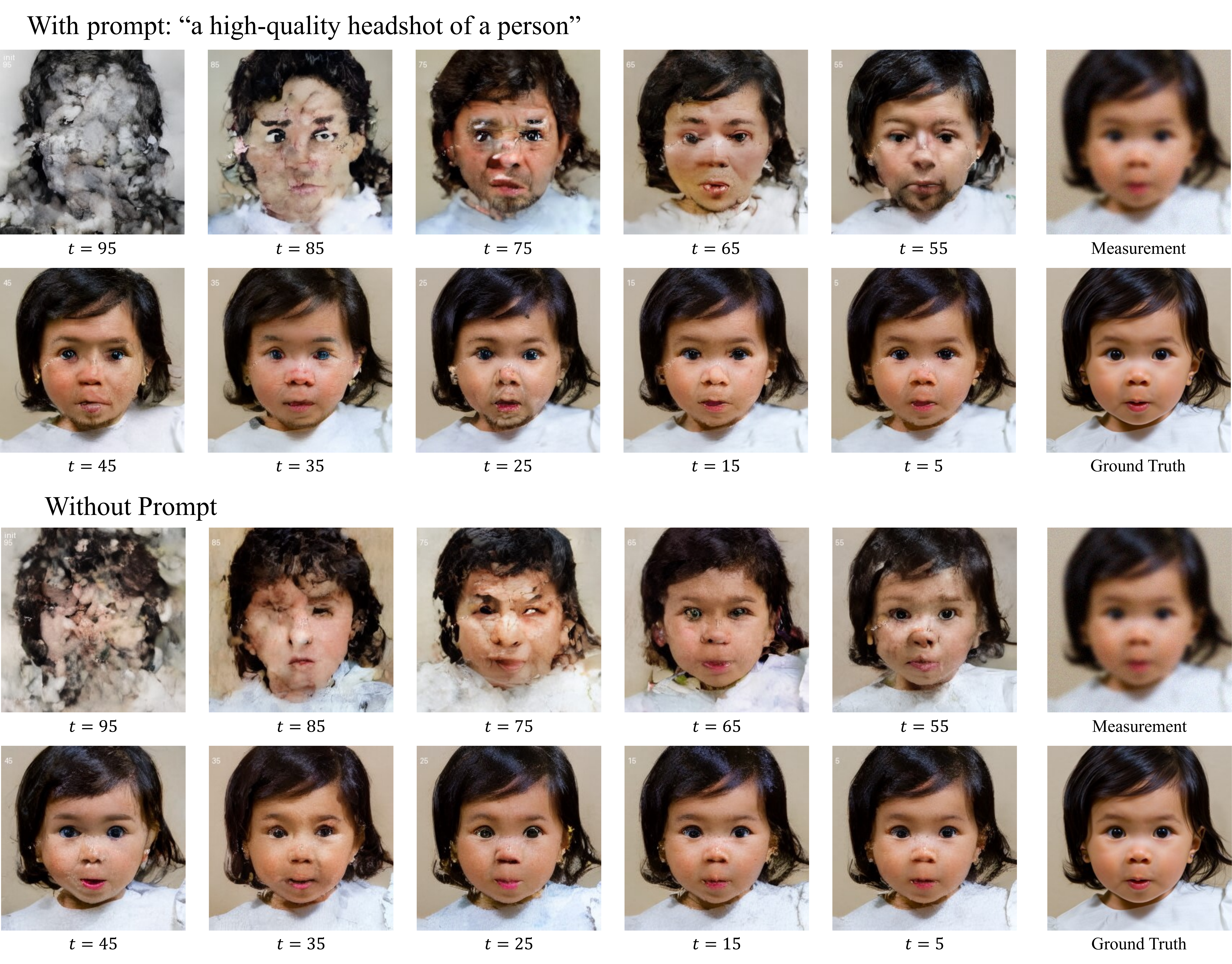}
    \caption{Failure modes of G2D2: Gaussian Deblurring results on the FFHQ dataset. Due to the mismatch between the prompt and the target image, errors remain uncorrected throughout the process, resulting in artifacts in the estimated image.}
    \label{fig:failure_cases}
\end{figure}

\tmlrrebuttal{
\subsection{Additional quantitative results of G2D2 and comparison methods.} Tables~\ref{tab:quantitative_evaluation_imagenet_sr}-\ref{tab:quantitative_evaluation_ffhq_deblur} shows the evaluation results with SSIM, FID, and KID added as evaluation metrics.

Tables~\ref{tab:quantitative_evaluation_imagenet_sr}--\ref{tab:quantitative_evaluation_ffhq_deblur} show the evaluation results with Structural Similarity Index Measure (SSIM)~\citep{wang2004image}, Fréchet Inception Distance (FID)~\citep{heusel2017gans}, and Kernel Inception Distance (KID)~\citep{binkowski2018demystifying} added as evaluation metrics. The evaluation of FID and KID in this study is conducted under the following conditions: We use the \texttt{clean-fid}\footnote{\url{https://github.com/GaParmar/clean-fid}} library for our evaluation. Following the default settings of the \texttt{clean-fid} library, we use 2048-dimensional features extracted from the \texttt{pool\_3} layer of the Inception V3 model. We follow the internal processing of the \texttt{clean-fid} library for image preprocessing, which typically includes resizing input images to 299$\times$299 pixels and appropriate normalization. For reference statistics, we use pre-computed statistics of the \texttt{trainval} split of the FFHQ dataset (resolution 256$\times$256, based on 50,000 images) provided by the \texttt{clean-fid} library (specified as \texttt{dataset\_name=``ffhq'', dataset\_res=256, dataset\_split=``trainval''}). For the ImageNet dataset (256$\times$256 resolution), it should be noted that the reference set for FID calculation on ImageNet consists of 1,000 images, which is smaller than the recommended sample size for typical FID evaluation (e.g., 10,000-50,000 images). Therefore, FID scores under these conditions should be interpreted with consideration of the limitations regarding the reference set size.

\begin{table}[thb]
\centering
\caption{Quantitative evaluation on ImageNet 256$\times$256. Performance comparison of different methods on Super-Resolution ($\times$4) task. Values show mean over 1000 images.}
\label{tab:quantitative_evaluation_imagenet_sr}
\begin{tabular}{llccccc}
\hline
Prior Type & Method & PSNR($\uparrow$) & LPIPS($\downarrow$) & SSIM($\uparrow$) & FID($\downarrow$) & KID($\downarrow$)($\times 10^{-2}$) \\
\hline
\multirow{2}{*}{Pixel-domain} & DPS & 22.67 & 0.362 & 0.618 & 45.97 & 0.41 \\
 & DDRM & 24.27 & 0.351 & 0.669 & 52.26 & 1.16 \\
\hline
\multirow{2}{*}{LDM} & PSLD & 24.02 & 0.331 & 0.675 & 47.88 & 0.83 \\
 & ReSample & 23.31 & 0.373 & 0.616 & 68.77 & 1.96 \\
\hline
\multirow{2}{*}{Discrete} & G2D2 (proposed) & 23.82 & 0.340 & 0.638 & 49.97 & 0.86 \\
 & G2D2 w/ Markov noise process & 22.01 & 0.442 & 0.557 & 78.43 & 2.40 \\
\hline
\end{tabular}
\end{table}

\begin{table}[thb]
\centering
\caption{Quantitative evaluation on ImageNet 256$\times$256. Performance comparison of different methods on Gaussian Deblurring task. Values show mean over 1000 images.}
\label{tab:quantitative_evaluation_imagenet_deblur}
\begin{tabular}{llccccc}
\hline
Prior Type & Method & PSNR($\uparrow$) & LPIPS($\downarrow$) & SSIM($\uparrow$) & FID($\downarrow$) & KID($\downarrow$)($\times 10^{-2}$) \\
\hline
\multirow{2}{*}{Pixel-domain} & DPS & 19.49 & 0.432 & 0.473 & 62.04 & 0.86 \\
 & DDRM & 27.61 & 0.250 & 0.786 & 34.05 & 0.40 \\
\hline
\multirow{2}{*}{LDM} & PSLD & 23.95 & 0.359 & 0.659 & 50.96 & 0.96 \\
 & ReSample & 23.07 & 0.425 & 0.586 & 82.57 & 2.44 \\
\hline
\multirow{2}{*}{Discrete} & G2D2 (proposed) & 23.37 & 0.367 & 0.604 & 61.60 & 1.39 \\
 & G2D2 w/ Markov noise process & 22.36 & 0.424 & 0.551 & 79.29 & 2.43 \\
\hline
\end{tabular}
\end{table}

\begin{table}[thb]
\centering
\caption{Quantitative evaluation on FFHQ 256$\times$256. Performance comparison of different methods on Super-Resolution ($\times$4) task. Values show the mean over 1000 images.}
\label{tab:quantitative_evaluation_ffhq_sr}
\begin{tabular}{llccccc}
\hline
Prior Type & Method & PSNR($\uparrow$) & LPIPS($\downarrow$) & SSIM($\uparrow$) & FID($\downarrow$) & KID($\downarrow$)($\times 10^{-2}$) \\
\hline
\multirow{2}{*}{Pixel-domain} & DPS & 26.07 & 0.238 & 0.756 & 26.85 & 0.67 \\
 & DDRM & 28.09 & 0.252 & 0.804 & 46.19 & 3.00 \\
\hline
\multirow{2}{*}{LDM} & PSLD & 27.12 & 0.282 & 0.757 & 37.37 & 2.07 \\
 & ReSample & 23.07 & 0.508 & 0.445 & 85.86 & 7.05 \\
\hline
\multirow{2}{*}{Discrete} & G2D2 (proposed) & 27.29 & 0.265 & 0.763 & 45.21 & 3.08 \\
 & G2D2 w/ Markov noise process & 25.15 & 0.369 & 0.699 & 62.96 & 4.47 \\
\hline
\end{tabular}
\end{table}

\begin{table}[thb]
\centering
\caption{Quantitative evaluation on FFHQ 256$\times$256. Performance comparison of different methods on Gaussian Deblurring task. Values show the mean over 1000 images.}
\label{tab:quantitative_evaluation_ffhq_deblur}
\begin{tabular}{llccccc}
\hline
Prior Type & Method & PSNR($\uparrow$) & LPIPS($\downarrow$) & SSIM($\uparrow$) & FID($\downarrow$) & KID($\downarrow$)($\times 10^{-2}$) \\
\hline
\multirow{2}{*}{Pixel-domain} & DPS & 25.47 & 0.234 & 0.731 & 24.10 & 0.40 \\
 & DDRM & 30.89 & 0.209 & 0.863 & 42.51 & 2.51 \\
\hline
\multirow{2}{*}{LDM} & PSLD & 26.96 & 0.307 & 0.750 & 38.84 & 1.99 \\
 & ReSample & 25.91 & 0.336 & 0.642 & 45.26 & 2.85 \\
\hline
\multirow{2}{*}{Discrete} & G2D2 (proposed) & 26.91 & 0.280 & 0.743 & 47.70 & 3.46 \\
 & G2D2 w/ Markov noise process & 25.96 & 0.340 & 0.708 & 57.73 & 4.29 \\
\hline
\end{tabular}
\end{table}
}

\subsection{Additional qualitative results of G2D2 and comparison methods.}
We present additional qualitative results of G2D2 and comparison methods. Figures \ref{fig:qualitative_imagenet_sr} through \ref{fig:qualiative_ffhq_gb} showcase the results for super-resolution and Gaussian deblurring tasks on ImageNet and FFHQ datasets.
\begin{figure}[thb]
    \centering
    \includegraphics[width=\linewidth]{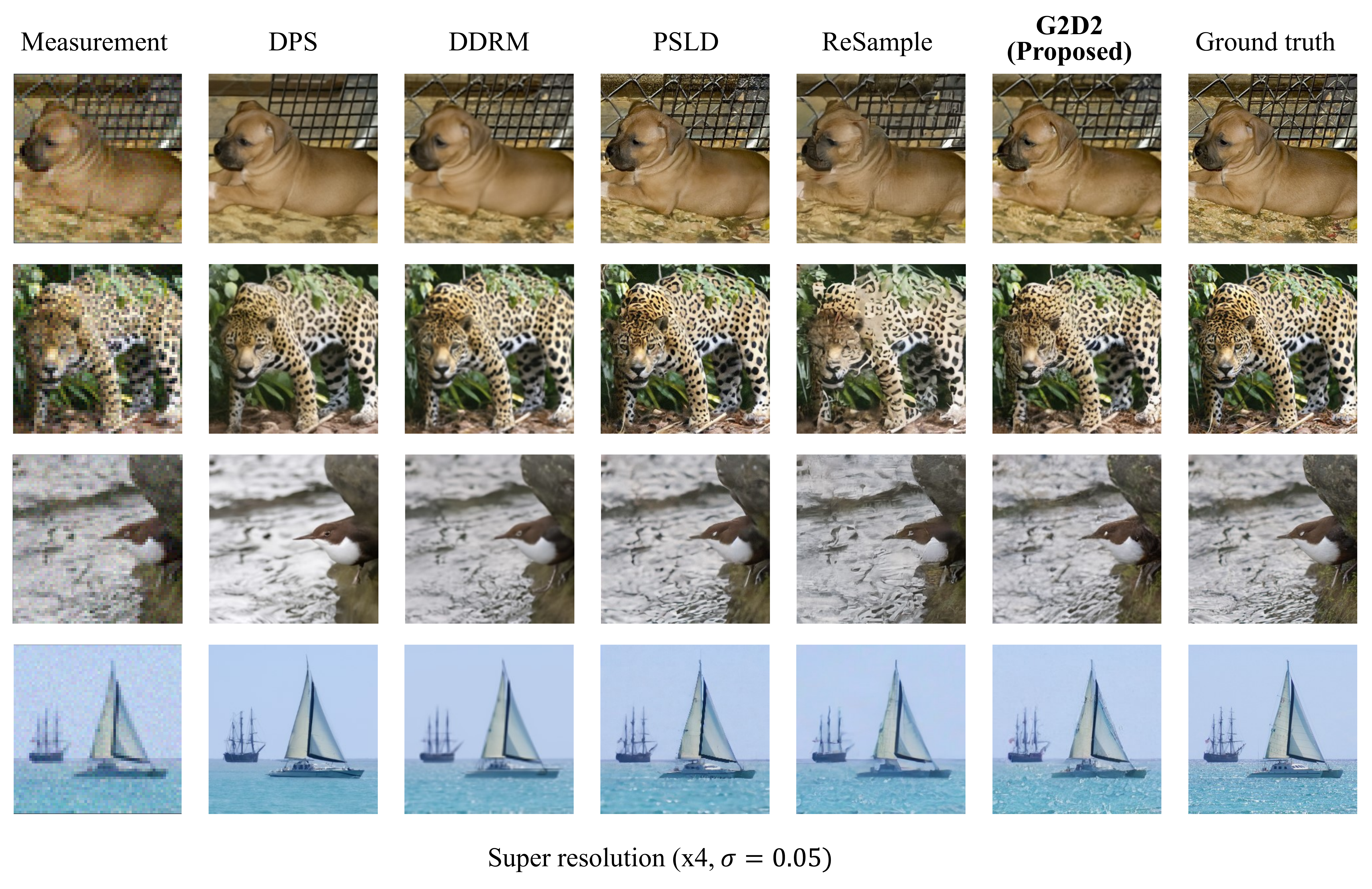}
    \caption{Qualitative results of G2D2 and comparison methods.}
    \label{fig:qualitative_imagenet_sr}
\end{figure}

\begin{figure}[thb]
    \centering
    \includegraphics[width=\linewidth]{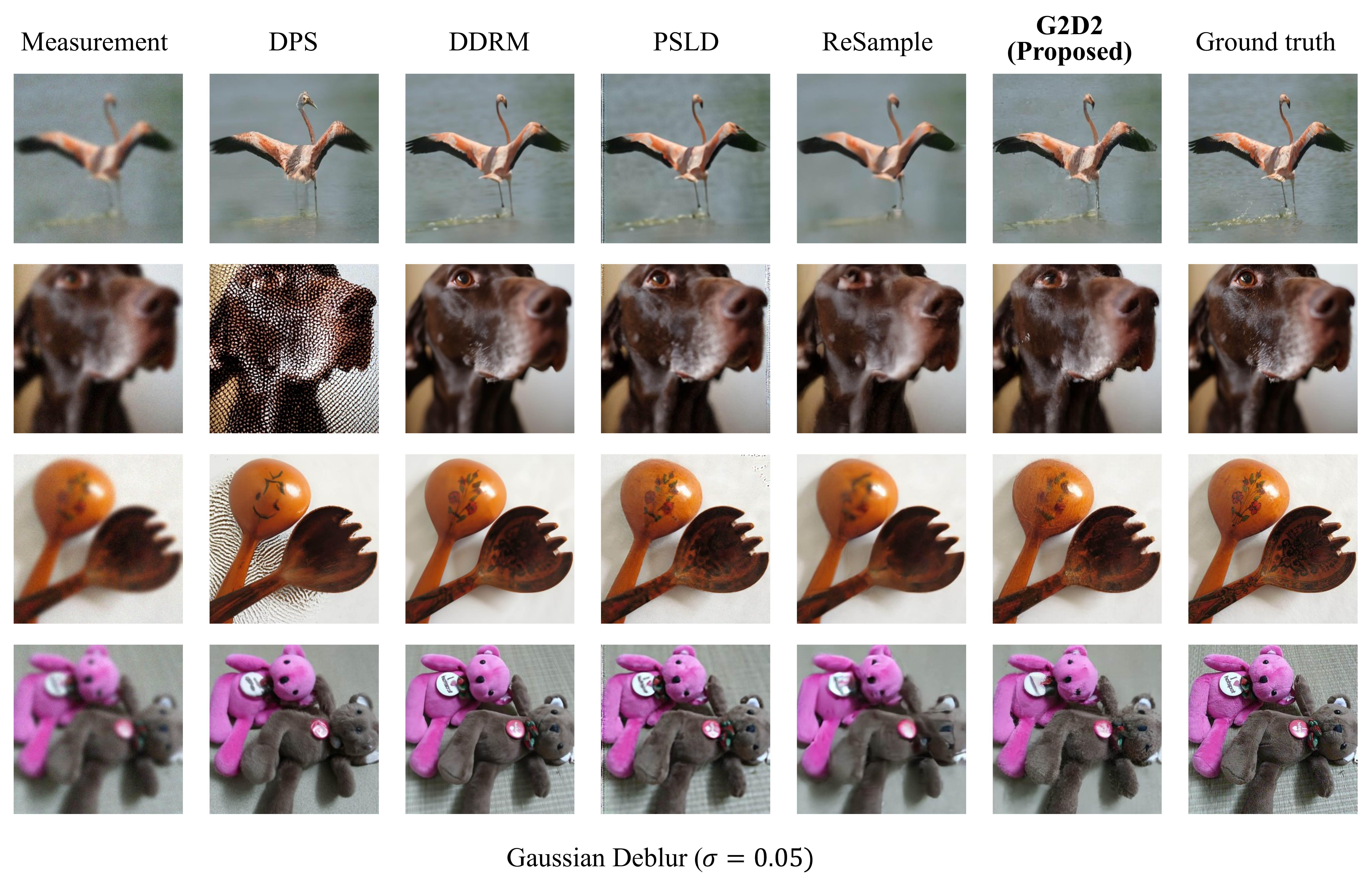}
    \caption{Qualitative results of G2D2 and comparison methods.}
    \label{fig:qualitative_imagenet_gb}
\end{figure}

\begin{figure}[thb]
    \centering
    \includegraphics[width=\linewidth]{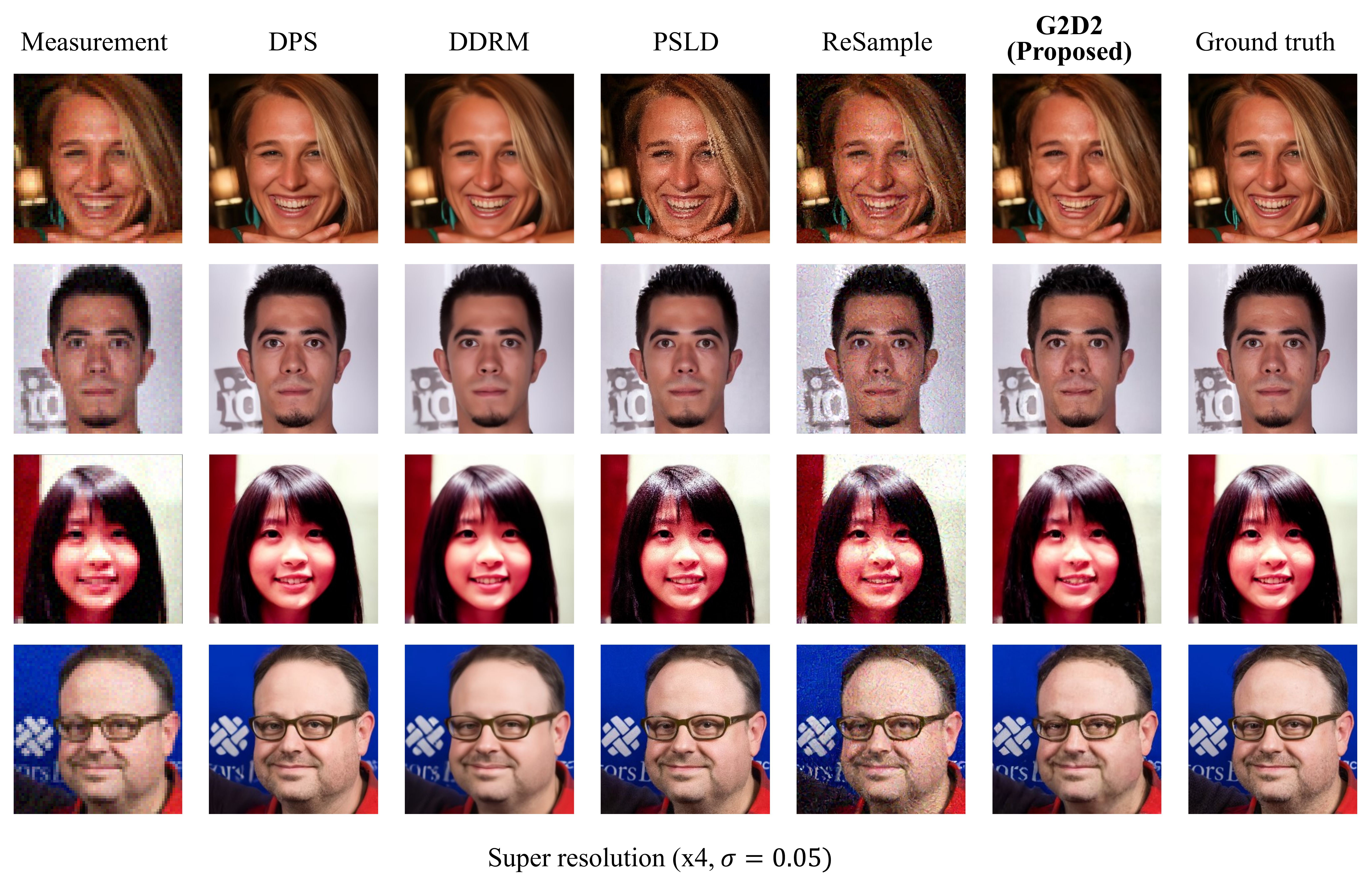}
    \caption{Qualitative results of G2D2 and comparison methods.}
    \label{fig:qualitative_ffhq_sr}
\end{figure}

\begin{figure}[thb]
    \centering
    \includegraphics[width=\linewidth]{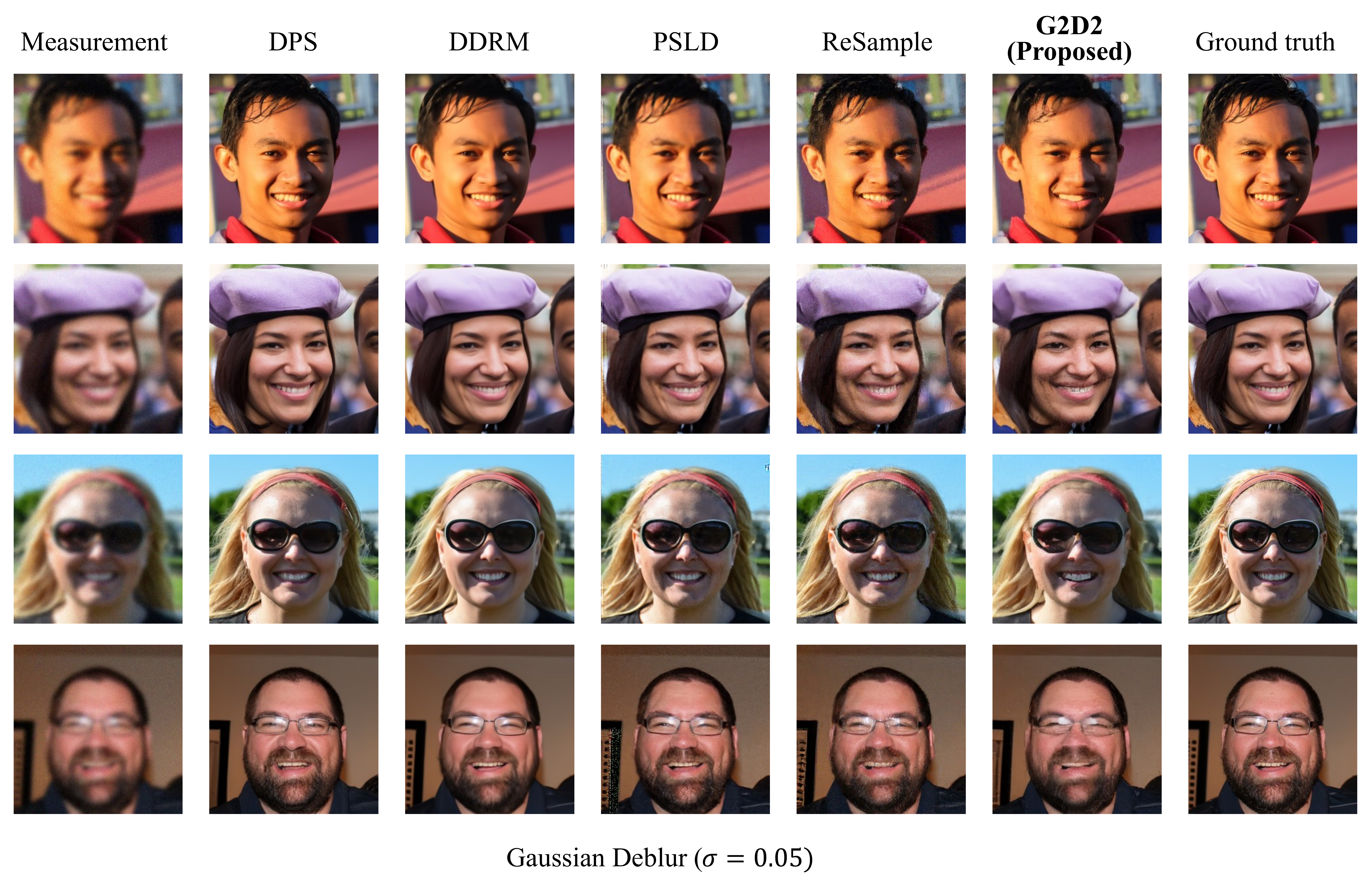}
    \caption{Qualitative results of G2D2 and comparison methods.}
    \label{fig:qualiative_ffhq_gb}
\end{figure}

\subsection{Additional qualitative results of G2D2 with Markov noise process}
To compare G2D2 and G2D2 with Markov noise process, we present their respective qualitative results in Figures~\ref{fig:comparison_star_markov_sr} and \ref{fig:comparison_star_markov_gb}. The latter approach does not include re-masking operations in its sampling process, which means that once a token becomes unmasked, it cannot be modified in subsequent iterations. The unnatural artifacts observed in the resulting images are likely attributable to this limitation. This observation underscores the validity of adopting the star-shaped noise process in our proposed method. 

\begin{figure}[thb]
    \centering
    \includegraphics[width=\linewidth]{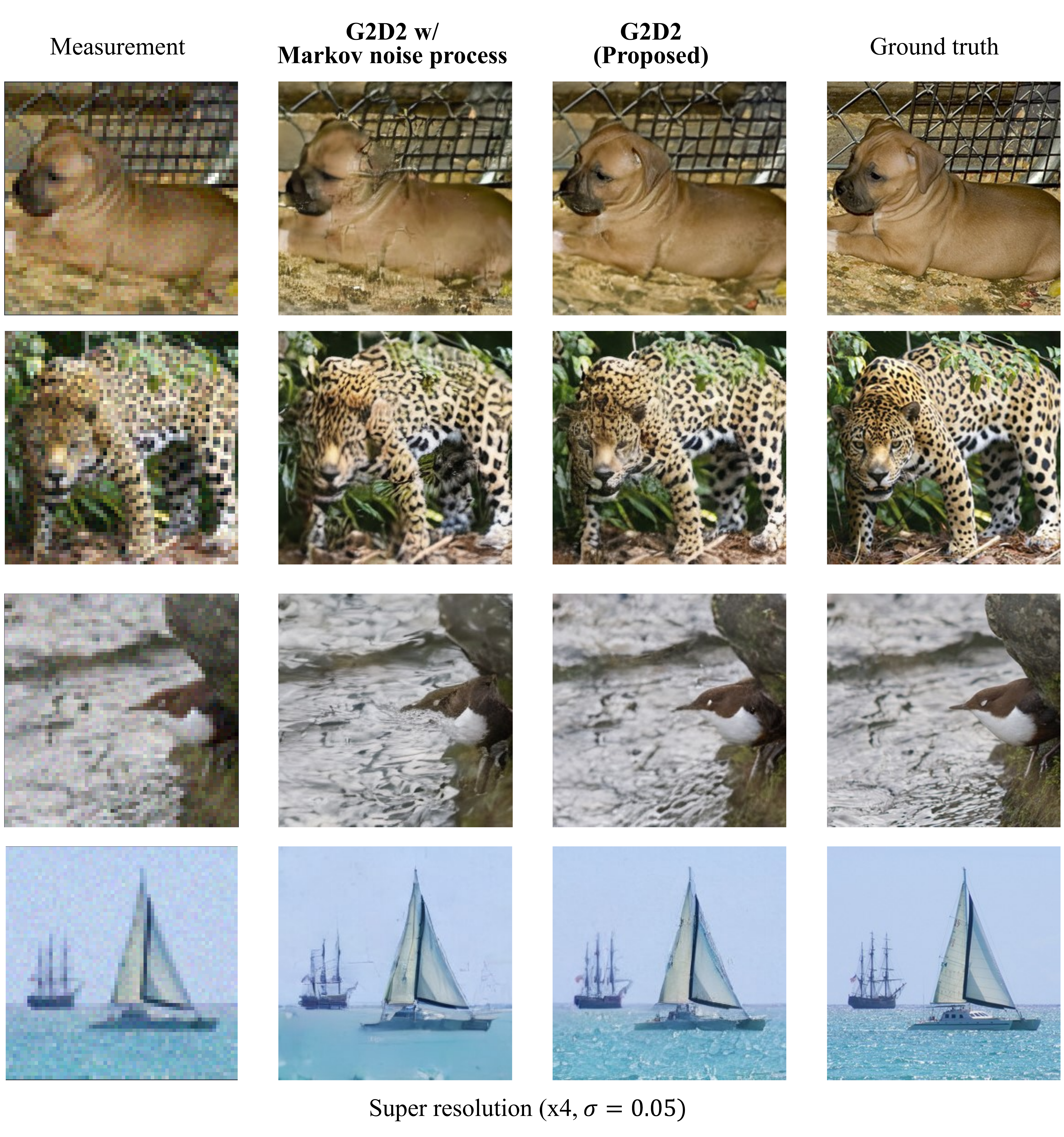}
    \caption{Qualitative results comparing G2D2 and G2D2 with Markov noise process (Super-resolution task).}
    \label{fig:comparison_star_markov_sr}
\end{figure}

\begin{figure}[thb]
    \centering
    \includegraphics[width=\linewidth]{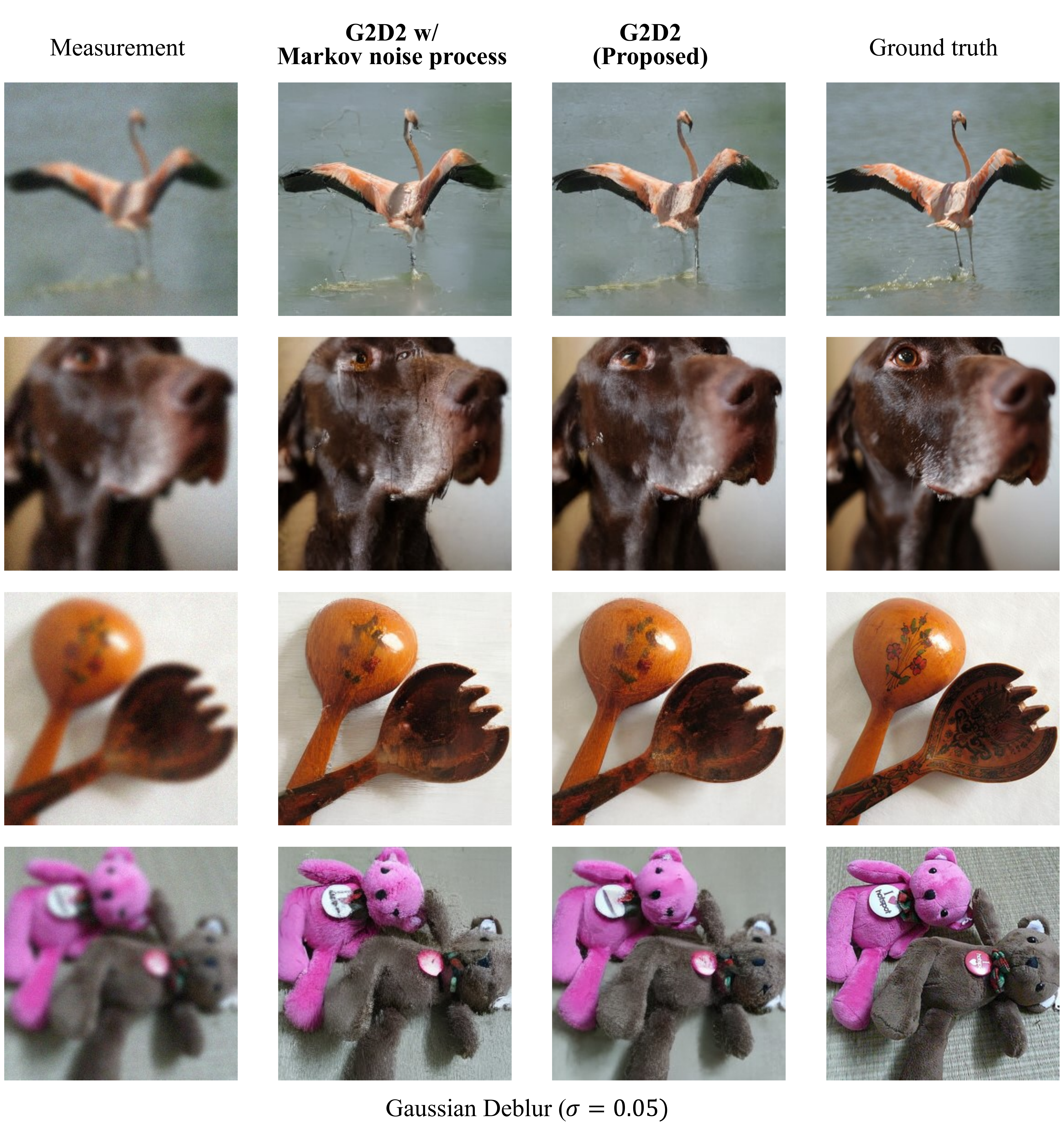}
    \caption{Qualitative results comparing G2D2 and G2D2 with Markov noise process (Gaussian Deblurring task).}
    \label{fig:comparison_star_markov_gb}
\end{figure}

\subsection{Inverse problems on motion data}
\label{append:invese_motion}
We develop G2D2 based on the official implementation of MMM~\citep{pinyoanuntapong2024mmm}~\footnote{\url{https://github.com/exitudio/MMM}}. This method learns a masked generative model on the discrete latent space obtained by a motion tokenizer trained on the VQ-VAE framework~\citep{van2017neural}. G2D2 uses the provided pre-trained model as a prior distribution.

We conduct experiments on the path following task~\citep{song2023loss, uchida2024mola}. The objective is to generate motion data $\mathbf{m}_{0}\in\mathbb{R}^{d_{\mathbf{m}}\times L}$ that follows a given path $\mathbf{y}_{\text{path}}\in\mathbb{R}^{3\times L}$. Here, $\mathbf{y}_{\text{path}}$ represents the coordinates of the hip joint at each time frame, $L$ denotes the number of frames in the motion data, and $d_{\mathbf{m}}$ is the dimensionality of each motion data point.

The likelihood loss used in the optimization process of G2D2 measures how closely the generated motion follows the target path. It is defined as
\begin{align}
    \log q(\mathbf{y}_{\text{path}}|\mathbf{m}_{0}) = \sum_{l=1}^{L}\|\mathbf{y}_{\text{path}, l} - \mathbf{A}_{\text{path}}\mathbf{m}_{0, l} \|_2,
\end{align}
where $\mathbf{A}_{\text{path}}$ is a linear operator that extracts the path across the frames.

We conduct experiments with a total of $T=25$ time steps. For hyperparameters, we set the number of iterations for optimization to 20 and the Gumbel-Softmax temperature to 1.0. The forget coefficient is set to $0.7$. We adopt the dynamic learning rate scheduling described in the Appendix~\ref{appendix:implement_image}. The base Adam learning rate $l_{\text{Adam, base}}$ is set to 0.3, and the KL divergence weight $\eta_{\text{KL}}$ is set to 0.05. Additionally, we set $\lambda_{\text{KL, schedule}}$ and $\lambda_{\text{lr, schedule}}$ to 0.0 and 1.0, respectively.

\end{document}